\ifdefined\pdftexversion\pdfoutput=1\fi
\documentclass[11pt]{article}

\usepackage[utf8]{inputenc}
\usepackage[T1]{fontenc}
\usepackage{booktabs}
\usepackage{wrapfig}
\usepackage{longtable}
\usepackage{xurl}
\usepackage{amsfonts}
\usepackage{nicefrac}
\usepackage{microtype}
\usepackage[table]{xcolor}
\PassOptionsToPackage{hyphens}{url}

\usepackage{smile}

\usepackage{tikz}
\usetikzlibrary{matrix, positioning}
\usepackage{pgfplots}
\pgfplotsset{compat=1.18}
\usepackage{wrapfig}
\usepackage{needspace}
\usepackage{xcolor}
\usepackage{cancel}
\usepackage{acronym}

\definecolor{mypink}{HTML}{FF1493}
\definecolor{darkred}{HTML}{BB253B}
\definecolor{lightgreen}{HTML}{90C0BF}

\definecolor{darkcyan}{HTML}{0081B9}

\newmdenv[
  backgroundcolor=gray!10,
  skipabove=1em,
  skipbelow=1em,
  leftline=false,
  topline=false,
  bottomline=false,
  rightline=false,
  linecolor=gray!88,
  linewidth=4pt
]{githubquote}

\usepackage[most,skins,theorems]{tcolorbox}
\tcbset{
  aibox/.style={
    width=\linewidth,
    top=10pt,
    bottom=4pt,
    colback=blue!6!white,
    colframe=black,
    colbacktitle=black,
    enhanced,
    center,
    attach boxed title to top left={yshift=-0.1in,xshift=0.15in},
    boxed title style={boxrule=0pt,colframe=white,},
  }
}

\definecolor{rliableolive}{HTML}{BBCC33}
\definecolor{rliableblue}{HTML}{77AADD}
\definecolor{rliablered}{HTML}{EE8866}
\definecolor{LightCyan}{rgb}{0.88,1,1}
\definecolor{darkblue}{HTML}{2878D9}
\definecolor{navyblue}{HTML}{0000FF}

\newtcolorbox{AIbox}[2][]{aibox,title=#2,colback=rliableblue!10!white,#1}

\providecolor{yaleblue}{HTML}{00356B}
\colorlet{findingframe}{yaleblue}
\definecolor{findingaccent}{HTML}{D97757}
\colorlet{findingbg}{yaleblue!5!white}

\newtcolorbox[%
  auto counter,
  number within=section,
  crefname={finding}{findings},
  Crefname={Finding}{Findings},
]{finding}[1][]{%
  enhanced,
  breakable,
  before skip=10pt,
  after skip=10pt,
  colback=findingbg,
  colframe=findingframe,
  boxrule=0pt,
  arc=3pt,
  outer arc=3pt,
  left=12pt, right=12pt, top=8pt, bottom=10pt,
  before upper={%
    \noindent\textbf{Finding~\thetcbcounter\ifblank{#1}{}{. #1}.}\enspace
  },
}

\usepackage{subcaption}

\usepackage{titlesec}
\usepackage{xcolor}
\definecolor{sectionblue}{HTML}{00356B}

\titleformat{\section}
  {\Large\bfseries\color{sectionblue}}{\thesection}{1em}{}
\titleformat{\subsection}
  {\large\bfseries\color{sectionblue}}{\thesubsection}{1em}{}
\titleformat{\subsubsection}
  {\normalsize\bfseries\color{sectionblue}}{\thesubsubsection}{1em}{}
\titleformat{\paragraph}[runin]
  {\normalfont\normalsize\bfseries\color{black}}{}{0pt}{}

\usepackage{paperstyle}

\acrodef{BPTT}{backpropagation through time}

\hypersetup{hypertexnames=false}

\graphicspath{{figs/}}
\definecolor{checkgreen}{HTML}{1B7F3B}
\definecolor{crossred}{HTML}{C0392B}
\newcommand{\yesmark}{\textcolor{checkgreen}{\ding{51}}}
\newcommand{\nomark}{\textcolor{crossred}{\ding{55}}}

\title{Truncate or Not? A Scaling Study of Looped Transformers with Truncated Backpropagation}
\papershorttitle{Truncate or Not?}
\author{Xuyuan~Xiong$^{1,*}$, Siyu~Chen$^{2,*}$, Zihan~Shao$^{2}$, Yue~Su$^{1}$, Weiyang~Jin$^{1}$, Ping~Luo$^{1}$, Zhuoran~Yang$^{2,\dagger}$, Xihui~Liu$^{1,\dagger}$}
\paperaffiliations{$^{1}$MMLab, The University of Hong Kong\qquad $^{2}$Yale University}
\papernote{$^{*}$Equal contribution: \paperemail{xiongxuyuan@connect.hku.hk}, \paperemail{siyu.chen.sc3226@yale.edu}\\%
  $^{\dagger}$Corresponding authors: \paperemail{zhuoran.yang@yale.edu}, \paperemail{xihuiliu@eee.hku.hk}}
\paperlogos{\paperlogoitem{0.68cm}{logos/hku_mmlab}\hspace{0.5cm}\paperlogoitem{1.12cm}{logos/hku}\hspace{0.5cm}\paperlogoitem{1.08cm}{logos/yale}}
\papermeta{Date}{October 15, 2026}
\papermeta{Code}{\url{https://github.com/xjxyys/Truncate-or-Not}}

\begin{document}

\maketitle
\pagestyle{paper}

\begin{abstract}
Looped transformers reuse a shared block of layers to increase computational depth without increasing the number of trainable parameters.
They are trained with \ac{BPTT}. To save training compute, \ac{BPTT} is often truncated to the last few loops,
but truncation yields biased gradients that may weaken the benefit of looping.
In this work, we study the value of truncation in looped transformer training in a systematic way: we jointly vary model size, training tokens, and
forward and backward loop counts across nearly 600 training runs and fit a unified scaling law that captures the effects of
looping and truncation through \emph{effective parameters}. Despite the training-compute savings of truncation, our fitted law favors
\textbf{full \ac{BPTT}} when forward and backward loop counts are jointly optimized. In matched-compute runs, full \ac{BPTT}
through fewer loops consistently outperforms truncated settings of the same compute. The law further shows that at fixed model
size, the compute-optimal loop count and training-token budget both grow with compute.
When model size is also free, the law identifies two fully trained loops as the best looped setting.
Experimentally, this setting achieves lower validation loss than the compute-optimal non-looped model with about
\textbf{27\% fewer non-embedding parameters} and only 8\% more training compute.
\end{abstract}

\begin{figure}[!h]
  \vspace{-4pt}
  \centering
  \captionsetup{font=small, skip=3pt}
  \includegraphics[width=\textwidth]{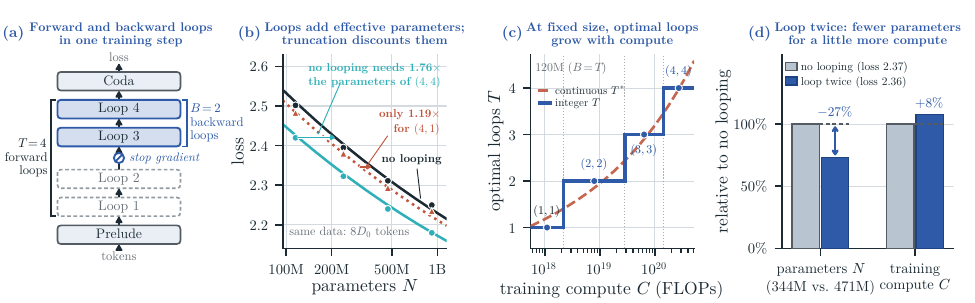}
  \caption{\textbf{Overview of looped transformer scaling with truncated \ac{BPTT}.}
  \textbf{(a)}~Step for $(T,B)=(4,2)$ ($T$ forward loops, last $B$ trained).
  \textbf{(b)}~At $8D_0$ tokens, loops add effective parameters while truncation discounts them (\Cref{sec:law}; lines: law, markers: runs).
  \textbf{(c)}~At fixed size (120M), optimal $T$ grows with compute under full \ac{BPTT} (\Cref{sec:law-fixed-size}).
  \textbf{(d)}~$(2,2)$ (344M) beats $(1,1)$ (471M, $2.1\times10^{19}$ FLOPs) in loss ($2.36$ vs.\ $2.37$ nats) with $27\%$ fewer parameters and $8\%$ more compute (\Cref{sec:untie}).}
  \label{fig:teaser}
  \vspace{-6pt}
\end{figure}

\section{Introduction}
\label{sec:intro}

Looped transformers have attracted attention for combining strong performance with efficient use of parameters and training
memory. By repeatedly applying a shared block of layers, they increase computational depth without increasing the number of
trainable parameters \citep{dehghani2019universal,geiping2025scaling}. To reduce the cost of training these repeated blocks,
several looped language models use truncated \acf{BPTT}, propagating gradients only through the final loops while executing
the earlier loops without tracking gradients \citep{geiping2025scaling,mcleish2025teaching,prairie2026parcae,wang2026hrmtext,huang2026towards}.
Truncation introduces a new tradeoff: it saves backward computation and activation memory during training but can weaken
the gains from looping \citep{schwethelm2026how}. To achieve the best performance under a fixed training compute
budget, we must therefore decide how to divide compute among forward loops, backward loops, model size,
and training tokens.

Existing scaling studies examine prescribed backpropagation recipes rather than jointly modeling forward and backward loop
counts. \citet{schwethelm2026how} compare full \ac{BPTT} with truncation to half the loops, fitting a separate scaling
law to each recipe. Parcae \citep{prairie2026parcae} varies backward-loop counts in small-model experiments but fixes them to half
the forward-loop count in its scaling study, leaving their joint optimization to future work. These studies characterize specific
recipes but do not provide a unified law for choosing both loop counts across model sizes and token budgets
(Table~\ref{tab:looped_studies}). We thus ask the following question: 

\vspace{0.5\baselineskip}
{\centering
\begin{minipage}{0.9\linewidth}
\centering\itshape
How do model size, training tokens, and forward and backward loops jointly affect loss,
and should we truncate backpropagation under a fixed training compute budget?
\end{minipage}
\par}
\vspace{0.5\baselineskip}

To characterize how these quantities affect loss, we conduct nearly {\bf 600 training runs}, including a scaling suite spanning
58M--919M non-embedding parameters, over three orders of magnitude in training tokens per parameter, and up to 16 forward loops.
We independently vary forward and backward loop counts across model sizes and token budgets and fit a {\bf unified scaling law}.
The scaling law expresses the benefit of looping through an \emph{effective parameter count}---the size
of a non-looped model that achieves the same loss at the same token budget. Fully trained loops increase the effective
parameter count with diminishing returns, while truncation discounts this gain.

We then use the scaling law to decide whether to truncate, and the answer depends on whether the forward-loop count is prescribed.
When it is, the law recovers the benefit of truncation reported in prior work \citep{schwethelm2026how}: the saved backward
compute supports a larger model and more training tokens, which more than offset the discounted loop gain.
When both loop counts can be chosen, however, the law favors full \ac{BPTT}. Reducing the forward-loop count frees
compute for additional backward loops until every remaining loop is trained, and under the fitted law this reallocation yields
more effective parameters at the same compute than the original truncated setting. Separately fitted laws and
matched-compute experiments support this advantage (\Cref{sec:truncate}).

Building on this preference for full \ac{BPTT}, we additionally study how to allocate compute among loop count, training tokens,
and model size. Consider first a setting where model size is predetermined, for example due to device memory constraints.
At fixed model size, increasing the loop count is an effective way to increase the effective parameter count;
however, additional loops offer diminishing gains while increasing the cost per token. The law precisely
characterizes the optimal tradeoff between loop count and training tokens, allocating most additional compute to tokens
and increasing the loop count only slowly. This result extends Parcae's fixed-size allocation analysis
\citep{prairie2026parcae} beyond a prescribed truncation rule. When model size is also free,
compute can instead support more unique parameters; under our leading-FLOP approximation, the law favors a non-looped model
and identifies two fully trained loops as the best looped setting (Figure~\ref{fig:law-free-size}). In experiments, models with two fully trained loops nearly
match compute-optimal non-looped models at equal training compute while having substantially fewer non-embedding parameters.
With a little more training compute, such a smaller model surpasses the compute-optimal non-looped model
(\Cref{sec:compute}).

\begin{figure}[t]
  \centering
  \captionsetup{font=small, skip=3pt}
  \includegraphics[width=0.95\textwidth]{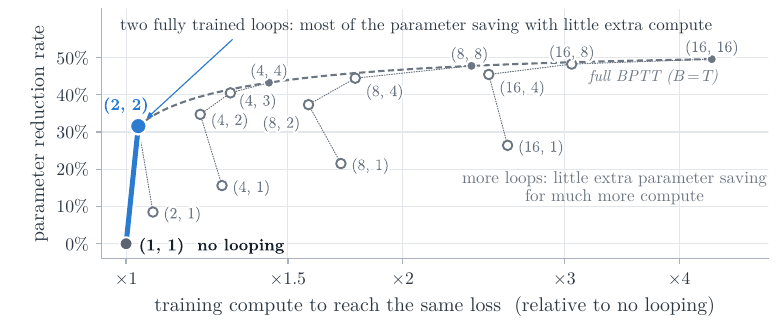}
  \caption{\textbf{No looping is computationally optimal; two fully trained loops saves most parameters with little extra compute.} Each marker is a loop setting $(T,B)$, with $T$ forward loops
  and the last $B$ trained, at the parameter and token counts our scaling law finds compute-optimal. Compared with the model
  without looping at the same loss, the horizontal axis shows the training compute it needs and the vertical axis the
  fraction of non-embedding parameters it saves. Filled markers use full BPTT ($B=T$), and the dashed curve is the law's prediction for full BPTT at any $T$ from 2 to 16. Every truncated setting lies below this curve, so full BPTT forms the frontier and saves the most parameters at each compute. Dotted lines join the settings with the same $T$ (\Cref{sec:law-free-size}).}
  \label{fig:law-free-size}
\end{figure}

Our contributions are threefold:
\begin{itemize}[leftmargin=1.6em,itemsep=2pt,topsep=3pt]
\item \textbf{A unified scaling law based on effective parameters.} We develop and validate a law that expresses loop gains
and truncation discounts through an effective parameter count, jointly modeling model size, training tokens, and independent
forward and backward loop counts (\Cref{sec:law}).
\item \textbf{An explanation of when truncation pays.} We show why truncation can pay at a prescribed forward-loop count and why
joint forward--backward allocation favors full \ac{BPTT} under the fitted law, supported by matched-compute experiments
(\Cref{sec:truncate}).
\item \textbf{Compute-allocation rules and parameter-efficient training.} We derive allocation rules for fixed and free model
size and demonstrate that two fully trained loops nearly match compute-optimal non-looped models while having substantially
fewer trainable parameters (\Cref{sec:compute}).
\end{itemize}
Figure~\ref{fig:teaser} previews the three results: loops add effective parameters and truncation discounts them; at
fixed model size the best loop count grows with compute; two fully trained loops match the loss of a non-looped model
with fewer parameters for a little more compute.

\section{Related work}
\label{sec:related}

\paragraph{Looped transformers.}
Depth-wise parameter sharing in Universal Transformers and ALBERT decouples computational depth from the number of trainable parameters \citep{dehghani2019universal,lan2020albert}.
Theoretical and controlled studies show how recurrence supports iterative programs, in-context learning algorithms, and depth-sensitive reasoning \citep{giannou2023looped,yang2024looped,saunshi2025reasoning}.
In language model pretraining, Huginn uses a prelude--recurrent--coda architecture \citep{geiping2025scaling}, while Ouro scales looped pretraining to 7.7T tokens \citep{zhu2025scaling}.
Other work adds recurrence to pretrained models \citep{bae2025relaxed,mcleish2025teaching} or adapts depth at inference \citep{bae2025mixture,jeddi2026loopformer}.
Recent architectures improve recurrence through residual and hyper-connections \citep{prairie2026parcae,fu2026simply,marchenko2026closing,zeitoun2026hyperloop} or MoE routing \citep{csordas2024moeut,chen2026loopmoe,he2026looping,gao2026loop}.
These studies show that looping improves parameter efficiency across architectures; we instead fix the architecture and ask how to train its loops and allocate compute.

\paragraph{Training through recurrent depth.}
Truncated \ac{BPTT} limits gradient propagation to a subset of recurrent steps, saving activation memory and backward compute
at the cost of biased gradients \citep{williams1990efficient,tallec2017unbiasing}.
It is the common recipe for looped language models, which backpropagate through the last few loops or half of them
\citep{geiping2025scaling,mcleish2025teaching,prairie2026parcae,schwethelm2026how}, and recursive reasoning models often keep
only the last step \citep{wang2025hierarchical,jolicoeur2025trm}. \Cref{app:related} surveys these recipes in more detail.
Implicit models likewise differentiate only a few steps at the fixed point \citep{bai2019deep,geng2021training}, and truncated
\ac{BPTT} in looped transformers can be read as a truncated Neumann series of the fixed-point gradient
\citep{movahedi2026fixedpoint,huang2026towards}.
The few studies that measure what truncation costs compare training recipes at a fixed forward depth
\citep{feinashley2026attractor,huang2026eqr,salvatori2026trmp,ren2026hrm}. We instead vary the backward depth independently of
the forward depth, weighing the compute that truncation saves against the loop gain it costs.

\paragraph{Scaling laws for recurrence.}
Size--data scaling laws quantify how parameters and training tokens jointly determine loss
\citep{kaplan2020scaling,hoffmann2022training,videau2026skaling}.
For looped models, some laws fit each architecture separately \citep{wang2026smelt,lee2026sparse,chen2026growth}, while others
model the loop count explicitly (Table~\ref{tab:looped_studies}) through additive loss terms \citep{zhu2025scaling} or effective
parameters with linear, power-law, or bounded gains \citep{prairie2026parcae,schwethelm2026how,chen2026scaling}.
Closest to our work, Iso-Depth \citep{schwethelm2026how} fits separate laws for full and half-length backpropagation and finds
that truncation weakens the loop gain but lowers the loss at a fixed forward-loop count by freeing compute, while Parcae
\citep{prairie2026parcae} finds at fixed model size that loops and tokens should grow together with compute but ties the
backward loops to half the forward loops and leaves their joint choice open.
Our law instead treats the backward-loop count as a separate variable, combining coupled size--data scaling
\citep{videau2026skaling} with bounded loop gains \citep{chen2026scaling}. It recovers the truncation benefit at a prescribed
forward-loop count but favors full \ac{BPTT} when both counts are free (\Cref{sec:truncate}).
Looped MoE models commonly use two loops \citep{he2026looping,wang2026smelt,gao2026loop}. For dense models, our law ranks two
fully trained loops first among looped settings, since they provide nearly the effective size of two untied copies
(\Cref{sec:law-free-size}), and in our runs they nearly match compute-optimal models without looping with substantially fewer
non-embedding parameters (\Cref{sec:untie}).

\begin{table}[t]
  \centering
  \caption{Scaling laws for looped language models. Earlier laws either fit a separate curve per architecture or model looping through a single loop count $T$, with backward loops $B$ fixed or tied to $T$; ours models $T$ and $B$ independently with a bounded effective-parameter gain. /: not discussed; $^\dagger$appendix scaling fit.}
  \label{tab:looped_studies}
  \footnotesize
  \setlength{\tabcolsep}{3.6pt}
  \begin{tabular*}{\textwidth}{@{\extracolsep{\fill}}ll *{4}{>{\centering\arraybackslash}p{0.55cm}} lll@{}}
    \toprule
    & & \multicolumn{4}{c}{Scaling axes} & & & \\
    \cmidrule(lr){3-6}
    Study & Arch. & $N$ & $D$ & $T$ & $B$ & Loop gain & BPTT & Regime \\
    \midrule
    \citet{schwethelm2026how} & dense & \yesmark & \yesmark & \yesmark & \nomark & power law ($T^\phi$) & $T$; $\lceil T/2\rceil$ & fixed depth \\
    Parcae \citep{prairie2026parcae} & dense & \yesmark & \yesmark & \yesmark & \nomark & linear ($T$) & $\lceil T/2\rceil$ & fixed $N$ \\
    \citet{chen2026scaling} & dense, MoE & \yesmark & \yesmark & \yesmark & \nomark & bounded exp.\ ($T$) & full ($T$) & free $N$, mem. \\
    SMELT \citep{wang2026smelt} & MoE & \yesmark & \yesmark & \nomark & \nomark & per-arch.\ fit & / & matched $N$, KV \\
    \citet{lee2026sparse} & dense, MoE & \yesmark & \nomark & \nomark & \nomark & per-arch.\ fit & / & free $N$ \\
    \citet{chen2026growth} & dense & \yesmark & \yesmark & \nomark & \nomark & per-arch.\ fit & full ($T$) & free $N$ \\
    Ouro$^\dagger$ \citep{zhu2025scaling} & dense & \yesmark & \yesmark & \yesmark & \nomark & additive ($T$) & / & / \\
    \midrule
    Ours & dense & \yesmark & \yesmark & \yesmark & \yesmark & bounded $(T,B)$ & $1\le B\le T$ & fixed \& free $N$ \\
    \bottomrule
  \end{tabular*}
\end{table}

\section{Preliminaries}
\label{sec:prelim}

\begin{figure}[t]
  \centering
  \includegraphics{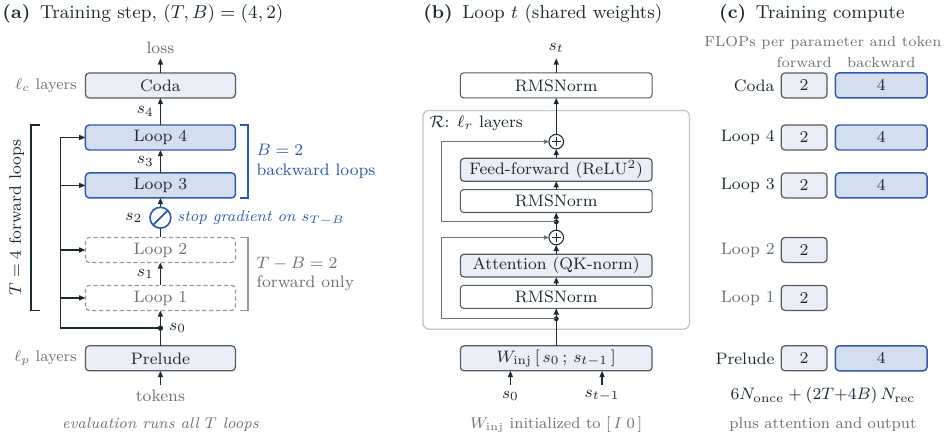}
  \caption{\textbf{The looped transformer, drawn for $(T,B)=(4,2)$.} \textbf{(a)}~One training step: a stop gradient on
  $s_{T-B}$, so the first $T-B$ loops (dashed) run forward only. \textbf{(b)}~One loop, \eqref{eq:loop}; all loops share
  its weights. \textbf{(c)}~Training FLOPs per parameter and token of each part \eqref{eq:law-leading-compute}.}
  \label{fig:arch}
\end{figure}

\paragraph{Looped transformers.}
We study the prelude--recurrent--coda architecture \citep{geiping2025scaling,schwethelm2026how}, illustrated in Figure~\ref{fig:arch} and detailed in \Cref{app:arch}.
A prelude $\mathcal{P}$ of $\ell_p$ layers, including the input embedding, maps the input tokens to hidden states.
A shared recurrent block $\mathcal{R}$ of $\ell_r$ layers then refines these states over $T$ recurrent iterations, which we call \emph{loops}, and a coda $\mathcal{C}$ of $\ell_c$ layers, including the output head, turns the result into next-token predictions.
For an input token sequence $x$ and loops $t=1,\dots,T$, the model computes
\begingroup
\addtolength{\abovedisplayskip}{14pt}%
\addtolength{\abovedisplayshortskip}{14pt}%
\begin{equation}
    s_0 = \tikz[baseline=(prelude.base)]{%
      \node[inner sep=0pt, outer sep=0pt] (prelude) {$\mathcal{P}$};
      \node[overlay, anchor=base, inner xsep=3pt, inner ysep=1.5pt,
        text height=6pt, text depth=1.5pt, rounded corners=1pt,
        fill=pageaccent!8, font=\sffamily\footnotesize, text=pageaccent!85]
        (preludelabel) at ([yshift=20pt]prelude.base) {prelude};
      \draw[overlay, ->, >=stealth, pageaccent!60, line width=0.35pt]
        ([yshift=15.5pt]prelude.base) -- ([yshift=1.5pt]prelude.north);
    }(x), \qquad
    s_t = \Phi_\theta(s_{t-1};s_0) = \mathrm{RMSNorm}\bigl(\tikz[baseline=(recurrent.base)]{%
      \node[inner sep=0pt, outer sep=0pt] (recurrent) {$\mathcal{R}$};
      \node[overlay, anchor=base, inner xsep=3pt, inner ysep=1.5pt,
        text height=6pt, text depth=1.5pt, rounded corners=1pt,
        fill=pageaccent!8, font=\sffamily\footnotesize, text=pageaccent!85]
        (recurrentlabel) at ([xshift=-18pt,yshift=20pt]recurrent.base) {loop block};
      \draw[overlay, ->, >=stealth, pageaccent!60, line width=0.35pt]
        (recurrentlabel.south) to[out=-90,in=120] ([yshift=1.5pt]recurrent.north);
    }\bigl(W_{\mathrm{inj}}\,[\,%
      \tikz[baseline=(loopinput.base)]{%
        \node[inner sep=0pt, outer sep=0pt] (loopinput) {$s_0$};
        \node[overlay, anchor=base, inner xsep=3pt, inner ysep=1.5pt,
        text height=6pt, text depth=1.5pt, rounded corners=1pt,
        fill=pageaccent!8, font=\sffamily\footnotesize, text=pageaccent!85]
          (looplabel) at ([xshift=20pt,yshift=20pt]loopinput.base) {input injection};
        \draw[overlay, ->, >=stealth, pageaccent!60, line width=0.35pt]
          (looplabel.south) to[out=-90,in=60] ([yshift=1.5pt]loopinput.north);
      }\,;\,s_{t-1}\,]\bigr)\bigr), \qquad
    p = \tikz[baseline=(coda.base)]{%
      \node[inner sep=0pt, outer sep=0pt] (coda) {$\mathcal{C}$};
      \node[overlay, anchor=base, inner xsep=3pt, inner ysep=1.5pt,
        text height=6pt, text depth=1.5pt, rounded corners=1pt,
        fill=pageaccent!8, font=\sffamily\footnotesize, text=pageaccent!85]
        (codalabel) at ([yshift=20pt]coda.base) {coda};
      \draw[overlay, ->, >=stealth, pageaccent!60, line width=0.35pt]
        ([yshift=15.5pt]coda.base) -- ([yshift=1.5pt]coda.north);
    }(s_T).
  \label{eq:loop}
\end{equation}
\endgroup
Here $s_0,s_1,\dots,s_T\in\mathbb{R}^{d}$ are the hidden states at each token position: the prelude output $s_0$ is the initial state, and $s_t$ is the state after loop $t$.
Each loop concatenates $s_0$ with the previous state into $[\,s_0\,;\,s_{t-1}\,]$, projects the result back to width $d$ with $W_{\mathrm{inj}}\in\mathbb{R}^{d\times 2d}$, and passes it through $\mathcal{R}$ and an RMSNorm to obtain $s_t$.
This \emph{input injection} feeds $s_0$ to every loop, and the RMSNorm at the end of each loop keeps the state passed to the next loop or the coda at a controlled scale, as in \citet{schwethelm2026how}.
We denote the loop map by $\Phi_\theta$, where $\theta$ collects the recurrent weights of $\mathcal{R}$ and $W_{\mathrm{inj}}$.
The coda maps the final state $s_T$ to the next-token distribution $p$, and the cross-entropy loss applies only to $p$.
We split the non-embedding parameters as $N=N_{\mathrm{once}}+N_{\mathrm{rec}}$, where $N_{\mathrm{once}}$ counts $\mathcal{P}$ and $\mathcal{C}$, which run once per token, and $N_{\mathrm{rec}}$ counts $\mathcal{R}$ and $W_{\mathrm{inj}}$, which are reused in every loop; Table~\ref{tab:models} gives the share $N_{\mathrm{rec}}/N$ for each model size.

\paragraph{Truncated BPTT.}
A \emph{loop setting} $(T,B)$ with $1\le B\le T$ runs $T$ loops forward and backpropagates through only the last $B$: the state $s_{T-B}$ is detached (Figure~\ref{fig:arch}a), so the first $T-B$ loops carry no gradient and store no activations.
The setting $B=T$ is full \ac{BPTT}, $B<T$ is truncated \ac{BPTT}, and $(1,1)$ is a standard transformer.
The recurrent weights $\theta$ and, through the input injection, the prelude receive only the gradient of the last $B$ loops. The coda's gradient is exact.
Evaluation always runs all $T$ loops.
A backward loop costs twice the FLOPs of a forward loop (\Cref{app:arch}), so truncation trades backward compute and activation memory for a biased gradient \citep{williams1990efficient,tallec2017unbiasing}. \Cref{app:bptt} writes out both gradients and discusses this bias.

\paragraph{Baseline scaling law.}
To relate validation loss to non-embedding parameter count $N$ and training tokens $D$ for a non-looped model, we adopt the coupled scaling law of \citet{videau2026skaling},
\begin{equation}
  L(N,D) \;=\; E + \Bigl[A\cdot\Bigl(\frac{N}{N_0}\Bigr)^{-\alpha}
    + H\cdot\Bigl(\frac{D}{D_0}\Bigr)^{-\beta}\Bigr]^{\kappa}. 
  \label{eq:coupled-scaling}
\end{equation}
Here $E$ is the irreducible loss, $N_0$ and $D_0$ are fixed reference scales (we give token budgets in units of $D_0=2^{30}$ tokens), $\alpha,\beta>0$ are the size and token exponents, and the outer exponent $\kappa$ couples the two terms ($\kappa=1$ recovers the additive form of \citet{hoffmann2022training}); we denote the data coefficient by $H$ to reserve $B$ for backward loops.
\Cref{sec:law} extends \eqref{eq:coupled-scaling} to looped models by replacing $N$ with an effective parameter count in which $(T,B)$ rescales $N_{\mathrm{rec}}$ by a \emph{looping coefficient} $g(T,B)$.

\section{A scaling law for forward and backward loops}
\label{sec:law}

Should a looped transformer truncate backpropagation? At a fixed model size
and training compute, truncation saves backward computation that can be
spent on more forward loops or more training data, but it also reduces the
benefit of looping. We fit a scaling law that separates forward and
backward loop counts to quantify this tradeoff. This section fits and
validates the law and interprets it in terms of effective parameters.
\Cref{sec:truncate} then uses the law to decide whether to truncate, and
\Cref{sec:compute} uses it to divide compute among loops, data, and model size.

\subsection{Fitting and validating the scaling law}

\paragraph{Experimental setup.}
\label{sec:setup}
We follow the architecture and training recipe of \citet{schwethelm2026how}, using the
prelude--recurrent--coda architecture introduced in \Cref{sec:prelim}. We train a grid of five model sizes from 58M to 919M
non-embedding parameters (Table~\ref{tab:models}) on the 100B-token FineWeb-Edu corpus \citep{penedo2024fineweb}.
We use MuonH for transformer-block and input-injection matrices and AdamW for embedding, output, and normalization parameters
\citep{jordan2024muon,wen2025hyperball}. We report validation cross-entropy in nats per token on deduplicated
held-out shards; \Cref{app:setup} gives the full training and evaluation setup, including the learning-rate and batch-size rules.

\begin{table}[!htbp]
\centering
\begin{minipage}[t]{0.42\textwidth}
\vspace{0pt}
\centering
\caption{\textbf{Model sizes.}}
\label{tab:models}
\small
\setlength{\tabcolsep}{0pt}
\begin{tabular*}{\linewidth}{@{\extracolsep{\fill}}lrcrr@{}}
\toprule
Size & $d$ & Layers & $N$ & $N_{\mathrm{rec}}/N$ \\
\midrule
60M  & 768  & 2/4/2 & 57.8M & 0.510 \\
120M & 1024 & 2/5/2 & 115.4M & 0.564 \\
240M & 1280 & 3/6/3 & 239.2M & 0.507 \\
480M & 1664 & 3/7/4 & 470.8M & 0.506 \\
1B   & 2176 & 4/8/4 & 918.7M & 0.505 \\
\bottomrule
\end{tabular*}
\par\smallskip
{\footnotesize\raggedright
$d$: width; layers: $\ell_p / \ell_r / \ell_c$; $N$: non-embedding parameters (\Cref{app:setup}).
\par}
\end{minipage}\hfill
\begin{minipage}[t]{0.55\textwidth}
\vspace{0pt}
\centering
\caption{\textbf{Fit and prediction errors} ($\times10^{-3}$).}
\label{tab:law-fit-summary}
\small
\setlength{\tabcolsep}{0pt}
\begin{tabular*}{\linewidth}{@{\extracolsep{\fill}}lrrr@{}}
\toprule
Evaluation & Fits & RMSE & MAE \\
\midrule
\csname @@input\endcsname{sections/tab_exponential_main_rows.tex}
\bottomrule
\end{tabular*}
\par\smallskip
{\footnotesize\raggedright
Errors in $\times10^{-3}$ nats/token over 230 configurations ($R^2=0.9992$; \Cref{app:law}).
\par}
\end{minipage}
\end{table}
\begin{figure}[t]
  \centering
  \includegraphics[width=\textwidth]{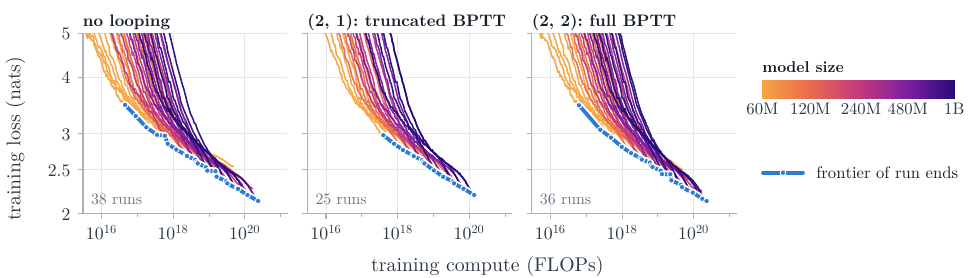}
  \caption{\textbf{Training curves of the scaling suite for no looping, $(2,1)$, and $(2,2)$.} Each curve is one run, colored by
  model size, against training compute (\Cref{app:compute-flops}); each token budget is a separate run. Training loss of every
  tenth step, averaged over at most $\pm250$ steps, not the validation loss we report. Highlighted line and dots: the frontier of
  run ends, the runs that end below every run with less compute. \Cref{app:curves} shows all fifteen loop settings.}
  \label{fig:curves}
\end{figure}

Our 592 training runs include a \emph{scaling suite} of 347 distinct configurations spanning 1.1--1{,}291 tokens per parameter
and fifteen $(T,B)$ combinations with $T\in\{1,2,4,8,16\}$, keeping each model's parameter count fixed \citep{prairie2026parcae}.
Figure~\ref{fig:curves} shows the training curves of three of these settings: as compute grows, larger models take over the
frontier of run ends, as in \citet{hoffmann2022training}.
We fit the law on 230 configurations
and reserve 117 for separate tests. For configurations with multiple accepted runs, we use their mean validation loss.
Each configuration receives the same weight in the fit, regardless of its number of runs.
Other experiments cover compute allocation, comparisons with untied models, learning-rate sweeps, and batch-size sweeps;
\Cref{app:setup,app:law-fit,app:untie,app:lrbs} give the full experimental design and fitting details.

\paragraph{Effective-parameter scaling law.}
Only the recurrent block runs in every loop, so looping can enlarge only this block. We therefore write the loss as the coupled law
of \citet{videau2026skaling} with an effective parameter count. In it, the recurrent block, $N_{\mathrm{rec}}$, counts several
times, and the prelude and coda, $N_{\mathrm{once}}$, count once (\Cref{sec:prelim}):
\begin{equation}
\begin{gathered}
  L = E + \Bigl[A\cdot\Bigl(\frac{N_{\mathrm{eff}}}{N_0}\Bigr)^{-\alpha} + H\cdot\Bigl(\frac{D}{D_0}\Bigr)^{-\beta}\Bigr]^{\kappa},\\
  N_{\mathrm{eff}} = N_{\mathrm{once}} + g(T,B)\cdot N_{\mathrm{rec}},
  \qquad g(T,B) = \exp\bigl[c\cdot(1-T^{-p}\cdot B^{-q})\bigr].
\end{gathered}
\label{eq:law}
\end{equation}
Here $g(T,B)$ is the \emph{looping coefficient}: it multiplies the recurrent block's parameter count, while the prelude and coda count once.
With full \ac{BPTT}, the looping coefficient $g(T,T)=\exp[c\cdot(1-T^{-(p+q)})]$ grows with $T$;
truncation multiplies it by $g(T,B)/g(T,T)\le1$.

\paragraph{Design of the looping coefficient.}
We choose $g(T,B)$ to satisfy four modeling requirements:
\definecolor{gainaccent}{HTML}{42566B}
\newcommand{\gainpanelheading}[2]{%
  {\pagesfsb\footnotesize\color{gainaccent}%
    \tikz[baseline=(gainnumber.base)]{\node[rounded corners=3pt,
      fill=gainaccent!10!white, text=gainaccent, inner sep=0pt,
      minimum width=13pt, minimum height=13pt, font=\pagesfsb\scriptsize]
      (gainnumber) {#1};}%
    \hspace{4pt}#2}\par\vspace{2pt}}
\newcommand{\gainpanelequation}[1]{%
  \noindent\parbox[c][28pt][c]{\linewidth}{\centering\small\color{pageink}\(#1\)}\par\vspace{2pt}}
\begin{tcbraster}[raster columns=4, raster equal height=all, raster column skip=6pt,
  raster before skip=4pt, raster after skip=8pt,
  enhanced, unbreakable, arc=8pt, outer arc=8pt, boxrule=0.45pt,
  colback=gainaccent!3!white, colframe=gainaccent!20!white,
  boxsep=0pt, left=6pt, right=6pt, top=7pt, bottom=7pt,
  fontupper=\footnotesize\color{pageink!80!white},
  before upper={\raggedright\setlength{\parskip}{0pt}}]
\begin{tcolorbox}
  \gainpanelheading{1}{Normalization}
  \gainpanelequation{\begin{gathered}g(T,B)\ge0,\\ g(1,1)=1\end{gathered}}
  The looping coefficient is nonnegative; a single loop recovers the non-looped model.
\end{tcolorbox}
\begin{tcolorbox}
  \gainpanelheading{2}{Monotonicity}
  \gainpanelequation{\partial_T g>0,\ \ \partial_B g>0}
  Increasing either loop count increases the looping coefficient.
\end{tcolorbox}
\begin{tcolorbox}
  \gainpanelheading{3}{Zero-backward limit}
  \gainpanelequation{g(T,0)=0}
  With no trained loop, the recurrent block adds no effective size.
\end{tcolorbox}
\begin{tcolorbox}
  \gainpanelheading{4}{Saturation}
  \gainpanelequation{\displaystyle\sup_{1\le B\le T}g(T,B)<\infty}
  Finite parameters store finite information, motivating a bounded gain.
\end{tcolorbox}
\end{tcbraster}
\par\medskip
\begin{table}[!htbp]
\centering
\small
\caption{Fitted parameters on the 230-configuration fitting set.
The reference scales are fixed at $N_0=2.4\times10^8$ parameters and
$D_0=2^{30}$ tokens.}
\label{tab:law-design-coefficients}
\setlength{\tabcolsep}{8.5pt}
\begin{tabular}{@{}rrrrrrrrr@{}}
\toprule
$E$ & $A$ & $H$ & $\alpha$ & $\beta$ & $\kappa$ & $c$ & $p$ & $q$ \\
\midrule
0.29865 & 6.48695 & 7.63343 & 0.30568 & 0.39281 & 0.32510 & 1.11198 & 0.23677 & 1.02181 \\
\bottomrule
\end{tabular}
\end{table}

The finite-ceiling assumption is supported by prior observations of saturation:
\citet{chen2026scaling} model a bounded effective-parameter gain from
looping, while Parcae finds that adding test-time loops to a trained model
approaches a loss floor \citep{prairie2026parcae}.
There is also an information-storage motivation. At a fixed parameter
count and weight precision, the weights can store only a finite amount
of information; additional loops provide more computation over those
weights without increasing their storage capacity. We therefore treat saturation as an empirical
modeling assumption and learn its ceiling from the data.

These requirements motivate the form in \eqref{eq:law}. Nonnegativity motivates an exponential parameterization,
ensuring $g(T,B)>0$ for $T,B>0$. With $c,p,q>0$, increasing either loop count
decreases $T^{-p}\cdot B^{-q}$, so $g(T,B)$ grows toward the finite ceiling
$\exp(c)$.
The shared coefficient $c$ enforces $g(1,1)=1$,
and continuous extension as $B\to0^+$ gives $g(T,0)=0$.
At fixed model size and token budget, $p$ governs the loss reduction from
extra forward loops at fixed $B$, while $q$ governs the loss reduction
from extra backward loops at fixed $T$. These three loop coefficients,
$c,p,q$, together with the six coefficients $E,A,H,\alpha,\beta,\kappa$
of the coupled size--data law \eqref{eq:coupled-scaling}, give
nine fitted coefficients.

\begin{figure}[!t]
\centering
\begin{minipage}[b]{0.43\textwidth}
  \centering
  \includegraphics[width=\linewidth]{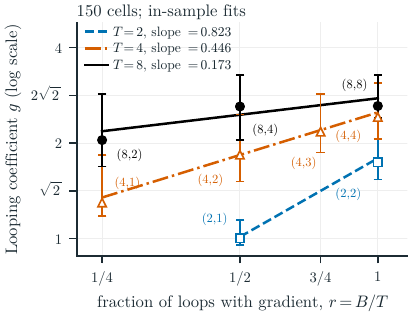}
\end{minipage}\hfill
\begin{minipage}[b]{0.55\textwidth}
  \centering
  \includegraphics[width=\linewidth]{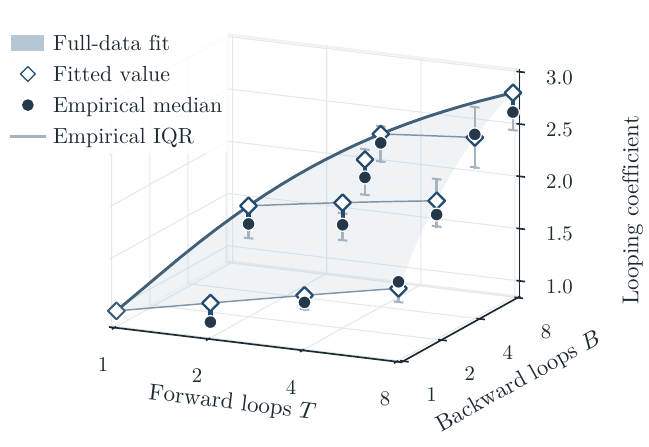}
\end{minipage}
\par
\begin{minipage}[t]{0.43\textwidth}
  \captionsetup{width=\linewidth}
  \caption{\textbf{The looping coefficient varies with $r=B/T$.}
  Markers and bars: medians and ranges of the loss-matched empirical
  coefficient $\widehat g$ over 150 configurations. Lines: least-squares fits
  of $\log_2\widehat g$ against $\log_2 r$ for each $T$ (\Cref{app:law-size-estimates}). The slope falls as $T$ grows:
  truncating the same fraction of loops costs less at larger $T$, so $g$
  couples $T$ and $B$.}
  \label{fig:backward-nats}
\end{minipage}\hfill
\begin{minipage}[t]{0.55\textwidth}
  \captionsetup{width=\linewidth}
  \caption{\textbf{The looping coefficient and the runs.}
  Surface and diamonds: the full-data fitted looping coefficient $g(T,B)$.
  Circles and gray bars: medians and interquartile ranges (IQR) of loss-matched
  empirical looping coefficients across 163 configurations with $T\le8$
  (\Cref{app:law-size-estimates}). Vertical segments join each median to the fitted value at the same loop setting.
  Three coefficients, $c,p,q$, suffice to capture how the runs depend on $T$ and $B$.}
  \label{fig:gain-surface}
\end{minipage}
\end{figure}

\paragraph{Empirical looping coefficients.}
Figure~\ref{fig:backward-nats} motivates coupling
$T$ and $B$: the empirical looping coefficient has a smaller fitted
log--log slope against $r=B/T$ at larger $T$.
Figure~\ref{fig:gain-surface} compares the shared looping-coefficient surface
with empirical looping coefficients. For each looped run with loss $\ell$, we first use
shape-preserving piecewise cubic Hermite interpolation (PCHIP)
\citep{fritsch1984method} to interpolate the non-looped measurements at the
same target token budget to obtain the
effective size $\widehat N_{\mathrm{eff}}$ matching $\ell$.
We then use the effective-size relation in \eqref{eq:law}: subtracting
$N_{\mathrm{once}}$ and dividing by $N_{\mathrm{rec}}$ gives the empirical
looping coefficient $\widehat g$ (\Cref{app:law-size-estimates}). Vertical
segments join each empirical median to the fitted coefficient at the same
loop setting. This agreement shows that $g(T, B)$ with just three parameters 
$c,p,q$ captures the observed loop dependence; Table~\ref{tab:law-fit-summary}
quantifies the law's in-sample fit and leave-one-out (LOO) prediction errors.

\paragraph{Fitting procedure.}
We fit all nine parameters jointly by minimizing the sum of squared
differences between predicted and measured validation losses, giving each
configuration equal weight. The reference scales $N_0=2.4\times10^8$
parameters and $D_0=2^{30}$ tokens are held fixed. Predictions use actual token counts $D$; budget holdouts group target token budgets.
To reduce sensitivity to the initial guess, we fit the law 64 times from
different random starting values and retain the solution with the smallest
fitting error. All fits converge.
Table~\ref{tab:law-design-coefficients} gives the parameters fitted on the
230 configurations used for the validation below. Further fitting details
are in \Cref{app:law}.

\begin{figure}[!htbp]
  \centering
  \includegraphics[width=\textwidth]{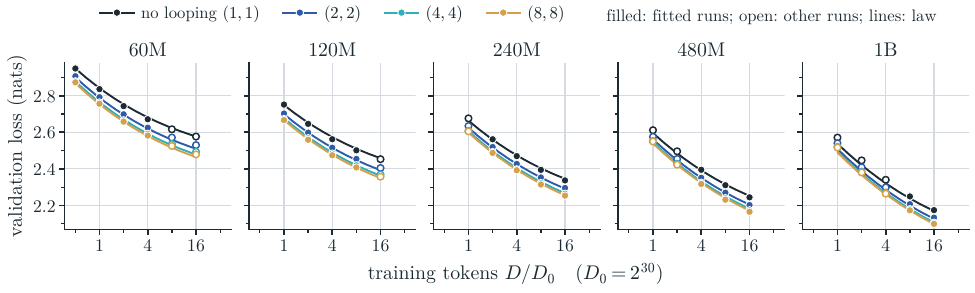}
  \caption{\textbf{More loops with full \ac{BPTT} lower the loss at every size and budget and one law tracks all of
  them.} Validation loss against training tokens for the four loop settings with full \ac{BPTT}. Dots: runs, open where
  outside the fitting set; lines: the full-data fitted law (\Cref{sec:law}).}
  \label{fig:scaling}
\end{figure}

\paragraph{Holdout validation.}
To test whether the law predicts measurements it has not been fitted to,
we leave out selected configurations, fit all nine parameters again
using the remaining measurements, and compare the predictions with the
omitted measurements. For example, to test prediction at an unseen model
size, we exclude every configuration at that size and fit the law using
the other sizes. We repeat this for each size and for the other exclusions
listed in Table~\ref{tab:law-fit-summary}.

\begin{figure}[!htbp]
  \centering  \includegraphics[width=\textwidth]{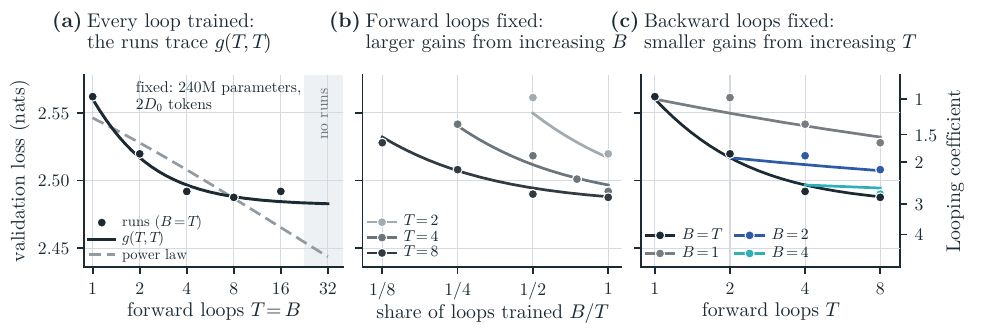}
  \caption{\textbf{Forward and backward loops at fixed size and budget (240M, $2D_0$).} Markers show measured runs and solid lines show the fitted law.
  \textbf{(a)}~$B=T$: the gain levels off; dashed, the power-law gain of prior work \citep{schwethelm2026how} keeps growing.
  \textbf{(b)}~Fixed $T$: training more of the loops lowers the loss.
  \textbf{(c)}~Fixed $B$: more forward loops help little.
  Gains level off, and training more loops ($B$) lowers the loss more than running more loops ($T$).}
  \label{fig:law_tb}
\end{figure}

\paragraph{Prediction accuracy.}
The law tracks the measured loss curves across sizes and token budgets
(Figure~\ref{fig:scaling}), including the held-out configurations (marked with open circles). 
Figure~\ref{fig:law_tb} gives the complementary
view across loop settings at a fixed model size and token budget: varying
$T$ with $B=T$, varying $B$ at fixed $T$, and varying $T$ at fixed $B$.
Both figures use the law fitted on the 230-configuration fitting set; the curves are evaluated on both fitted and held-out configurations.
\Cref{app:grid} overlays the law on the five-size grid of the scaling suite.

Holding out a model size or token budget increases prediction error only
modestly (Table~\ref{tab:law-fit-summary}), whereas unseen loop settings
are harder to predict (\Cref{app:law}). Extrapolation back to the non-looped
baseline is the weakest loop-count check; \Cref{app:law} reports
the boundary errors and alternative gains. The table separates fit quality
from held-out prediction error. \Cref{sec:law-model} examines the
remaining discrepancies in effective-parameter units
(Figure~\ref{fig:law_parts}), and
\Cref{sec:law-fixed-size} checks their effect on compute allocation
(Figure~\ref{fig:alloc_fixed}).

\FloatBarrier

\subsection{Illustrating the scaling law with effective parameters}
\label{sec:law-model}

How many parameters is a loop setting worth? \eqref{eq:law}
allows us to decompose the answer into the gain from fully trained loops, $g(T,T)$, and
the discount for truncation, $g(T,B)/g(T,T)$.
We use the empirical looping coefficients introduced above to compare the law
with the runs in Figure~\ref{fig:law_parts}; their construction is detailed
in \Cref{app:law-size-estimates}.

\begin{figure}[t]
  \centering
  \includegraphics[width=\textwidth]{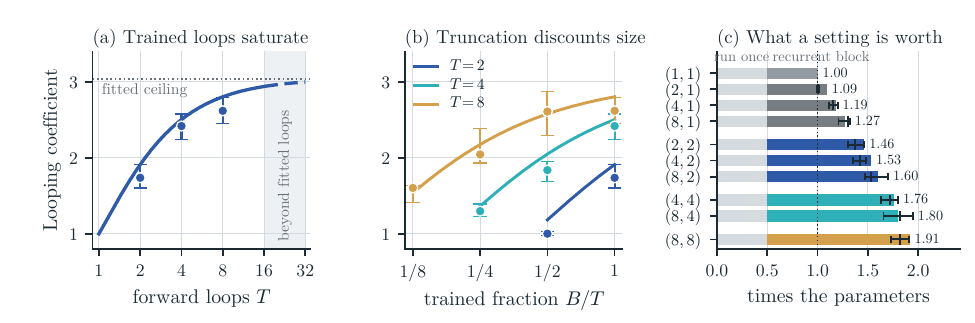}
  \caption{\textbf{What a loop setting is worth in effective size.}
  \textbf{(a)}~Full \ac{BPTT}, the looping coefficient $g(T,T)$, approaches its fitted ceiling (dotted); dashed beyond the fitted loop counts.
  \textbf{(b)}~The looping coefficient $g(T,B)$ against the trained fraction $B/T$.
  \textbf{(c)}~Total effective size $N_{\mathrm{eff}}/N$, grouped by $B$.
  Lines, bars and numbers: the law; markers and error bars: run estimates, median and interquartile range over sizes and budgets
  (\Cref{app:law-size-estimates}).
  Effective size is set mainly by $B$: within each group of equal $B$, neither the law nor the runs gain much from more forward loops.}
  \label{fig:law_parts}
\end{figure}

\paragraph{The looping coefficient saturates for large $T$.}
With full \ac{BPTT}, $B=T$, and the gain is
$g(T,T)=\exp[c\cdot(1-T^{-(p+q)})]$.
The runs in Figure~\ref{fig:law_tb}a vary both loop counts together at
fixed model size and token budget. Each doubling of
trained loops adds less than the one before (Figures~\ref{fig:law_tb}a and
\ref{fig:law_parts}a), as \citet{chen2026scaling} also find with all loops
trained. The power-law gain of prior work \citep{schwethelm2026how} keeps
growing, whereas ours approaches $e^c$. Our runs support the
bounded form over the power law, predicting held-out forward-loop counts
more accurately (\Cref{app:law-bounded}).

\paragraph{Doubling $B$ outweighs doubling $T$ alone.}
At fixed $T$, truncation reduces $B$ and hence $g(T,B)$, consistent with
the smaller effective sizes in Figure~\ref{fig:law_parts}b and higher
losses in Figure~\ref{fig:law_tb}b.
Figure~\ref{fig:law_parts}c clarifies the relative roles of $T$ and $B$
by grouping settings with the same $B$. The fitted bars and
empirical estimates change relatively little as $T$ increases within a
group, but show larger gains as $B$ increases across groups, including
comparisons at the same $T$.
The fitted exponents quantify this asymmetry: $q>p$
(Table~\ref{tab:law-design-coefficients}) implies $g(T,2B)>g(2T,B)$
for $2B\le T$. From the same initial setting at fixed model size,
doubling $B$ thus adds more effective parameters than doubling
$T$ alone; the small loss gains from extra forward loops
at fixed $B$ support the same conclusion (Figure~\ref{fig:law_tb}c). 

\begin{keyfinding}[Loops add effective parameters; truncation discounts them]
Each fully trained loop adds effective parameters, with diminishing
returns toward a finite ceiling. Truncation discounts this gain, and from
the same setting, doubling $B$ adds more effective parameters than
doubling $T$ (Figure~\ref{fig:law_parts}c).
\end{keyfinding}

\section{Truncate or not?}
\label{sec:truncate}

Truncation saves backward compute but reduces the effective parameter
count (\Cref{sec:law-model}). Whether it lowers loss at a given
compute depends on which quantities are held fixed. We first fix the
forward-loop count, as prior scaling studies do, and then let both loop
counts vary at equal compute.

\subsection{Fixed forward loops: truncation can be optimal}
\label{sec:law-fixed-forward}

Suppose the forward-loop count $T$ is prescribed, and the model size $N$,
token count $D$, and backward-loop count $B$ are chosen to minimize loss
at a training compute $C$, with the recurrent block keeping a fixed share
of the parameters as $N$ varies.
Truncation lowers the effective size
$N_{\mathrm{eff}}=N_{\mathrm{once}}+g(T,B)\cdot N_{\mathrm{rec}}$, but it
also lowers the training cost, which we express through the
\emph{compute-equivalent size} $N_{\mathrm{comp}}$:
\begin{equation}
 C\simeq6D\cdot N_{\mathrm{comp}},
 \qquad N_{\mathrm{comp}}=N_{\mathrm{once}}+\frac{T+2B}{3}\cdot N_{\mathrm{rec}}.
 \label{eq:law-leading-compute}
\end{equation}
We count two FLOPs per parameter and token for a forward pass and four for a backward pass,
omitting attention products and the output projection.
We also omit the optimizer update as its cost per step depends on the parameter count but not on $T$ or $B$.
Under this count, the prelude and coda run one forward and one backward pass, costing $6N_{\mathrm{once}}$ FLOPs per token,
and the recurrent block runs $T$ forward and $B$ backward passes, costing $(2T+4B)\cdot N_{\mathrm{rec}}$.
Their sum is $6N_{\mathrm{comp}}$.
A looped model thus corresponds to two different non-looped sizes:
trained on the same tokens, it reaches the loss of a non-looped model with $N_{\mathrm{eff}}$ parameters,
and it costs the compute of a non-looped model with $N_{\mathrm{comp}}$ parameters.
Their ratio $\chi=N_{\mathrm{comp}}/N_{\mathrm{eff}}$ is the compute paid per effective parameter.
Therefore, our optimization at fixed compute reduces to choosing the backward-loop count that minimizes this ratio
(see Lemma~\ref{lem:law-fixed-forward} in \Cref{app:law-allocation} for a rigorous derivation):
\begin{equation}
 B^*\in\operatorname*{arg\,min}_{B\in\{1,\ldots,T\}}\chi(T,B),
 \qquad \text{where}\quad \chi(T,B)=\frac{N_{\mathrm{comp}}}{N_{\mathrm{eff}}}.
 \label{eq:law-cost-per-effective-parameter}
\end{equation}
Since $\chi$ does not involve $C$, neither does $B^*$.
The stationary condition \eqref{eq:law-backward-balance} gives continuous
candidates for $B^*$; integer choices are evaluated directly using $\chi$
(Table~\ref{tab:law-fixed-forward-opt}).

Truncation reduces effective size, but frees compute for a larger model
and more tokens. At $T=8$, the law selects $B^*=2$ and predicts a
34\% compute saving at equal optimized loss
(Table~\ref{tab:law-fixed-forward-opt}). Separately fitted laws at
$T=4,8$ also favor $B=2$ over $B=T$: at $T=8$, their compute-optimal
frontiers predict a 33--35\% saving within the range of the runs
(Figure~\ref{fig:iso-study}; fitting details in \Cref{app:law-allocation}).

\begin{table}[t]
\centering
\captionsetup{width=0.95\linewidth}%
\caption{\textbf{With $T$ prescribed, the law selects $B^*=2$ and so truncates from $T=4$ on.}
For each forward-loop count $T$, $B^*$ is the backward-loop count that minimizes
$\chi(T,B)=N_{\mathrm{comp}}/N_{\mathrm{eff}}$, the compute per effective parameter, over the integers $1\leq B\leq T$,
with $N_{\mathrm{rec}}/N\simeq0.507$ and $N,D$ optimized under leading FLOPs.
$N^*$ and $D^*$ are the optimal model size and token count at $B^*$ relative to full \ac{BPTT} at the same $T$ and compute $C$.
These ratios do not depend on $C$. The gap between $\chi(T,B^*)$ and $\chi(T,T)$, and hence the compute that truncation saves, grows with $T$.}
\label{tab:law-fixed-forward-opt}
\small
\setlength{\tabcolsep}{0pt}
\renewcommand{\arraystretch}{1.25}
\begin{tabular*}{0.85\linewidth}{@{\extracolsep{\fill}}ccrrrr@{}}
\toprule
$T$ & $B^*$ & $\chi(T,B^*)$ & $\chi(T,T)$ & $N^*$ & $D^*$ \\
\midrule
1  & 1 & 1.000 & 1.000 & 1    & 1    \\
2  & 2 & 1.031 & 1.031 & 1    & 1    \\
4  & 2 & 1.204 & 1.430 & 1.27 & 1.08 \\
8  & 2 & 1.580 & 2.376 & 1.51 & 1.20 \\
16 & 2 & 2.343 & 4.339 & 1.70 & 1.31 \\
\bottomrule
\end{tabular*}

\end{table}

\begin{figure}[t]
\centering
\includegraphics{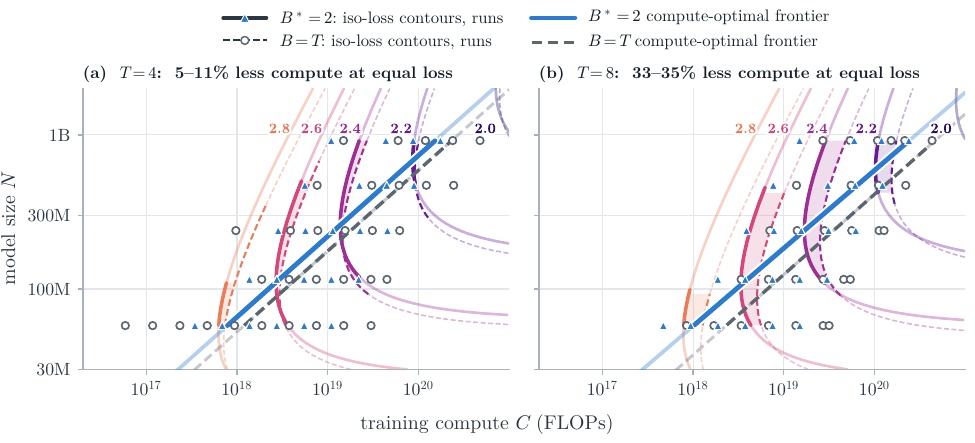}
\caption{\textbf{Separately fitted laws confirm the predicted truncation.} Iso-loss contours (labels: loss in nats) and
compute-optimal frontiers of laws fitted separately to $B=T$ (dashed, grey) and to the predicted $B^*=2$ (solid), at $T=4$ (a)
and $T=8$ (b). Markers are measured runs; faded segments lie outside the runs of a setting. At every loss level, the $B^*=2$
contour lies left of the $B=T$ contour (shaded gap), so truncation reaches the same loss with less compute: 5--11\% less at
$T=4$ and 33--35\% less at $T=8$, where both compute-optimal points lie within the runs. The saving grows with $T$ because further backpropagated loops
add little effective size at full cost. The compute that truncation frees goes partly into a larger model.}
\label{fig:iso-study}
\end{figure}

This agrees with the fixed-compute truncation benefit reported by
\citet{schwethelm2026how}. Parcae uses truncated BPTT but leaves
the compute-optimal choice of forward and backward loop counts to
future work \citep[Appendix~I]{prairie2026parcae}. The benefit is conditional on
prescribing $T$. We next ask whether it survives when both loop counts
can be chosen.

\subsection{Free forward-loop count: full BPTT is optimal}
\label{sec:alloc}

When $T$ can also be chosen, we optimize the split between forward and
backward loops at equal compute. Fix the model architecture and token
count $D$. In our FLOP accounting, each forward pass through the recurrent
block costs one unit and each backward pass costs two (\Cref{app:compute-flops}). The recurrent block thus
costs $T+2B$ units per token; all other model components have the same cost
across loop settings. Holding $T+2B$ fixed therefore holds total compute
fixed: for example, $(T,B)=(4,4)$ and $(8,2)$ both cost 12 recurrent units
per token.

Fix such a loop-compute count $S=T+2B$.
With the architecture and $D$ fixed, the law \eqref{eq:law} depends on the loop setting only through
$N_{\mathrm{eff}}=N_{\mathrm{once}}+g(T,B)\cdot N_{\mathrm{rec}}$, and the loss decreases as $N_{\mathrm{eff}}$ grows.
Since $N_{\mathrm{once}}$ and $N_{\mathrm{rec}}>0$ are fixed, minimizing the loss at this compute
is equivalent to maximizing the looping coefficient $g(T,B)$, which we relax to continuous loop counts:
\begin{equation}
 \underset{T,B\in\mathbb R}{\operatorname{maximize}}\quad g(T,B)\qquad\text{subject to}\quad T+2B=S,\quad 1\leq B\leq T.
 \label{eq:law-loop-split-opt}
\end{equation}

\paragraph{The optimal split uses full BPTT.}
Recall that $g(T,B)=\exp[c\cdot(1-T^{-p}\cdot B^{-q})]$ with $c>0$.
Thus \eqref{eq:law-loop-split-opt} is equivalent to minimizing
$T^{-p}\cdot B^{-q}$ under the same affine constraints. This is a standard
convex optimization problem: whenever $q>2p$, its unique solution is
$T=B=S/3$ for every $S\geq3$, giving full \ac{BPTT}
(Lemma~\ref{lem:law-joint-loops} in \Cref{app:law-joint-loops}). The fitted exponents in
Table~\ref{tab:law-design-coefficients}, $p\simeq0.237$ and
$q\simeq1.022$, satisfy $q>2p\simeq0.474$ by a wide margin. Therefore,
the law predicts that full \ac{BPTT} is optimal at every compute
if both $T$ and $B$ can be chosen continuously.\footnote{The continuous optimum need not be an integer setting.
At the loop compute of $(4,3)$, $T+2B=10$, it is $(10/3,10/3)$, which
lies between the fully trained settings $(2,2)$ and $(4,4)$. Over a
narrow compute range, the law therefore gives $(4,3)$ a lead of at most
0.002 nats over both. Figure~\ref{fig:law-free-size} shows the same effect: $(4,3)$, $(8,4)$ and $(16,8)$ lie just below the
full-BPTT frontier. Their fully trained counterparts at equal loop compute, with $(T+2B)/3$ loops, are not integer
settings, and each needs at most 2.2\% more compute per effective parameter than its counterpart. }

\paragraph{Separately fitted laws confirm the benefit of full BPTT.}
The compute-optimal frontiers of the separately fitted laws predict
that $(4,4)$ reaches the same loss with less compute than $(8,2)$ where
both optimal allocations lie within the ranges of their runs
(Figure~\ref{fig:free-forward-isoloss}).
Matched-compute runs agree: at $D=16D_0$, $(4,4)$ achieves lower loss than $(8,2)$ at every size
(Table~\ref{tab:law-joint-loop-pairs}).

\begin{table}[t]
\centering
\begin{minipage}[t]{0.42\textwidth}
\vspace{0pt}
\captionsetup{type=table,width=\linewidth}
\centering
\caption{\textbf{Reducing forward depth at equal training compute.}
Measured validation losses at $D=16D_0$ (17.2 billion tokens).
Within each row, $(T,B)=(8,2)$ and $(4,4)$ have identical model size,
tokens, and training compute $C$, including optimizer updates.
Losses and the reduction $\Delta L=L_{8,2}-L_{4,4}$ are in nats per token.
Model sizes use nominal labels; $C$ is the training compute in FLOPs.}
\label{tab:law-joint-loop-pairs}
\small
\setlength{\tabcolsep}{0pt}
\renewcommand{\arraystretch}{1.15}
\begin{tabular*}{\linewidth}{@{\extracolsep{\fill}}lrrrr@{}}
\toprule
Size & $C$ & $L_{8,2}$ & $L_{4,4}$ & $\Delta L$ \\[-2pt]
 & {\scriptsize($10^{19}$)} & & & {\scriptsize($10^{-2}$)} \\
\midrule
60M & 2.41 & 2.5211 & 2.4933 & 2.78 \\
120M & 4.59 & 2.3843 & 2.3696 & 1.47 \\
240M & 8.28 & 2.2778 & 2.2642 & 1.36 \\
480M & 15.29 & 2.1845 & 2.1738 & 1.07 \\
1B & 28.38 & 2.1144 & 2.1057 & 0.87 \\
\bottomrule
\end{tabular*}

\end{minipage}\hfill
\begin{minipage}[t]{0.55\textwidth}
\vspace{0pt}
\centering
  \includegraphics[width=\linewidth]{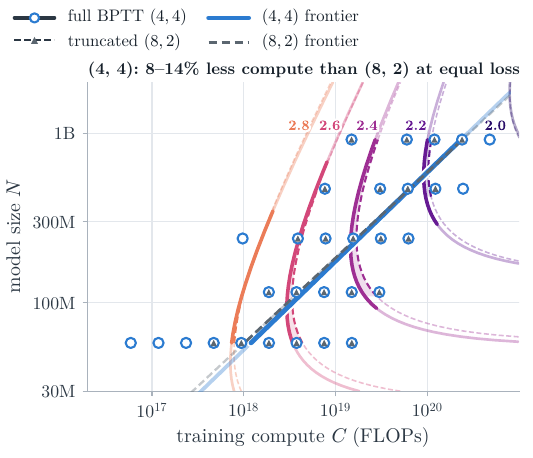}
\end{minipage}
\par\medskip
\captionsetup{type=figure,width=\textwidth}
  \caption{\textbf{Fewer fully trained loops improve the compute-optimal frontier.}
  Iso-loss contours of separately fitted laws for $(T,B)=(4,4)$ (solid)
  and $(8,2)$ (dashed), both with $T+2B=12$. Thick lines: compute-optimal frontiers; faded segments lie outside the runs of a
  setting. At equal loss, solid contours lie left of dashed ones: $(4,4)$ uses 8--14\% less
  compute, so once $T$ can be chosen, full \ac{BPTT} beats truncation.
  }
  \label{fig:free-forward-isoloss}
\end{table}

\begin{keyfinding}[Truncation helps at a prescribed forward-loop count; with both forward and backward loop counts tunable, full BPTT is optimal]
When $T$ is prescribed, the law favors truncation, and separately fitted
laws agree. When both loop counts can be chosen, full \ac{BPTT} is optimal
at every size and compute.
\end{keyfinding}

\section{Allocating compute among loops, data, and model size}
\label{sec:compute}

Given full \ac{BPTT}, it remains to divide training compute among
loops, tokens, and model size. Following
\citet{hoffmann2022training}, we summarize each allocation by how its
optimum scales with compute, $X^*\propto C^{a}$.
We first fix the model size
(\Cref{sec:law-fixed-size}) and then let it vary
(\Cref{sec:law-free-size,sec:untie}).

\subsection{Fixed model size: loops versus data}
\label{sec:law-fixed-size}

Suppose a model must fit within a device's memory budget, so adding
parameters is not an option. With more training compute available,
should we reuse those parameters through more loops per token, or
train on more tokens? At fixed size, more loops raise effective
size but leave fewer training tokens within the same compute budget;
our law balances these two.

\begin{figure}[t]
  \centering
  \includegraphics{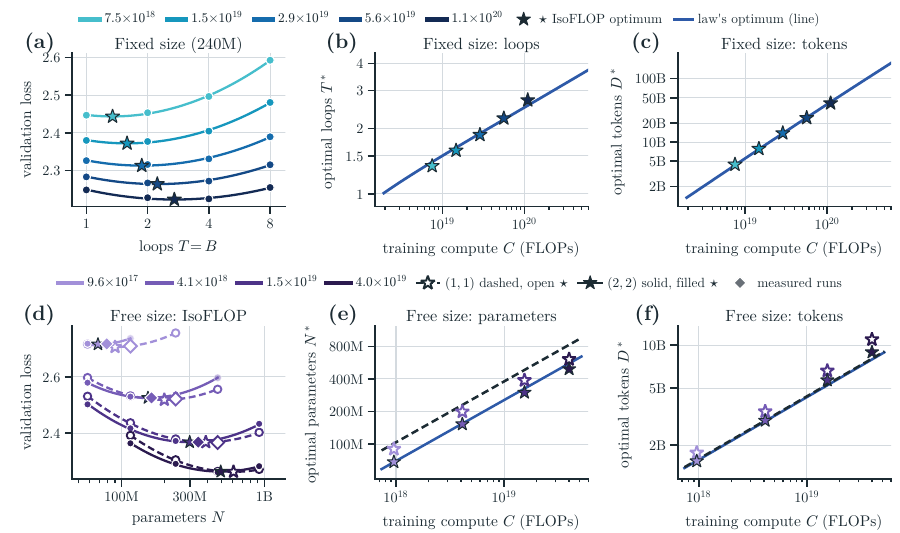}
  \caption{\textbf{How the compute-optimal allocation scales with training compute.} \textbf{Top:} fixed model size (240M).
  (a) Loss against the loop count at five computes, with the optimum at each compute marked by a star. (b, c) The optimal loop
  count $T^*$ and token count $D^*$ of the stars, and the law's optimum (line), whose exponents approach
  \eqref{eq:alloc-fixed-scaling}. \textbf{Bottom:} free model size. (d) Loss against model size at four computes for $(1,1)$
  (dashed) and $(2,2)$ (solid), with their optima (stars) and the compute-optimal pairs of \Cref{sec:untie} (diamonds). (e, f)
  $N^*$ and $D^*$ of the stars and the law's optimum \eqref{eq:alloc-free-scaling}, which places the optimum of $(1,1)$ at a
  larger model than its runs do. Colours mark compute within each row, and compute is $6N_{\mathrm{comp}}D$
  \eqref{eq:law-leading-compute}. At fixed size, added compute goes mostly to tokens while the loop count grows slowly. With
  free size, $(2,2)$ tracks $(1,1)$ at a smaller model.}
  \label{fig:alloc-scaling}
\end{figure}

\paragraph{Asymptotic loop--token allocation under full BPTT.}
The allocation power laws below describe the large-compute limit of
our fitted loss law, not a separate fit to measured optima.
Fix the architecture and parameter counts ($N_{\mathrm{rec}}>0$);
setting $B=T$ in \eqref{eq:law} gives the full-\ac{BPTT} loss
\begin{equation}
 L_N(T,D)=E+\left[
 A\cdot\left(\frac{N_{\mathrm{once}}+N_{\mathrm{rec}}
 \cdot e^{c\cdot(1-T^{-(p+q)})}}{N_0}\right)^{-\alpha}
 +H\cdot\left(\frac{D}{D_0}\right)^{-\beta}
 \right]^{\kappa}.
 \label{eq:law-full-bptt-loss}
\end{equation}
Because $\kappa>0$, minimizing \eqref{eq:law-full-bptt-loss} is equivalent
to minimizing its inner sum, $J_N=(L_N-E)^{1/\kappa}$.
As $T\to\infty$, the effective size approaches
$N_\infty=N_{\mathrm{once}}+e^c\cdot N_{\mathrm{rec}}$. Expanding around this
limit gives
\begin{equation}
 J_N(T,D)=J_\infty+K_N\cdot T^{-(p+q)}
 +H\cdot\left(\frac{D}{D_0}\right)^{-\beta}
 +O\!\left(T^{-2(p+q)}\right),
 \label{eq:law-full-bptt-expansion}
\end{equation}
where $J_\infty=A\cdot(N_\infty/N_0)^{-\alpha}$ and
$K_N=\alpha\cdot c\cdot(N_{\mathrm{rec}}\cdot e^c/N_\infty)\cdot J_\infty>0$ are fixed by
the architecture and fitted coefficients; the leading loop
term has exponent $p+q$ and the data term has exponent $\beta$.

Following the power-law allocation argument of \citet{hoffmann2022training},
we balance these two terms under a compute constraint. At fixed size,
compute per token grows linearly with $T$ at large depth ($C\propto T\cdot D$).
For continuous $T,D$, minimizing the leading terms in
\eqref{eq:law-full-bptt-expansion} gives the asymptotic scaling
\begin{equation}
 T^*\propto C^{\frac{\beta}{\beta+p+q}},\qquad
 D^*\propto C^{\frac{p+q}{\beta+p+q}}
 \quad\text{at fixed }N.
 \label{eq:law-branch-powers}
\end{equation}
\begin{figure}[t]
  \centering
  \includegraphics{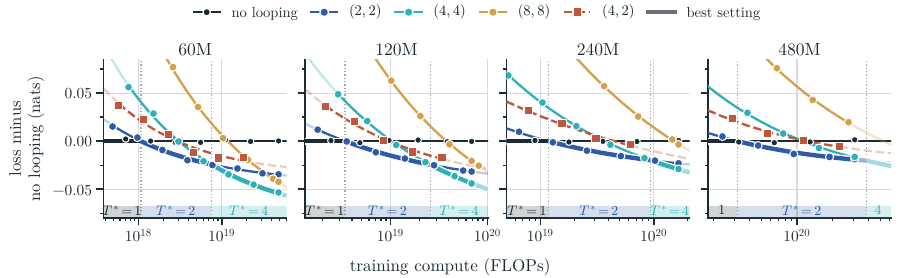}
  \caption{\textbf{At a fixed model size, the best loop count grows with training compute: no looping, then two, then four trained
  loops.} Loss minus that of the model without looping at the same compute, which spends it on more tokens; markers are runs, with a
  smooth curve through each setting; $(4,2)$ trains half of its four loops. The bold curve is the best setting, named in the strip by its
  loop count $T^*$ (faded where extrapolated). Within each stretch the best model spends added compute on tokens, and at each switch it
  trades tokens for loops.}
  \label{fig:alloc_fixed}
\end{figure}

These powers need not describe finite-compute optima, where overheads
and integer loop counts remain consequential.
\Cref{app:law-fixed-size} derives this limit from the exact stationary
condition, including the compute overhead of the non-recurrent layers.

\paragraph{Loops grow slowly; most added compute goes to data.}
Substituting the fitted $p+q\simeq1.259$ and $\beta\simeq0.393$ into
\eqref{eq:law-branch-powers} gives, with rounded exponents,
\begin{equation}
 T^*\propto C^{0.24},\qquad D^*\propto C^{0.76}.
 \label{eq:alloc-fixed-scaling}
\end{equation}
A tenfold increase in compute therefore raises the optimal loop count by
about 1.7 times and the token count by about 5.8 times. These exponents
are asymptotic: because loop counts are integers, the best setting
changes in steps. Between steps, added compute goes to
tokens; at each step, the model switches to a deeper setting with fewer
tokens. The runs show the same trend: at each size, the loop count
minimizing loss at equal compute grows with compute as predicted
(Figure~\ref{fig:alloc-scaling}a--c).

\paragraph{Comparison with prior scaling laws.}
Parcae reports empirical exponents of approximately 0.40 for mean
recurrence and 0.78 for training tokens \citep[Section~5.2]{prairie2026parcae}.
\eqref{eq:alloc-fixed-scaling} agrees that tokens grow faster than loops,
but predicts slower loop growth in the asymptotic regime. Parcae samples
recurrence and uses truncated BPTT, so its reported optimum is conditional
on that training recipe. Our full-\ac{BPTT} result suggests that
truncation may be suboptimal when both loop counts are free under our law
and compute model; whether this also holds for Parcae requires a direct
comparison. The law of \citet{schwethelm2026how} instead lets the looping
coefficient grow unbounded as $T^{\varphi}$. Our gain saturates, and $p+q$
governs the approach to that ceiling, giving a different asymptotic
allocation between loops and tokens.

\begin{keyfinding}[At fixed model size, optimal loops and tokens both grow with compute; tokens grow faster]
Under the fitted law with full \ac{BPTT}, the compute-optimal loop count
grows asymptotically as $C^{0.24}$ and the token count as
$C^{0.76}$. The runs at every size agree.
\end{keyfinding}

\subsection{Free model size: the law favors no looping}
\label{sec:law-free-size}
\begin{figure}[t]
\centering
\begin{minipage}[t]{0.48\textwidth}
\vspace{0pt}\centering
\includegraphics{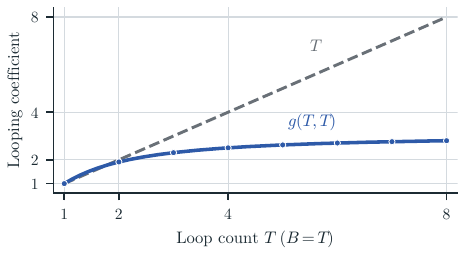}
\captionsetup{width=\linewidth}
\caption{\textbf{Loops versus untied copies.} Fitted $g(T,T)$ (solid, with markers at integer $T$) against
 $T$ untied copies (dashed). It stays below $T$ for $T\ge2$, so untied copies are worth more. The gap is smallest at $T=2$, where two loops come close to the effective size of two untied copies.}
\label{fig:loop-gain-vs-weights}
\end{minipage}\hfill
\begin{minipage}[t]{0.48\textwidth}
\vspace{0pt}\centering
\includegraphics{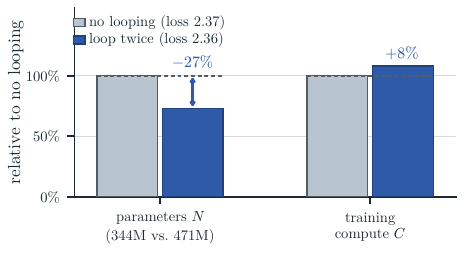}
\captionsetup{width=\linewidth}
\caption{\textbf{Two fully trained loops against no looping at $2.1\times10^{19}$ FLOPs}, as in Figure~\ref{fig:teaser}d.
 Parameters and compute of the $(2,2)$ model relative to the compute-optimal model without looping.
 With 8\% more compute, $(2,2)$ reaches a lower loss with 27\% fewer non-embedding parameters.}
\label{fig:pair-bars}
\end{minipage}
\end{figure}
When model size is free, the compute spent repeating a block can instead
train more unique parameters. Start with a $T$-loop model using full
\ac{BPTT}, for integer $T\geq2$, and replace its shared recurrent block
with $T$ untied copies. The resulting non-looped model has the same
compute $C$ and token budget $D$ under \eqref{eq:law-leading-compute},
but greater effective size:
\begin{equation}
 N_{\mathrm{once}}+g(T,T)\cdot N_{\mathrm{rec}}
 <N_{\mathrm{once}}+T\cdot N_{\mathrm{rec}}.
 \label{eq:law-nonloop-dominance}
\end{equation}

The left side is the effective size of the looped model; the right side
is that of the model with $T$ untied copies.
The inequality follows from the fitted $g(T,T)<T$
(Figure~\ref{fig:loop-gain-vs-weights}; \Cref{app:law-free-size}). The non-looped model therefore has greater
effective size and lower predicted loss at every token budget.
Figure~\ref{fig:loop-gain-vs-weights} also shows that $g(2,2)$ is close
to $2$; we examine this two-loop case in \Cref{sec:untie}.

\paragraph{The best loop setting does not change with compute.}
With parameter proportions fixed and positive, continuous $N,D$
unrestricted, every loop setting shares the compute exponents of the
model without looping. Optimizing the loop setting as well gives
\begin{equation}
 N^*\propto C^{0.56},\qquad D^*\propto C^{0.44},\qquad
 T^*=B^*=1.
 \label{eq:alloc-free-scaling}
\end{equation}
The exponents are rounded predictions under the leading-FLOP count
(\Cref{app:law-free-size}). Under these assumptions, the ranking of
loop settings is the same at every compute.

\paragraph{No looping is optimal, and two fully trained loops come closest.}
Among the fifteen settings, the law favors the model without looping
at fixed compute (Figure~\ref{fig:law-free-size}), consistent with \citet{schwethelm2026how}.
The best looped setting is $(2,2)$
(Table~\ref{tab:law-fixed-forward-opt}): $g(2,2)\approx2$, so two
shared loops provide nearly the effective size of two untied copies.
Because
$N_{\mathrm{eff}}>N$ for $(2,2)$, the law predicts that it comes close
to the model without looping with fewer trainable parameters.
\Cref{sec:untie} tests this.

\begin{keyfinding}[With free model size, no looping is optimal]
At every compute, with model size and token count free, the law
favors the model without looping
(Figure~\ref{fig:law-free-size}).
\end{keyfinding}

\subsection{Two fully trained loops are competitive with fewer parameters}
\label{sec:untie}

\paragraph{Near parity at equal compute.}
At each of three training budgets, we train a model with two fully
trained loops, $(2,2)$, and a model without looping, each at the size and
token count that minimize its loss for that budget, with three seeds per
model (\Cref{app:untie}). At every budget, the looped model stays within
0.008 nats of the model without looping while having 27--32\% fewer
non-embedding parameters (Table~\ref{tab:compute-optimal-pairs}). The gap
shrinks as the budget grows, and at $2.1\times10^{19}$
FLOPs\footnote{Including the optimizer update, as in
Table~\ref{tab:compute-optimal-pairs}; the forward and backward passes alone
take $2.0\times10^{19}$ FLOPs.} it is smaller than its standard error across
seeds. Both models sit near the flat minima of their IsoFLOP profiles, the
looped one at a smaller size (Figure~\ref{fig:alloc-scaling}d).

\paragraph{Ahead with a little more compute.}
Because the gap is so small, a little more compute lets the looped model
pull ahead. We take the pair at $2.1\times10^{19}$ FLOPs and train its
$(2,2)$ model on 8\% more tokens, and so with 8\% more compute. Its loss
then falls 0.007 nats below that of the model without looping, well beyond
the spread across seeds, while it still has 27\% fewer non-embedding
parameters (Figure~\ref{fig:pair-bars}; Table~\ref{tab:compute-optimal-pairs}).
Appendix~\ref{app:untie} also compares the looped model with an untied model, which
gives each loop its own block, at a fixed number of training tokens.

\begin{keyfinding}[Two fully trained loops nearly match the model without looping with 27--32\% fewer parameters]
In the runs, the compute-optimal $(2,2)$ model stays within 0.008 nats of
the compute-optimal model without looping at equal training compute, with
27--32\% fewer non-embedding parameters; with 8\% more compute, its loss
falls below.
\end{keyfinding}

\FloatBarrier
\section{Conclusion}
\label{sec:conclusion}

Truncated \ac{BPTT} saves compute, but whether it lowers the loss depends on which quantities are held fixed. We fit a scaling
law that expresses every loop setting as effective parameters, with a gain that saturates and shrinks under truncation
(\Cref{sec:law}). Under this law, truncation helps only when the forward-loop count is prescribed. When both loop counts are chosen,
full \ac{BPTT} is optimal at every model size and compute we analyze (\Cref{sec:truncate}).

The law also determines how to allocate compute. At fixed model size, most additional compute should go to data, and the number
of fully trained loops grows only as about $C^{0.24}$ (\Cref{sec:law-fixed-size}). When model size is free, the law favors no
looping, but two fully trained loops come close: with 27--32\% fewer non-embedding parameters they nearly match the compute-optimal
non-looped model at equal compute, and with 27\% fewer parameters they achieve lower loss at 8\% more
compute (\Cref{sec:untie}).

\paragraph{Limitations.}
Our main limitation is the focus on dense models: we do not study mixture-of-experts (MoE) architectures, so whether our
conclusions on truncation and compute allocation extend to looped MoE models remains open. Second, we measure validation loss
only and do not establish whether the observed gains translate to downstream task performance or reasoning ability.
Third, our experiments are capped at $T\le16$, so the gain ceiling and later switches to more loops remain extrapolations.
Related scaling sweeps reach eight loops in \citet{schwethelm2026how} and mean training loop counts of twelve in
Parcae \citep{prairie2026parcae}. Much deeper recurrence can exacerbate training instability, including residual explosion
and loss spikes \citep{prairie2026parcae,fu2026simply}; developing stable training recipes for that regime is beyond our scope.
Finally, we fix the ratio of non-embedding parameters in the combined prelude and coda to those in the recurrent block
for each model size. Varying this ratio to study how parameter allocation affects the gains from looping and the
compute-optimal loop counts is a direction for future work.

\section*{Acknowledgments}
We thank Yuwei Niu, Yi Ge, and Tianyu Fu for valuable feedback and helpful discussions.

\newpage
\bibliographystyle{ims}
\bibliography{refs}

\newpage
\appendix

{\makeatletter\patchcmd{\l@section}{\addvspace{1.0em \@plus\p@}}{\addvspace{0.3em \@plus\p@}}{}{}\makeatother
 \color{sectionblue}\tableofcontents}
\clearpage
\section{Extended related work}
\label{app:related}

\paragraph{Training through recurrent depth.}
Truncated \ac{BPTT} limits gradient propagation to a subset of recurrent steps, saving activation memory and backward compute at the cost of biased gradients \citep{williams1990efficient,tallec2017unbiasing}.
Looped language models often supervise the final output and backpropagate through only the last few loops.
Huginn and retrofitted models use at most eight backward steps \citep{geiping2025scaling,mcleish2025teaching}, Parcae and Iso-Depth backpropagate through half of the loops \citep{prairie2026parcae,schwethelm2026how}, URM runs the first loops without gradients \citep{gao2025urm}, and looped MoE models supervise within recurrent segments \citep{he2026looping}.
The same truncation trains looped models converted from pretrained language models \citep{park2026loopus}, recurrent-depth models for robot control and dynamics forecasting \citep{tur2026rdvla,lai2026ttf}, and looped refinement in diffusion language models \citep{sun2026freda}.
Recursive reasoning models use even shorter backward horizons.
Hierarchical recurrent models backpropagate through only the final steps \citep{wang2025hierarchical,wang2026hrmtext,han2026hierarchical}, and TRM and its successors run all but the last recursion without gradients \citep{jolicoeur2025trm,sghaier2026ptrm,asadulaev2025dis,baek2026gram,hakimi2026rsm,freinschlag2026serrm,manzari2026recursivevlm,salvatori2026trmp}.
For algorithmic extrapolation, recurrent networks and looped transformers run a random number of iterations without gradients before the supervised ones, or compute the loss over a window of the last loops \citep{bansal2022endtoend,labovich2026stability,yang2024looped}.
Implicit models replace backpropagation through the solver with implicit differentiation \citep{bai2019deep} or with one-step, Jacobian-free, and phantom gradients that differentiate only a few steps at the fixed point \citep{geng2021attention,fung2022jfb,geng2021training}.
Such approximations train implicit models for optical flow, diffusion, and language modeling \citep{bai2022deqflow,pokle2022deqdiffusion,bai2024fpdm,schone2025implicit,feinashley2026attractor}, and truncated \ac{BPTT} in looped transformers can be read as a truncated Neumann series of the fixed-point gradient \citep{movahedi2026fixedpoint,huang2026towards}.
Few studies measure what truncation costs.
\citet{feinashley2026attractor} find that the one-step gradient nearly matches the perplexity of the exact implicit gradient at a fraction of its memory, \citet{huang2026eqr} find that truncation lowers or raises Sudoku accuracy depending on the latent structure, \citet{salvatori2026trmp} find that the required gradient window grows with the propagation distance of the graph, and \citet{ren2026hrm} argue that the one-step gradient delays the stabilization of hierarchical reasoning models.
These studies compare training recipes at a fixed forward depth; we vary the backward depth independently of the forward depth, weighing the compute that truncation saves against the loop gain it costs.

\paragraph{Scaling laws for recurrence.}
Size--data scaling laws quantify how parameters and training tokens jointly determine loss \citep{kaplan2020scaling,hoffmann2022training,videau2026skaling}, and their compute-optimal predictions depend on how the laws are fitted and compute is counted \citep{besiroglu2024chinchilla,porian2024resolving,pearce2024reconciling,choshen2024hitchhiker,li2025farseer}.
Extensions account for architectural choices such as MoE sparsity \citep{clark2022unified,krajewski2024scaling,abnar2025parameters}.
For looped models, some laws fit each architecture separately \citep{wang2026smelt,lee2026sparse,chen2026growth}, while others model the loop count explicitly (Table~\ref{tab:looped_studies}) through additive loss terms \citep{zhu2025scaling} or effective parameters with linear, power-law, or bounded gains \citep{prairie2026parcae,schwethelm2026how,chen2026scaling}.
Among these scaling-law studies of looped transformers, Iso-Depth \citep{schwethelm2026how} and Parcae \citep{prairie2026parcae} are particularly related to our work.
Iso-Depth fits separate laws for full and half-length backpropagation and finds that truncation weakens the loop gain but lowers the loss at a fixed forward-loop count by freeing compute.
At fixed model size, Parcae finds that loops and tokens should grow together with compute, but ties the backward loops to half the forward loops and leaves their joint choice open.
Our law instead treats the backward-loop count as a separate variable, combining coupled size--data scaling \citep{videau2026skaling} with  bounded loop gains \citep{chen2026scaling}.
It recovers the truncation benefit at a prescribed forward-loop count but favors full \ac{BPTT} when both counts are free (\Cref{sec:truncate}).
In looped MoE models, two loops are the common choice \citep{he2026looping}: SMELT finds them better than three or four under matched budgets \citep{wang2026smelt}, and Loopie finds the marginal benefit of recurrence largest at two \citep{gao2026loop}.
For dense models, our law ranks two fully trained loops first among looped settings, since they provide nearly the effective size of two untied copies of the block while further loops add less (\Cref{sec:law-free-size}).
In our runs, two-loop models nearly match compute-optimal models without looping at equal compute while having substantially fewer non-embedding parameters (\Cref{sec:untie}).

\section{Model architecture and experimental setup}
\label{app:arch}

\subsection{Model architecture}

Figure~\ref{fig:arch} draws the model of \Cref{sec:prelim} for the loop setting $(4,2)$. In a training step
(Figure~\ref{fig:arch}a), the prelude maps the tokens to $s_0$, every loop reads $s_0$ again through the input injection, and
the state $s_{T-B}$ entering the last $B$ loops is treated as a constant. The gradient therefore reaches the coda, the last
$B$ loops and, through the injection, the prelude; the first $T-B$ loops run forward only, and evaluation runs all $T$ loops.
Each loop (Figure~\ref{fig:arch}b) applies $W_{\mathrm{inj}}$ to $[\,s_0\,;\,s_{t-1}\,]$, the $\ell_r$ pre-norm transformer
layers of $\mathcal{R}$ and an RMSNorm, with the same weights in every loop. At 2 FLOPs per parameter and token for a forward pass and 4
for a backward pass (Figure~\ref{fig:arch}c), the prelude and coda cost 6, each forward loop 2 and each backward loop 4 more,
giving $6N_{\mathrm{once}}+(2T+4B)N_{\mathrm{rec}}=6N_{\mathrm{comp}}$ FLOPs per token \eqref{eq:law-leading-compute}, before attention and
output \eqref{eq:flops}; the optimizer adds a cost per step \eqref{eq:compute}.

\subsection{Full and truncated backpropagation through the loops}
\label{app:bptt}

This appendix writes out the gradients that full and truncated \ac{BPTT} compute for the recurrence of \eqref{eq:loop}, and the
fixed-point argument that motivates truncation (\Cref{sec:prelim}).

\paragraph{The truncated gradient.}
A loop setting $(T,B)$ executes $T$ forward loops and backpropagates through only the last $B$: the state $s_{T-B}$ is detached from the computation graph, so the first $T-B$ loops run without gradients.
To make this precise, we denote for loop $t$ of \eqref{eq:loop} the Jacobian with respect to the incoming state and the derivative with respect to $\theta$ with the incoming state fixed by
\begin{equation}
  \mathcal{J}_t \;=\; \frac{\partial\Phi_\theta(s_{t-1};s_0)}{\partial s_{t-1}},
  \qquad
  \partial_\theta\Phi_t \;=\; \frac{\partial\Phi_\theta(s_{t-1};s_0)}{\partial\theta}.
\end{equation}
The loss $\mathcal{L}$ depends on $\theta$ only through $s_T$, so $\nabla_\theta\mathcal{L}=(\partial\mathcal{L}/\partial s_T) \cdot (\mathrm{d}s_T/\mathrm{d}\theta)$, and it suffices to compare $\mathrm{d}s_T/\mathrm{d}\theta$ under full and truncated \ac{BPTT}.
The chain rule sums one contribution per loop, and detaching $s_{T-B}$ keeps only the last $B$, giving the truncated gradient $g_B$:
\begin{equation}
  \frac{\mathrm{d}s_T}{\mathrm{d}\theta}
  \;=\; \sum_{t=1}^{T}\mathcal{J}_T\cdots\mathcal{J}_{t+1}\,\partial_\theta\Phi_t,
  \qquad
  g_B \;=\; \sum_{t=T-B+1}^{T}\mathcal{J}_T\cdots\mathcal{J}_{t+1}\,\partial_\theta\Phi_t ,
  \label{eq:bptt}
\end{equation}
where the empty product for $t=T$ is the identity.
Truncation keeps the $B$ contributions with the fewest Jacobian products and drops the rest. The prelude gradient is truncated in the same way.

\paragraph{Truncation from a fixed point view.}
Whether the dropped contributions matter depends on the loop dynamics, and the deep-equilibrium view \citep{bai2019deep,wang2025hierarchical} gives an idealized answer.
Suppose the loops approach a fixed point $s^\star=\Phi_\theta(s^\star;s_0)$, so that $\mathcal{J}_t\approx\mathcal{J}^\star$ and $\partial_\theta\Phi_t\approx\partial_\theta\Phi^\star$ for the loops that carry gradient.
If the spectral radius of $\mathcal{J}^\star$ is below one, the implicit function theorem gives the gradient of the fixed point, and a Neumann series expands it into the same per-loop contributions:
\begin{equation}
  \frac{\partial s^\star}{\partial\theta}
  \;=\; \bigl(I-\mathcal{J}^\star\bigr)^{-1}\partial_\theta\Phi^\star
  \;=\; \sum_{k=0}^{\infty}\bigl(\mathcal{J}^\star\bigr)^{k}\,\partial_\theta\Phi^\star,
  \qquad
  g_B \;\approx\; \sum_{k=0}^{B-1}\bigl(\mathcal{J}^\star\bigr)^{k}\,\partial_\theta\Phi^\star .
  \label{eq:neumann}
\end{equation}
Truncating to $B$ loops thus approximates the fixed-point gradient by the first $B$ terms of its Neumann series, whereas full \ac{BPTT} uses the first $T$.
The extreme case $B=1$ keeps only the leading term $\partial_\theta\Phi^\star$; this is the one-step gradient used to train the Hierarchical Reasoning Model \citep{wang2025hierarchical}.
When $\mathcal{J}^\star$ is contractive, the dropped terms shrink geometrically, which makes truncation a plausible approximation at a fraction of the backward compute.
This idealized argument, however, does not settle whether truncation pays off in practice, for two reasons.
First, it assumes that the loops reach a fixed point, whereas looped language models are usually trained with only a few loops \citep{he2026looping,wang2026smelt,gao2026loop}.
With so few iterations, the states need not come close to a fixed point, so the approximation $\mathcal{J}_t\approx\mathcal{J}^\star$ behind \eqref{eq:neumann} can fail and the contributions dropped in \eqref{eq:bptt} need not be small.
Second, even at a fixed point, the dropped terms are nonzero for $B<T$, so $g_B$ is a biased estimate of the full gradient.
The argument only bounds the error of a single gradient, whereas this bias enters every update, and how it accumulates over training and shapes the final model is not known.
Prior measurements compare recipes at a fixed forward-loop count (\Cref{sec:related}) and so cannot separate the compute that truncation saves from the loop gain it costs.
We therefore measure the net effect directly: the scaling law of \Cref{sec:law} asks whether the saved compute outweighs this bias in the loss that training actually reaches.

\subsection{Experimental setup}
\label{app:setup}

\paragraph{Models.}
We take the architecture and the training recipe of \citet{schwethelm2026how}, the reference recipe below, which builds on the
prelude--recurrent--coda layout of Huginn \citep{geiping2025scaling} (\Cref{sec:prelim}). Its blocks use pre-norm
attention with query--key normalization and rotary position embeddings, head dimension 128, a squared-ReLU feed-forward layer of
width $4d$, no biases, and untied input and output embeddings. We change only what is held fixed. \citet{schwethelm2026how} hold the
executed depth fixed, so that more loops mean fewer unique layers; we hold the unique parameters fixed, as
\citet{prairie2026parcae} do, so that a loop setting changes the compute and not the model. The main grid spans five model sizes, from
58M to 919M non-embedding parameters (Table~\ref{tab:models}); width and depth grow together, and the recurrent block with the
input injection holds 51\% of the non-embedding parameters at four sizes and 56\% at 120M. Short size names round $N$; including the untied input and output embeddings, total parameter counts run from about 107M to about 1.06B. We quote budgets per parameter as $D/N$.
The scaling suite supplements this grid with two compute-matched pairs, adding the 79M and 162M models
at one configuration each (\Cref{app:untie}).

\paragraph{Loop settings and budgets.}
The scaling suite spans 1.1--1{,}291 training tokens per parameter and uses fifteen loop settings.
Ten core settings, $(1,1)$; $(2,1)$ and $(2,2)$; $(4,1)$, $(4,2)$, $(4,3)$ and
$(4,4)$; and $(8,2)$, $(8,4)$ and $(8,8)$, cover $T\in\{1,2,4,8\}$ forward loops, full and truncated \ac{BPTT}, and
ratios $B/T$ from $\tfrac14$ to 1. Five more settings extend them: $(8,1)$, and $T=16$ with $B=1$, 4, 8 and 16 at three
sizes and short budgets (\Cref{app:law-t16,app:grid}). The part of the training compute that depends on the loops grows with
$T+2B$, since a backward loop costs twice a forward loop (\Cref{app:compute-flops}); $(4,1)$ thus costs as much per
token as $(2,2)$, and $(8,2)$ as much as $(4,4)$. Token budgets $D$ are given in units of $D_0=2^{30}\approx1.07\times10^{9}$
tokens. We group runs by their target token budget; actual token counts may differ
slightly because of batch rounding. Predictions use the actual token count $D$.
An experimental configuration is a choice of $(N,D,T,B)$; a run is one
training execution of that configuration. Each configuration contributes
one accepted validation loss to the analysis.

\paragraph{Data and evaluation.}
We train on the 100B-token sample of FineWeb-Edu \citep{penedo2024fineweb}, the corpus of the looped scaling study of
\citet{lee2026sparse}, in a pre-shuffled
release,\footnote{\url{https://huggingface.co/datasets/karpathy/fineweb-edu-100b-shuffle}} tokenized with the Llama~2
tokenizer \citep{touvron2023llama} of the reference recipe \citep{schwethelm2026how} into sequences of 2{,}048 tokens. We report cross-entropy in nats per token on the last
two shards of the release, which no run trains on, after removing the documents whose exact text also appears in the training
data (the same web page often occurs in several crawls).

\paragraph{Optimization.}
We use the optimizers of the reference recipe \citep{schwethelm2026how}. MuonH, a variant of Muon that keeps each weight matrix at a fixed
Frobenius norm \citep{jordan2024muon,wen2025hyperball}, trains the hidden weight matrices (attention, feed-forward and input injection), and
AdamW \citep{loshchilov2019decoupled} trains the embedding, output and normalization parameters, both without weight decay. The MuonH update adds 0.06--5.7\% to the compute of
the forward and backward passes; the analysis leaves it out, and the measured compute we report includes it
(\Cref{app:compute-flops}). The peak learning rates follow the reference's
transfer rule \citep{schwethelm2026how,yang2021tensor,ren2026rethinking}, a power law in the batch and training tokens with the token
exponent re-estimated for our budgets, and the global batch grows as $64\sqrt{D/D_0}$ sequences, following the square-root
scaling of the loss-optimal batch in prior work \citep{li2025predictable,bergsma2025power,filatov2025optimal}
(\Cref{app:lrbs}).
The learning rate warms up over the first 2\% of steps and decays with a cosine to 10\% of its peak; gradients
are not clipped.

\paragraph{Runs and compute.}
The results in this paper come from 592 training runs with $2.7\times10^{22}$ FLOPs of training compute, counted as in
\Cref{app:compute-flops} with the optimizer update. The scaling suite contains 397 runs behind 347 configurations:
385 runs in the five-size grid (\Cref{app:grid}) and 12 runs in two compute-matched pairs (\Cref{app:untie}).
Each configuration's loss is the mean of its accepted runs. The remaining 195 distinct runs support the other experiments.
These experiments comprise four more runs at 480M in
Figure~\ref{fig:alloc_fixed}, nine further runs of the compute-optimal pairs and the 36 of the untied comparison
(\Cref{app:untie}), and the 66 learning-rate and 124 batch-size runs of Figures~\ref{fig:app-lr} and~\ref{fig:app-bs}; 44
runs belong to two of these groups and are counted once. Counting cancelled runs, runs replaced by reruns, pilot runs and
earlier calibration sweeps that the paper does not report, the study trained more than 1{,}100 runs with about
$4.8\times10^{22}$ FLOPs.

\section{The law in detail}
\label{app:law}

This appendix details the measured patterns behind the law, its fit and validation, the bounded gain,
and the runs at sixteen loops.

\subsection{What the runs show before any law is fitted}
\label{app:law-runs}
The measured losses separate the effects of forward and backward loops without a fitted law (\Cref{app:grid}). Forward loops that
receive no gradient add almost nothing at first: at fixed $B$, doubling $T$ from $B$ to $2B$ changes the loss by less than 0.007 nats
in all 24 size--budget combinations trained in all ten core settings, while doubling it again, from $2B$ to $4B$, lowers the
loss in all 24. Spending the same compute per token on backward loops is better: $(2,2)$ beats $(4,1)$ and $(4,4)$ beats $(8,2)$ in all 24,
by 0.023 nats on average for $(2,2)$ against $(4,1)$. Backward loops in turn have diminishing returns: with full \ac{BPTT},
every doubling of $T=B$, from 1 to 8, lowers the loss, by less each time, in all 24 combinations where all four settings were
trained.

\subsection{Empirical looping coefficient}
\label{app:law-size-estimates}
The looping coefficient $g(T,B)$ in \eqref{eq:law} rescales the recurrent
block's parameter count. To estimate it from the runs, we first find the
non-looped model size that matches a looped run's measured loss at the
same token budget. This defines the empirical effective size, which we
then convert to a looping coefficient using the run's parameter counts.

\paragraph{Reference interpolation.}
Let $d=D_{\mathrm{target}}/D_0$ denote a target budget level, where $D_{\mathrm{target}}$ is the target token count and
$D_0=2^{30}$ tokens. At each $d$, we collect the accepted non-looped
$(T,B)=(1,1)$ configurations and order their actual parameter counts and measured
losses as $(n_j,\ell_j)_{j=1}^{m_d}$, where
$n_1<\cdots<n_{m_d}$. Each configuration contributes its accepted loss once,
following the same aggregation rules as the fit.
We require at least three distinct sizes and strictly decreasing losses,
$\ell_1>\cdots>\ell_{m_d}$, as holds for the eligible reference groups.
We interpolate log size as a function of loss:
\begin{equation}
 \phi_d(\ell) \coloneqq
 \operatorname{PCHIP}\!\left[
   (\ell_j,\log n_j)_{j=m_d,\ldots,1}
 \right](\ell),
 \qquad \ell\in[\ell_{m_d},\ell_1].
 \label{eq:empirical-size-interpolation}
\end{equation}
Here PCHIP denotes shape-preserving piecewise cubic Hermite interpolation
\citep{fritsch1984method}, with reference points ordered by increasing loss.
The interpolant in \eqref{eq:empirical-size-interpolation} passes through
each measured point, $\phi_d(\ell_j)=\log n_j$, and preserves monotonicity
between points. This loss-to-size conversion uses only the non-looped
measurements, independent of the fitted coefficients of \eqref{eq:law}.

\paragraph{Coefficient of a looped configuration.}
For a looped configuration $i$, let $L_i$ be its measured loss, $d_i$ its target
budget, and $N_{\mathrm{once},i},N_{\mathrm{rec},i}>0$ its
once-run and recurrent counts; when its reference group
is eligible and $L_i\in[\ell_{m_{d_i}},\ell_1]$, define
\begin{equation}
 \widehat N_{\mathrm{eff},i}\coloneqq\exp\!\bigl[\phi_{d_i}(L_i)\bigr],
 \qquad
 \widehat g_i\coloneqq
 \frac{\widehat N_{\mathrm{eff},i}-N_{\mathrm{once},i}}
      {N_{\mathrm{rec},i}}.
 \label{eq:empirical-looping-coefficient}
\end{equation}
In \eqref{eq:empirical-looping-coefficient}, we subtract the once-run
parameters and divide by the recurrent block's size, as in
\eqref{eq:law}. Each empirical coefficient depends on the
run's size and token budget; $g(T,B)$ is the law's shared prediction.
We report an estimate only when the reference group is eligible and the
measured loss lies within its observed range, without extrapolation.
Budget matching uses the target level $d$, keeping batch-rounded budgets
in the same group. Actual consumed tokens differ from the target token budget
by less than $0.2\%$ in these runs; we leave these differences uncorrected.

For example, the 239.24M non-embedding-parameter run at $T=B=2$ and $D=4D_0$ has loss
$2.4272$ nats per token. The reference interpolation gives
$\widehat N_{\mathrm{eff}}=345.20$M, hence
$\widehat g=(345.20-117.98)/121.26\simeq1.874$, using its own once-run
and recurrent counts in millions of parameters.

\paragraph{Summaries across configurations.}
For each setting $(T,B)$, let $\mathcal I_{T,B}$ be the set of eligible
size--budget configurations. Figure~\ref{fig:gain-surface} shows the median of
$\{\widehat g_i:i\in\mathcal I_{T,B}\}$, with the 25th and 75th
percentiles as the interquartile range. Configurations have equal weight; for
sorted estimates $x_{(1)},\ldots,x_{(n)}$, a percentile at probability
$u$ linearly interpolates adjacent order statistics at rank
$1+(n-1)u$. Figure~\ref{fig:law_parts} uses the same loss-matched estimates,
with total-size ratios shown where indicated. The surface figure includes
163 eligible configurations with $T\le8$: 121 from the fitting set and 42 from
the separately held-out set. Reference configurations may also come from either
set, and estimates that use the same references are dependent. These
figures describe agreement with the runs rather than independent
validation of the law. The interquartile ranges show variation across
sizes and budgets, not confidence intervals. \Cref{app:law-fit} reports
the separate holdout evaluations in loss units.

Figure~\ref{fig:backward-nats} uses the same estimator on the
150 eligible configurations with $T\in\{2,4,8\}$ and $1/4\le r=B/T\le1$.
Markers show medians and bars show minima and maxima. For each $T$,
we fit an intercept $a_T$ and slope $s_T$ by ordinary least squares with equal weight per cell:
\begin{equation}
 (\widehat a_T,\widehat s_T)
 =\operatorname*{arg\,min}_{a,s}
 \sum_{i:T_i=T}\left[a+s\log_2r_i-\log_2\widehat g_i\right]^2.
 \label{eq:empirical-looping-slope}
\end{equation}
In \eqref{eq:empirical-looping-slope}, the fitted slopes are $0.823$, $0.446$
and $0.173$ for $T=2,4,8$, respectively; these lines summarize the empirical
trend across the 150 configurations rather than a separate scaling law.

\subsection{Form and fit}
\label{app:law-fit}

\paragraph{Functional form.}
The looping coefficient is
\begin{equation}
 g(T,B)=\exp[c(1-T^{-p}B^{-q})],\qquad
 N_{\mathrm{eff}}=N_{\mathrm{once}}+g(T,B)N_{\mathrm{rec}}.
\end{equation}
The normalization is exact: $g(1,1)=1$, so $(T,B)=(1,1)$ gives
$N_{\mathrm{eff}}=N$. For $c,p,q>0$, $g(T,B)$ increases in $T$ and in $B$ toward $e^c$.
The fitted ceiling of the looping coefficient is $e^c\simeq3.04$;
it is an extrapolation, not an observed asymptotic limit. With full \ac{BPTT} the looping coefficient is
$g(T,T)=\exp[c(1-T^{-(p+q)})]$; for $B<T$ it is smaller by the factor
$\exp[-cT^{-(p+q)}(r^{-q}-1)]$, $r=B/T$, which approaches one as $T$ grows.

\paragraph{Data split.}
We split the 347 scaling-suite configurations into 230 for fitting and 117 for testing.
The fitting set spans fifteen loop settings and includes 226 configurations from the five-size grid, covering
five model sizes at approximately 9, 18, 37, or 74 training tokens per parameter, giving the sizes
a common budget range. Of these 226 configurations, 194 use the eleven settings with at most eight loops
(a few at the largest budgets were not run) and 32 use $T=16$.
The other four fitting configurations are also used to compare a looped model with a non-looped model at each of two
training-compute budgets (\Cref{app:untie}), adding sizes 79M and 162M to the grid's five sizes.
Each configuration enters once through the mean loss of its accepted runs, aggregating
repeated runs and seeds and removing duplicate exports. The other 117 configurations of the scaling suite, mostly at shorter or
longer budgets, enter no fit and serve as separate tests. Predictions use actual consumed tokens and exact
parameter counts. The fixed
normalizations are $N_0=2.4\times10^8$ and $D_0=2^{30}$.

\paragraph{Solver.}
We minimize equally weighted squared residuals in nats per token. All nine
coefficients are refitted on every training fold using 64 independent
uniform random starts, retaining the lowest-cost converged solution.
The trust-region reflective solver uses an analytic Jacobian, Jacobian
scaling, at most 4000 evaluations, and tolerances
$\mathrm{ftol}=\mathrm{xtol}=\mathrm{gtol}=10^{-12}$.
Seeds are 20261010 plus the deterministic fold index. No full-data
coefficients initialize a held-out fold, and no learned preprocessing,
weighting or hyperparameter tuning is used.

In optimization coordinates, $E\in[0,3]$,
$\log A,\log H\in[-10,15]$, $\alpha,\beta,\kappa\in[0.01,3]$,
$p,q\in[0.001,10]$, and
$\log c\in[\log0.001,\log10]$.
Initial ranges are $E\in[0.3,1.2]$, $\log A,\log H\in[0.5,2.5]$,
$\alpha,\beta\in[0.25,0.65]$, $\kappa\in[0.15,0.45]$,
$p,q\in[0.1,2]$, and
$\log c\in[\log0.2,\log2]$. The power-law gain of \Cref{app:law-bounded} uses the same
ranges for the first six coefficients and $\phi_T,\phi_B\in[-5,5]$, initialized in $[0.05,0.8]$.
All 264 fits of each law converge with full-rank Jacobians (the largest-size holdout reaches $E=0$).

\paragraph{Validation and aggregation.}
Residuals are prediction minus measurement. Point LOO holds out one entire
accepted configuration. Size folds hold out all
combinations at the selected model size. Budget folds group the target level
$D_{\mathrm{target}}/D_0$, retaining batch-rounded versions together. Setting folds hold out
an entire $(T,B)$ pair.
Each family covers all 230 configurations. Point LOO trains on 229
configurations and tests on the remaining one, repeating this for all 230.
Table~\ref{tab:law_validation} reports pooled errors, which weight held-out
configurations equally. Every fold converges
and none is dropped. \Cref{app:grid} overlays the law on the 226 fitted
grid configurations; the four fitted configurations from the compute-matched pairs are not shown there.

\begin{table}[!htbp]\centering\small
\caption{Pooled errors in nats/token over all 230 configurations; all 264 fits converge (one at a bound).}\label{tab:law_validation}
\setlength{\tabcolsep}{8.5pt}\begin{tabular}{@{}lrrrrr@{}}\toprule
Evaluation & Fits & RMSE & MAE & Bias & Max. error \\
\midrule
In-sample & 1 & $6.14\times 10^{-3}$ & $4.91\times 10^{-3}$ & $-3.73\times 10^{-16}$ & $2.04\times 10^{-2}$ \\
One configuration out & 230 & $6.40\times 10^{-3}$ & $5.11\times 10^{-3}$ & $-3.30\times 10^{-6}$ & $2.10\times 10^{-2}$ \\
One model size $N$ out & 7 & $7.48\times 10^{-3}$ & $5.89\times 10^{-3}$ & $-2.16\times 10^{-4}$ & $2.58\times 10^{-2}$ \\
One token budget $D$ out & 11 & $6.42\times 10^{-3}$ & $5.12\times 10^{-3}$ & $-3.42\times 10^{-5}$ & $2.08\times 10^{-2}$ \\
One $(T,B)$ setting out & 15 & $8.67\times 10^{-3}$ & $6.74\times 10^{-3}$ & $8.86\times 10^{-4}$ & $2.63\times 10^{-2}$ \\
\bottomrule\end{tabular}\end{table}
\begin{table}[!htbp]\centering\small
\begin{minipage}[t]{0.6\textwidth}\vspace{0pt}\centering\captionsetup{type=table,width=\linewidth}
\caption{The law against the power-law gain of prior work \citep{schwethelm2026how}: $g(T,B)=T^{\phi_T}B^{\phi_B}$ in place of the bounded gain, which reduces to $T^{\phi}$ with full \ac{BPTT}, everything else of \eqref{eq:law} unchanged, fitted on the same configurations with the same folds and solver. Pooled RMSE in nats per token.}\label{tab:law-comparison}
\setlength{\tabcolsep}{6pt}\renewcommand{\arraystretch}{1.55}\begin{tabular}{@{}lrr@{}}\toprule
Evaluation & Bounded gain & Power-law gain \\
\midrule
In-sample & $6.14\times 10^{-3}$ & $1.31\times 10^{-2}$ \\
One configuration out & $6.40\times 10^{-3}$ & $1.36\times 10^{-2}$ \\
One model size $N$ out & $7.48\times 10^{-3}$ & $1.40\times 10^{-2}$ \\
One token budget $D$ out & $6.42\times 10^{-3}$ & $1.35\times 10^{-2}$ \\
One $(T,B)$ setting out & $8.67\times 10^{-3}$ & $1.65\times 10^{-2}$ \\
\bottomrule\end{tabular}
\end{minipage}\hfill
\begin{minipage}[t]{0.36\textwidth}\vspace{0pt}\centering\captionsetup{type=table,width=\linewidth}
\caption{Looping coefficient and total effective size; total size uses recurrent share $f=0.507$.}\label{tab:law_neff}
\setlength{\tabcolsep}{6pt}\begin{tabular}{@{}lrr@{}}\toprule
Setting & $g(T,B)$ & $N_{\mathrm{eff}}/N$ \\
\midrule
$(1,1)$ & 1.00000 & 1.00000 \\
$(2,1)$ & 1.18330 & 1.09294 \\
$(2,2)$ & 1.91023 & 1.46150 \\
$(4,1)$ & 1.36498 & 1.18505 \\
$(4,2)$ & 2.04945 & 1.53209 \\
$(4,4)$ & 2.50361 & 1.76236 \\
$(8,2)$ & 2.17552 & 1.59601 \\
$(8,4)$ & 2.57832 & 1.80024 \\
$(8,8)$ & 2.80330 & 1.91431 \\
$(16,16)$ & 2.93895 & 1.98308 \\
\bottomrule\end{tabular}
\end{minipage}
\end{table}

\subsection{Why a bounded gain}
\label{app:law-bounded}
Prior scaling laws for looped models let the looping gain grow as a
power of the loop count, $T^{\phi}$ \citep{schwethelm2026how}. We extend this form to
$T^{\phi_T}B^{\phi_B}$ in place of $g(T,B)$, keeping the rest of \eqref{eq:law} unchanged.
This form is the limit of our gain as $c\to\infty$ with $cp\to\phi_T$ and $cq\to\phi_B$;
under full \ac{BPTT} it reduces to $T^{\phi}$ with $\phi=\phi_T+\phi_B$.
We fit both laws on the same configurations, folds and solver. The power-law gain is less accurate in every
evaluation in Table~\ref{tab:law-comparison}, with nearly twice the error on a held-out $(T,B)$ setting ($1.65\times10^{-2}$ vs.\ $8.67\times10^{-3}$ nats);
it keeps growing where the runs level off, and held out at sixteen loops its error is three times
that of the bounded gain (\Cref{app:law-t16}). Our gain rises
from $g(1,1)=1$ toward $e^c$; with full \ac{BPTT} and $p+q=1$ it is the reciprocal
exponential gain $\exp[c(1-1/T)]$, and free exponents separate the ceiling
from the rate of approach.

The derivatives distinguish the empirical assumption from its algebraic
consequences. At fixed coefficients, with $s=p+q$, the total derivatives
below follow the full-\ac{BPTT} path $B=T$:
\begin{equation}
\begin{aligned}
 \frac{\partial\log g}{\partial T}&=cpT^{-p-1}B^{-q}>0,\qquad
 \frac{\partial\log g}{\partial B}=cqT^{-p}B^{-q-1}>0,\\
 \frac{\mathrm{d}}{\mathrm{d}T}g(T,T)&=csT^{-s-1}g(T,T)>0,\\
 \frac{\mathrm{d}^2}{\mathrm{d}T^2}g(T,T)&=csT^{-s-2}g(T,T)\,[csT^{-s}-(s+1)].
\end{aligned}
\end{equation}
For the fitted coefficients, $cs<s+1$, so the full-\ac{BPTT} gain is concave for
$T\ge1$. Concavity is not automatic for every positive parameter choice.
The effective size is an equivalence in validation loss, not in stored
knowledge \citep{zhu2025scaling}.

\subsection{Runs at sixteen loops}
\label{app:law-t16}
The fit contains 32 configurations at $T=16$, across 120M, 240M and 480M,
with backward counts 1, 4, 8 and 16 and target token budgets from $2D_0$ to
$8D_0$. Holding out this whole row leaves 198 training configurations and gives
RMSE 0.0115 nats, against 0.0340 for the power-law gain of prior work,
which keeps growing beyond the loop counts it is fitted on. The harder
extrapolation is the row without looping, $T=1$: RMSE 0.0172 (0.0207 for
the power-law gain), the largest forward-row error. The four sixteen-loop
configurations of 480M at $2D_0$ lie outside the fitting set.

\section{Training compute and allocation}
\label{app:compute}

We first define the training-compute cost, then discuss the fixed-size switch points and their experimental checks.
\Cref{app:law-allocation} gives the allocation derivations for prescribed forward loops, joint loop counts,
and fixed or free model size.

\subsection{Training compute}
\label{app:compute-flops}
Training compute counts the FLOPs of the forward and backward passes. We use the usual
convention of 2 FLOPs per parameter and token for the forward pass and 4 for the backward pass
\citep{kaplan2020scaling,hoffmann2022training}. The prelude and coda run forward and backward once per token. Each forward loop runs
the recurrent block forward once, and each backward loop adds a backward pass through it, which costs twice as much. The compute of
the passes per token is
\begin{equation}
  F(T,B) \;\approx\; 6N_{\mathrm{once}}+(2T+4B)\,N_{\mathrm{rec}} + F_{\mathrm{attn}}(T,B) + F_{\mathrm{out}}
  \;=\; 6N_{\mathrm{comp}} + F_{\mathrm{attn}}(T,B) + F_{\mathrm{out}},
  \label{eq:flops}
\end{equation}
where $F_{\mathrm{attn}}$ counts the attention products of every executed layer and $F_{\mathrm{out}}$ the output
projection.\footnote{We use actual layer counts for numerical compute accounting. Each transformer layer has $12d^2$
matrix parameters and input injection has $2d^2$, so the exact matrix contribution under our counting convention is
$72d^2(\ell_p+\ell_c)+4d^2(T+2B)(6\ell_r+1)$. With width $d$, sequence length $S$ and vocabulary size $V$ (32{,}064 with padding):
$F_{\mathrm{attn}}=4Sd\,[\,3(\ell_p+\ell_c)+(T+2B)\,\ell_r\,]$ and $F_{\mathrm{out}}=6Vd$. Embedding lookup, normalization, and activations
are omitted.}
The attention products of the recurrent block follow the same count, one unit per forward loop and two more per backward loop, so
every loop-dependent term grows with $T+2B$, as in the accounting of truncated \ac{BPTT} by \citet{prairie2026parcae} and
\citet{schwethelm2026how}; for a given model, $(2,2)$ and $(4,1)$ therefore cost the same per token, as do $(4,4)$ and $(8,2)$.

For a model of size $N$ we write this compute per token as $F_N(T,B)=u_N+v_N(T+2B)$, where $u_N$ collects the prelude, coda and
output and $v_N>0$ one forward pass through the recurrent block with its attention, so that $D$ training tokens cost
\begin{equation}
  C \;\simeq\; F_N(T,B)\,D .
  \label{eq:law-budget}
\end{equation}
At fixed $N$ and $D$, the compute depends on the loop counts only through $T+2B$. \eqref{eq:law-leading-compute} is its
leading term, without the attention and output costs.

\paragraph{The optimizer update is negligible.}
MuonH also spends FLOPs on each update: five Newton--Schulz iterations per weight matrix \citep{jordan2024muon}, a cost
$F_{\mathrm{opt}}$ per step that depends on the model but not on $T$ or $B$, so that a run of $n_{\mathrm{step}}$ steps costs
\begin{equation}
  C \;=\; F(T,B)\,D + F_{\mathrm{opt}}\,n_{\mathrm{step}} .
  \label{eq:compute}
\end{equation}
Across our runs the update adds 0.06--5.7\% to the compute of the passes. It cannot change the split between forward and backward
loops, since it does not depend on either; leaving it out moves the fixed-size switch points of \Cref{sec:law-fixed-size} by less
than 0.5\%, and it gives the models with more trainable parameters, the untied model and the model without looping at its
compute-optimal size, 0.5--0.75\% more compute, or about 0.001 nats in loss (\Cref{app:untie}). Like prior scaling studies
\citep{schwethelm2026how,prairie2026parcae}, our analysis therefore omits it and uses \eqref{eq:law-budget}; the measured compute in
figures, tables and \Cref{app:setup} includes it unless stated otherwise.

\subsection{Switch points}
\label{app:compute-switch}
At a fixed size, each setting traces a loss curve in $C$, and the best setting at each compute lies on their lower envelope
(Table~\ref{tab:fixed_switch}). Full \ac{BPTT} reaches the envelope almost everywhere: among all fifteen trained settings, four of them
with $T=16$, the best setting at every compute is $(1,1)$, $(2,2)$, $(4,4)$, $(8,8)$ or $(16,16)$, apart from a narrow compute range between $(2,2)$ and $(4,4)$ where $(4,3)$ leads by at most 0.002 nats on our integer menu (\Cref{sec:law-fixed-size}). Truncation lowers the compute per
token but gives up the gain of the loops it does not train, and at equal compute per token a setting that trains all of its fewer
loops is the better buy, as the runs show directly (\Cref{app:law-runs}). Measured in the tokens per parameter that the
model without looping trains on at the same compute, two loops become the best choice at about 14--17, four overtake two at about 312--413,
and eight overtake four at about 7{,}000--9{,}600 (sixteen pay only beyond 147{,}000--210{,}000).

\begin{table}[t]
  \centering
  \caption{Model size fixed: the training compute at which the best loop setting changes, given as the tokens per parameter that
    the model without looping trains on at that compute. Among all fifteen trained settings, four of them with $T=16$, only full
    \ac{BPTT} reaches the frontier, apart from $(4,3)$, which leads briefly between $(2,2)$ and $(4,4)$ by at most 0.002 nats.
    $^\dagger$Beyond the compute of the runs of that size.
    $^\ddagger$Involves $T=16$, trained at three sizes and short budgets only.}
  \label{tab:fixed_switch}
  \small
  \setlength{\tabcolsep}{9pt}
  \begin{tabular}{@{}lccccc@{}}
    \toprule
    Best setting changes & 60M & 120M & 240M & 480M & 1B \\
    \midrule
    $(1,1)\to(2,2)$ & 16 & 17 & 16 & 15 & 14 \\
    $(2,2)\to(4,4)$ & 400 & 413 & 374$^\dagger$ & 345$^\dagger$ & 312$^\dagger$ \\
    $(4,4)\to(8,8)$ & 9{,}622$^\dagger$ & 9{,}648$^\dagger$ & 8{,}605$^\dagger$ & 7{,}833$^\dagger$ & 7{,}025$^\dagger$ \\
    $(8,8)\to(16,16)$ & 210{,}087$^\ddagger$ & 204{,}851$^\ddagger$ & 182{,}247$^\ddagger$ & 164{,}533$^\ddagger$ & 146{,}710$^\ddagger$ \\
    \bottomrule
  \end{tabular}
\end{table}

\subsection{The runs at a fixed size}
\label{app:compute-runs}
The runs confirm the first switch and the role of the trained loops (Figure~\ref{fig:alloc_fixed}; Figure~\ref{fig:teaser}c
previews the 120M slice). Beyond
about 30 tokens per parameter, $(2,2)$ ends below the model without looping at every point we measured, by about 0.01 nats at 50
tokens per parameter and by 0.025 at 200. Loops without gradient do not win: $(4,1)$, which costs as much per token as $(2,2)$ but
trains only one of its four loops, stays above the model without looping up to 200 tokens per parameter. At 60M, 120M and 240M, where the
runs reach several hundred tokens per parameter, $(4,4)$ overtakes $(2,2)$, as the law predicts, but already at about 200 tokens per
parameter rather than at the law's 374--413.
At one compute per size far into this range, that of $(4,4)$ at $32D_0$ for 60M and 240M and at $24D_0$ for 120M (1{,}291, 533
and 335 tokens per parameter of the model without looping), the runs read 2.514, 2.389 and 2.247 for the model without looping,
2.479, 2.358 and 2.223 for $(2,2)$, and 2.460, 2.345 and 2.218 for $(4,4)$: $(4,4)$ is the lowest, 0.019, 0.012 and 0.006 nats below
$(2,2)$, while $(4,3)$ at the same compute trails $(4,4)$ by 0.010, 0.008 and 0.006 and $(8,8)$ lies above it at all three sizes.
At 480M the same compute is reached at 174 tokens per parameter, before the switch: the model without looping reads 2.141, $(2,2)$ 2.121
and $(4,4)$ 2.124, so $(2,2)$ and $(4,4)$ are level and both stay below the model without looping.
The runs with $T=16$, trained at $2D_0$--$8D_0$ (\Cref{app:law-t16}), lie well below the compute where eight or sixteen loops pay and stay above $(4,4)$ at equal compute.

\subsection{Allocation identities and their assumptions}
\label{app:law-allocation}

We follow the allocation regimes of \Crefrange{sec:law-fixed-forward}{sec:law-free-size}:
first prescribe $T$, then optimize the split between $T,B$, and finally
allocate loops, data, and model size. The prescribed-$T$ and free-size
results use the leading matrix-FLOP approximation
\eqref{eq:law-leading-compute}; the joint-loop and fixed-size
calculations use \eqref{eq:law-budget}, with the attention and output
costs. All omit the optimizer update (\Cref{app:compute-flops}).

\subsubsection{Fixed forward-loop count: optimize size, data, and backward loops}
\begin{lemma}[Optimal backward-loop allocation]
\label{lem:law-fixed-forward}
Fix the integer forward-loop count $T\ge1$ and training-compute budget
$C>0$. Suppose the loss follows \eqref{eq:law}, with positive fitted
coefficients $E,A,H,\alpha,\beta,\kappa,c,p,q$ and normalization
scales $N_0,D_0>0$. Let $N=N_{\mathrm{once}}+N_{\mathrm{rec}}$ be the
parameter count. Assume the recurrent block holds a fixed fraction
$\rho\in[0,1]$ of parameters as $N$ varies ($N_{\mathrm{rec}}=\rho N$,
$N_{\mathrm{once}}=(1-\rho)N$).
Under the leading-FLOP approximation $C=6D N_{\mathrm{comp}}$,
for each $B\in\{1,\ldots,T\}$ let $L^*_{T,B}(C)$ be the minimum loss
over $N,D>0$ at budget $C$; then
\begin{equation}
 \operatorname*{arg\,min}_{B\in\{1,\ldots,T\}}L^*_{T,B}(C)
 =\operatorname*{arg\,min}_{B\in\{1,\ldots,T\}}\chi(T,B),
 \qquad \chi(T,B)=\frac{N_{\mathrm{comp}}}{N_{\mathrm{eff}}}.
 \label{eq:law-fixed-forward-opt}
\end{equation}
The minimizing backward counts are independent of $C$.
\end{lemma}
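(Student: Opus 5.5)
The plan is to reduce each inner problem, minimizing over $N,D$ at fixed $(T,B)$, to a single problem in the effective size alone, with $\chi$ acting as a price. Under the assumption $N_{\mathrm{rec}}=\rho N$, both $N_{\mathrm{eff}}$ and $N_{\mathrm{comp}}$ are linear in $N$:
\[
N_{\mathrm{eff}}=\bigl[(1-\rho)+\rho\,g(T,B)\bigr]N,\qquad
N_{\mathrm{comp}}=\Bigl[(1-\rho)+\rho\,\tfrac{T+2B}{3}\Bigr]N .
\]
Hence $\chi(T,B)=N_{\mathrm{comp}}/N_{\mathrm{eff}}$ depends only on $(T,B,\rho)$, not on $N$ or $C$. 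It is positive and finite, because $g>0$ and $(1-\rho)+\rho g>0$ for every $\rho\in[0,1]$.

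Next I change variables from $(N,D)$ to $(M,D)$ with $M=N_{\mathrm{eff}}$. For fixed $(T,B)$ this map is a bijection of $(0,\infty)^2$. The budget constraint $C=6DN_{\mathrm{comp}}$ becomes $C=6\chi(T,B)\,M D$. Define
\[
\Lambda(x)=\inf_{M,D>0,\;6MD=x}\Bigl\{E+\bigl[A(M/N_0)^{-\alpha}+H(D/D_0)^{-\beta}\bigr]^{\kappa}\Bigr\}.
\]
Then $L^*_{T,B}(C)=\Lambda\bigl(C/\chi(T,B)\bigr)$, and the function $\Lambda$ is the same for every $(T,B)$. In words, each loop setting is a non-looped model trained with the effective compute $C/\chi$.

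The remaining step, which I expect to be the only real obstacle, is to show that $\Lambda$ is strictly decreasing in $x$. Given $x<x'$ and any feasible $(M,D)$ for $x$, the pair $(M x'/x,\,D)$ is feasible for $x'$. Because $\alpha,\kappa,A>0$, the bracket strictly decreases, and so does the loss. This gives $\Lambda(x')\le\Lambda(x)$ immediately. For strict inequality I need the infimum to be attained, which is the delicate point, since the domain is open. Substituting $D=x/(6M)$, the bracket goes to $+\infty$ as $M\to0$ through the $M$ term and as $M\to\infty$ through the $D$ term. A continuous function on $(0,\infty)$ with these limits attains its minimum at some $M^\star$. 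Applying the strict decrease to that minimizer then gives $\Lambda(x')<\Lambda(x)$.

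With $\Lambda$ strictly decreasing, $L^*_{T,B}(C)$ is minimized exactly when $C/\chi(T,B)$ is largest, that is, when $\chi(T,B)$ is smallest. This gives the equality of argmin sets in \eqref{eq:law-fixed-forward-opt}, ties included. Since $\chi$ does not involve $C$, the minimizing backward counts are independent of $C$. As a side remark, the proof never uses the specific form of $g$ beyond positivity, so the conclusion holds for any positive looping coefficient.
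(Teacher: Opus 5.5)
Your proof is correct. It shares the paper's first reduction (Steps~1--2): with $N_{\mathrm{rec}}=\rho N$, both $N_{\mathrm{eff}}$ and $N_{\mathrm{comp}}$ are fixed multiples of $N$, so the budget becomes $6N_{\mathrm{eff}}D=C/\chi$. The finish is different, though.

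The paper solves the reduced problem explicitly. It shows the bracket is strictly convex in $\log x$ and solves the balance condition $\alpha X=\beta Y$ for $x^*$ and $y^*$. This gives the closed form $L^*_{T,B}(C)-E=K\bar C^{-\eta}\chi(T,B)^{\eta}$ in \eqref{eq:law-optimized-loss}, and the ordering by $\chi$ follows because $\eta>0$.

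You never solve for the optimum. You only need the non-looped value function $\Lambda$ to be strictly decreasing. You get this from the rescaling $(M,D)\mapsto(Mx'/x,D)$ together with existence of a minimizer, which follows from coercivity in $M$.

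Your route is more elementary and more general. It uses only two properties of the bracket: it is strictly decreasing in $M$ at fixed $D$, and its minimum along the budget line is attained. It uses nothing about $g$ beyond positivity. It also proves the frontier identity $L^*_{T,B}(C)=L^*_{1,1}(C/\chi)$ of Corollary~\ref{cor:law-frontier}(i) directly, rather than deriving it from the explicit solution.

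The paper's route buys quantitative information. It gives the explicit allocation $x^*\propto(\bar C/\chi)^{\beta/(\alpha+\beta)}$ and $y^*\propto(\bar C/\chi)^{\alpha/(\alpha+\beta)}$, uniqueness of the minimizer, and the exponent $\eta$. The paper reuses these for the compute-premium formula, the equal-compute gap, and the free-size scaling exponents. Your argument would need that extra computation to recover those statements, though it is not needed for the lemma itself.
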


\begin{proof}
The proof reduces the looped allocation problem to a non-looped one.
With a fixed recurrent fraction, the effective and compute-equivalent
parameter counts are both constant multiples of $N$, so the loop setting
enters the budget only through the factor $\chi(T,B)$. We first fix $B$
and solve the size--data allocation in closed form. The optimized loss then
depends on $B$ only through a positive power of $\chi(T,B)$, multiplied by
a factor in $C$ that is common to all backward counts, so comparing backward
counts reduces to comparing $\chi$.

\emph{Step 1: the price of an effective parameter.}
Let us first fix the backward loop count $B$. Substituting
$N_{\mathrm{rec}}=\rho N$ and $N_{\mathrm{once}}=(1-\rho)N$ into the
effective parameter count of \eqref{eq:law} and the compute-equivalent
count of \eqref{eq:law-leading-compute} makes both proportional to $N$:
\begin{equation}
 \frac{N_{\mathrm{eff}}}{N}=1-\rho+\rho g(T,B),\qquad
 \frac{N_{\mathrm{comp}}}{N}=1-\rho+\rho\,\frac{T+2B}{3},\qquad
 \chi(T,B)=\frac{1-\rho+\rho(T+2B)/3}{1-\rho+\rho g(T,B)}>0.
\end{equation}
The factor $N$ cancels in the ratio, so $\chi$ depends on the loop setting
and the recurrent share $\rho$ but not on model size. Intuitively,
$N_{\mathrm{eff}}$ measures what the model is worth in loss and
$N_{\mathrm{comp}}$ what it costs in compute, so $\chi$ is the number of
compute-equivalent parameters paid for each effective parameter. Because
this price is the same at every size, choosing $N>0$ is equivalent to
choosing $N_{\mathrm{eff}}>0$, and we optimize over the normalized
effective size $x=N_{\mathrm{eff}}/N_0>0$ directly.

\emph{Step 2: the budget as a non-looped budget.}
Define $y=D/D_0$ and $\bar C=C/(6N_0D_0)$. Since
$N_{\mathrm{comp}}=\chi N_{\mathrm{eff}}$, the budget
$C=6DN_{\mathrm{comp}}$ becomes
\begin{equation}
 \bar C=\chi xy,
 \qquad\text{equivalently}\qquad
 xy=\frac{\bar C}{\chi}.
\end{equation}
A non-looped model has $\chi=1$ and buys effective size and tokens with
$xy=\bar C$. A loop setting therefore acts like a non-looped model whose
budget is divided by $\chi$: settings with $\chi<1$ stretch the budget and
settings with $\chi>1$ shrink it. Once the size $x$ is chosen, the budget
fixes the token count $y=\bar C/(\chi x)$, so a single free variable remains.

\emph{Step 3: a one-variable objective.}
Write $J=Ax^{-\alpha}+Hy^{-\beta}$ for the bracket in \eqref{eq:law}, so
that $L=E+J^\kappa$. Since $z\mapsto E+z^\kappa$ is strictly increasing for
$z>0$ when $\kappa>0$, minimizing $L$ and minimizing $J$ have the same
minimizers. Eliminating $y$ with the budget gives
\begin{equation}
 J(x)=Ax^{-\alpha}
          +H\left(\frac{\chi x}{\bar C}\right)^{\beta}.
\end{equation}
The first term is the size penalty and falls as the model grows. The second
is the data penalty and rises with $x$, because a larger model leaves fewer
tokens within the same budget. The optimum balances these two effects, which pull the model size in opposite directions.

\emph{Step 4: the balance condition and the optimal allocation.}
Both terms are power laws in $x$, so $\log x$ is the natural variable: in it,
each term is an exponential. Holding $T,B,C$ fixed, write
$X=Ax^{-\alpha}$ and $Y=H(\chi x/\bar C)^\beta$. Then
\begin{equation}
 \frac{\mathrm{d}J}{\mathrm{d}\log x}=-\alpha X+\beta Y,
 \qquad
 \frac{\mathrm{d}^2J}{\mathrm{d}(\log x)^2}
 =\alpha^2X+\beta^2Y>0.
\end{equation}
The objective is strictly convex in $\log x$. It also diverges at both ends,
since $X\to\infty$ as $x\to0$ and $Y\to\infty$ as $x\to\infty$. A strictly
convex function that diverges at both ends has exactly one stationary
point, and that point is its global minimum. The stationarity condition
$\alpha X=\beta Y$ states that, at the optimum, a one-percent increase in
model size lowers the size penalty by exactly as much as the resulting loss
of tokens raises the data penalty. Writing it out,
\begin{equation}
 \alpha A x^{-\alpha}
 =\beta H\left(\frac{\chi}{\bar C}\right)^\beta x^\beta
 \quad\Longrightarrow\quad
 x^{\alpha+\beta}=\frac{\alpha A}{\beta H}
 \left(\frac{\bar C}{\chi}\right)^\beta.
\end{equation}
Taking the positive root and substituting it into $y=\bar C/(\chi x)$ yields
\begin{equation}
\begin{aligned}
 x^*&=\left(\frac{\alpha A}{\beta H}\right)^{1/(\alpha+\beta)}
 \left(\frac{\bar C}{\chi}\right)^{\beta/(\alpha+\beta)},\\
 y^*&=\left(\frac{\beta H}{\alpha A}\right)^{1/(\alpha+\beta)}
 \left(\frac{\bar C}{\chi}\right)^{\alpha/(\alpha+\beta)}.
\end{aligned}
\label{eq:law-free-allocation}
\end{equation}
As Step~2 anticipated, the loop setting enters only through the reduced
budget $\bar C/\chi$: the optimal effective size and token count are those of
a compute-optimal non-looped model trained with $\bar C/\chi$.
The optimal size and token count are recovered as
$N^*=N_0x^*/[1-\rho+\rho g(T,B)]$ and $D^*=D_0y^*$.

\emph{Step 5: the optimal loss.}
At the stationary point $Y^*=(\alpha/\beta)X^*$, so the two penalties stay in
a fixed proportion and $J^*=X^*+Y^*=[(\alpha+\beta)/\beta]X^*$.
Substituting $x^*$ into $X^*=A(x^*)^{-\alpha}$ gives
\begin{equation}
\begin{aligned}
 J^*&=\frac{\alpha+\beta}{\beta}A(x^*)^{-\alpha}\\
 &=\frac{\alpha+\beta}{\beta}
 A^{\beta/(\alpha+\beta)}
 \left(\frac{\beta H}{\alpha}\right)^{\alpha/(\alpha+\beta)}
 \left(\frac{\bar C}{\chi}\right)^{-\alpha\beta/(\alpha+\beta)}.
\end{aligned}
\end{equation}
Define the positive constants
\begin{equation}
 \eta=\frac{\kappa\alpha\beta}{\alpha+\beta},\qquad
 K=\left[\frac{\alpha+\beta}{\beta}
 A^{\beta/(\alpha+\beta)}
 \left(\frac{\beta H}{\alpha}\right)^{\alpha/(\alpha+\beta)}\right]^\kappa.
\end{equation}
Raising $J^*$ to the power $\kappa$ restores the loss:
\begin{equation}
 L^*_{T,B}(C)-E=K\bar C^{-\eta}\chi(T,B)^\eta.
 \label{eq:law-optimized-loss}
\end{equation}
This is the compute-optimal loss of a non-looped model at budget $\bar C/\chi$.

\emph{Step 6: comparing backward counts.}
In \eqref{eq:law-optimized-loss}, $K$ and $\eta$ depend only on the fitted
coefficients and $\bar C^{-\eta}$ only on the budget; none of them depends on
$B$. The backward count enters only through $\chi(T,B)^\eta$, which is
strictly increasing in $\chi$ because $\eta>0$. The optimized losses over
$B\in\{1,\ldots,T\}$ are therefore ordered exactly as the values of $\chi$,
so the two have the same minimizers, including any ties. This proves
\eqref{eq:law-fixed-forward-opt}.
Changing the budget multiplies every optimized excess loss by the same
factor $\bar C^{-\eta}$, which leaves their ranking, and hence the minimizing
backward counts, unchanged whenever the admissible set of $B$ is fixed.
In the limit $\rho=0$ the model has no recurrent block, so $\chi=1$ for every
$B$ and all backward counts tie, as expected with nothing to
truncate.
\end{proof}

For a restricted menu, minimize over its available backward counts.
Figure~\ref{fig:law-fixed-forward} evaluates $\chi$ for every integer $B$ at
$T=2,4,8,16$ (Table~\ref{tab:law-fixed-forward-opt} lists the optima).

\begin{figure}[t]
\centering
\includegraphics[width=0.85\linewidth,keepaspectratio]{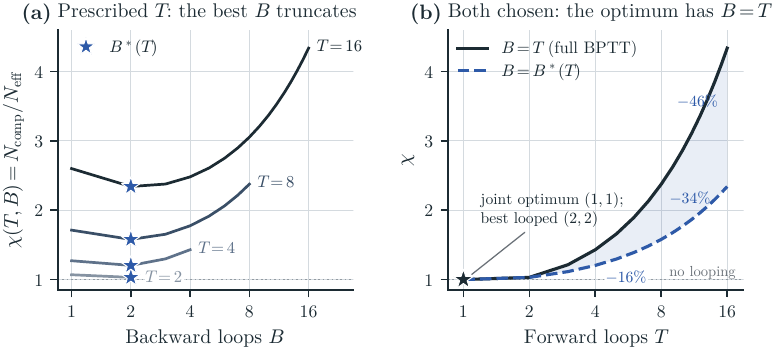}
\caption{\textbf{Truncation can be favored at a prescribed $T$, but not when $T$ and $B$ are chosen jointly.} The law's
$\chi=N_{\mathrm{comp}}/N_{\mathrm{eff}}$; lower $\chi$ means lower compute-optimal loss. (a) Stars: the best $B$ at each $T$.
(b) Percentages: compute saved by the best $B$ over $B=T$ at equal loss.}
\label{fig:law-fixed-forward}
\end{figure}

\paragraph{Separate laws for the contour comparison.}
To check the shared looping coefficient $g(T,B)$, we fit
$L=E_s+[A_s(N/N_0)^{-\alpha_s}+H_s(D/D_0)^{-\beta_s}]^{\kappa_s}$
separately to each setting $s$ at $T=4,8$ in Figure~\ref{fig:iso-study}.
The contours use compute from \eqref{eq:law-leading-compute} with
$N_{\mathrm{rec}}/N\simeq0.507$; their minimum-compute points trace the
compute-optimal frontiers. These are in-sample predictions of the
separate laws.

\paragraph{Choosing the backward-loop count.}
For a continuous relaxation, an interior optimum balances the relative
increase in compute-equivalent size against that in loss-equivalent size.
The ratio is independent of model size, so we can differentiate it
at fixed $N_{\mathrm{once}},N_{\mathrm{rec}}$. By definition,
\begin{equation}
 \frac{\partial N_{\mathrm{comp}}}{\partial B}=\frac23N_{\mathrm{rec}},
 \qquad
 \frac{\partial N_{\mathrm{eff}}}{\partial B}
 =cqT^{-p}B^{-q-1}g(T,B)N_{\mathrm{rec}}.
\end{equation}
Differentiating $\log\chi=\log N_{\mathrm{comp}}-\log N_{\mathrm{eff}}$
and setting the derivative to zero yields
\begin{equation}
 \frac{2B N_{\mathrm{rec}}}{3N_{\mathrm{comp}}}
 =cqT^{-p}B^{-q}\frac{g(T,B)N_{\mathrm{rec}}}{N_{\mathrm{eff}}}.
 \label{eq:law-backward-balance}
\end{equation}
This is an implicit equation for a continuous candidate, as both
parameter counts depend on $B$; boundary values $B=1,T$ are also
checked, and the discrete optimum evaluates $\chi$ at each
integer.

The equation also gives an explicit asymptotic form when $N_{\mathrm{rec}}>0$.
Suppose $0<p<1$ and $q>0$, as in the fitted law, and consider large $T$
with $B\ll T$.
Then $g(T,B)\to e^c$,
$N_{\mathrm{comp}}\sim TN_{\mathrm{rec}}/3$, and
$N_{\mathrm{eff}}\to N_{\mathrm{once}}+e^cN_{\mathrm{rec}}$.
Substituting these limits into \eqref{eq:law-backward-balance}
and rearranging gives
\begin{equation}
 B^*(T)\sim
 \left[\frac{cq}{2}
 \frac{e^cN_{\mathrm{rec}}}{N_{\mathrm{once}}+e^cN_{\mathrm{rec}}}
 \right]^{1/(1+q)}T^{(1-p)/(1+q)}.
\end{equation}
The exponent lies between zero and one, so this solution grows while
$B^*/T\to0$, matching the assumed limit. This is a large-$T$
prediction; finite loop counts require discrete optimization.

\subsubsection{Joint forward--backward allocation at equal compute}
\label{app:law-joint-loops}
\begin{lemma}[Optimal joint forward--backward allocation]
\label{lem:law-joint-loops}
Fix an architecture with $N_{\mathrm{rec}}>0$ and a token count $D>0$.
Suppose the loss follows \eqref{eq:law}, with positive fitted
coefficients $E,A,H,\alpha,\beta,\kappa$, $c,p,q$ and positive normalization
scales $N_0,D_0$. Let $T$ and $B$ be continuous variables. For any
$S\geq3$, consider the equal-compute problem under \eqref{eq:law-budget}:
\begin{equation}
 \underset{T,B\in\mathbb R}{\operatorname{minimize}}\quad L(N,D,T,B)
 \qquad \text{subject to}\quad T+2B=S,\qquad 1\leq B\leq T.
\end{equation}
If $q\geq2p$, its unique solution is $T^*=B^*=S/3$, corresponding to
full BPTT.
\end{lemma}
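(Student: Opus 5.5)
First reduce the loss minimization to a problem about $g$ alone, then solve that by a one-variable monotonicity argument along the constraint line. With $N_{\mathrm{once}},N_{\mathrm{rec}}>0$ and $D$ fixed, the constraint $T+2B=S$ fixes the compute under \eqref{eq:law-budget}. In \eqref{eq:law} the loop setting enters only through $N_{\mathrm{eff}}=N_{\mathrm{once}}+g(T,B)N_{\mathrm{rec}}$. Because $A,\alpha>0$, the bracket is strictly decreasing in $N_{\mathrm{eff}}$. Because $\kappa>0$, the map $z\mapsto E+z^\kappa$ is strictly increasing. So $L$ is a strictly decreasing function of $g(T,B)$. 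Since $c>0$, maximizing $g$ is in turn equivalent to minimizing $h(T,B)=T^{-p}B^{-q}$, or equivalently maximizing $\psi=p\log T+q\log B$. Uniqueness of the minimizer of $L$ then follows from uniqueness of the maximizer of $\psi$.

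Next I eliminate $T=S-2B$ and restrict to the feasible interval. The condition $B\le T$ gives $B\le S/3$, and $B\ge1$ gives the lower end. For $S\ge3$ the interval $[1,S/3]$ is nonempty, and $T=S-2B\ge S/3>0$ on it. Define $\varphi(B)=p\log(S-2B)+q\log B$, which is strictly concave on $(0,S/2)$ as a sum of concave logs. Its derivative is
\begin{equation}
 \varphi'(B)=\frac{q}{B}-\frac{2p}{S-2B}=\frac{q(S-2B)-2pB}{B(S-2B)}.
\end{equation}
The numerator is decreasing in $B$ and vanishes at $B_0=qS/(2(p+q))$. Note that $B_0\ge S/3$ exactly when $3q\ge 2p+2q$, that is, $q\ge 2p$.

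Under $q\ge2p$, every $B<S/3$ satisfies $B<B_0$, hence $\varphi'(B)>0$. So $\varphi$ is strictly increasing on $[1,S/3]$ and attains its unique maximum at the right endpoint $B=S/3$, which gives $T=S-2S/3=S/3$. In the boundary case $q=2p$, the stationary point coincides with the endpoint. Strict increase on $[1,S/3)$ still holds, so uniqueness is preserved. The degenerate case $S=3$ has a single feasible point, $(1,1)$, which again equals $(S/3,S/3)$.

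I do not expect a real obstacle. The only care points are checking that the reduction from $L$ to $g$ is strict, so that uniqueness transfers, and handling the equality case $q=2p$, where the argument must rely on the sign of $\varphi'$ rather than on an interior stationary point. I will also note explicitly that only the combination $T+2B$ of the loop counts enters the compute \eqref{eq:law-budget}. This is what justifies treating the constraint as an equal-compute constraint with $N$ and $D$ fixed.
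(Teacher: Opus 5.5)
Your proof is correct and follows the paper's argument. You reduce minimizing $L$ to maximizing $g$, hence to minimizing $T^{-p}B^{-q}$, then eliminate $T=S-2B$ and show the one-variable objective is strictly monotone on $[1,S/3]$; the only cosmetic difference is that the paper bounds the derivative directly via $B\le T$, namely $h'(B)=2p/T-q/B\le(2p-q)/T$, whereas you locate the stationary point $B_0=qS/(2(p+q))$ and check that $B_0\ge S/3$, which is equivalent.
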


\begin{proof}
With the token count and architecture fixed, every split on the line
$T+2B=S$ has the same compute, so only the looping coefficient $g(T,B)$
varies across feasible splits. We therefore maximize $g$ along this line,
reduce the problem to a strictly convex function of $B$ alone, and show that
its derivative is negative on the whole feasible interval when $q\geq2p$.
The minimum then lies at the boundary $B=T$, which is full BPTT.

At fixed architecture and token count, \eqref{eq:law-budget} depends on
the loop counts only through $S=T+2B$, including the attention products.
Thus every feasible split has the same compute. Since the loss is
strictly decreasing in $N_{\mathrm{eff}}$ and $N_{\mathrm{rec}}>0$,
minimizing it is equivalent to maximizing $g(T,B)$.
Using $g(T,B)=\exp[c(1-T^{-p}B^{-q})]$ with $c>0$, this is equivalent
to minimizing $T^{-p}B^{-q}$, or its logarithm.
Substituting $T=S-2B$ gives $1\leq B\leq S/3$ and the strictly convex
objective
\begin{equation}
 h(B)=-p\log(S-2B)-q\log B,\qquad
 h''(B)=\frac{4p}{(S-2B)^2}+\frac{q}{B^2}>0.
\end{equation}
When $q\geq2p$, the constraint $B\leq T$ gives
\begin{equation}
 h'(B)=\frac{2p}{T}-\frac qB
 \leq\frac{2p-q}{T}\leq0.
\end{equation}
The derivative is strictly negative whenever $B<T$, including when
$q=2p$. Hence the unique minimum is at $B=S/3$, where $T=B$.
For $S=3$, this is also the only feasible split.
\end{proof}

The replacement preserves $N,D$ and strictly improves any truncated
setting, so the conclusion also holds when either or both are optimized,
provided the replacement is admissible. It does not require fixed
parameter proportions across model sizes. Including the optimizer update
does not change the result, since its cost is independent of $T,B$.

The condition is stronger than $q>p$: a backward loop costs twice a
forward loop, so the comparison is $q$ versus $2p$.
Our adopted fit satisfies $q>2p$. For integer or restricted loop menus,
$S/3$ may be unavailable, so the continuous dominance argument does not
replace the discrete check \eqref{eq:law-fixed-opt}.

\subsubsection{Fixed model size: allocate loops and data}
\label{app:law-fixed-size}
With $N$ fixed, size cannot rebalance against data, and the loop setting
determines how many tokens the compute buys:
$D_N(C;T,B)=C/F_N(T,B)$ by \eqref{eq:law-budget}.
\Cref{sec:law-fixed-size}
compares the fifteen observed settings $\mathcal S$ at their respective
token budgets:
\begin{equation}
 (T^*,B^*)\in\operatorname*{arg\,min}_{(T,B)\in\mathcal S}
 L\bigl(N,D_N(C;T,B),T,B\bigr),
 \qquad D^*=D_N(C;T^*,B^*).
 \label{eq:law-fixed-opt}
\end{equation}
The switch points quoted in \Cref{sec:law-fixed-size} are computed with
the measured compute, optimizer update included; leaving it out moves them
by less than 0.5\% (\Cref{app:compute-flops}).

For continuous loop counts, Lemma~\ref{lem:law-joint-loops} establishes
the preference for full \ac{BPTT} when $q\geq2p$, as in our adopted fit.
We therefore set $B=T$ and first characterize the interior balance
between loops and tokens, then derive its large-compute limit.

\begin{lemma}[Optimal loop--token balance at fixed model size]
\label{lem:law-fixed-balance}
Fix an architecture with $N_{\mathrm{once}}\geq0$ and $N_{\mathrm{rec}}>0$,
and a compute budget $C>0$. Suppose the loss follows \eqref{eq:law},
with positive fitted coefficients $E,A,H,\alpha,\beta,\kappa,c,p,q$
and positive normalization scales $N_0,D_0$. Let $B=T\geq1$
be continuous and use \eqref{eq:law-budget}, so that
$F_N(T,T)=u_N+3v_NT$ with fixed $u_N,v_N>0$ and
$D=C/F_N(T,T)$. Write $s=p+q$ and define
\begin{equation}
\begin{aligned}
 N_{\mathrm{eff}}(T)&=N_{\mathrm{once}}
       +N_{\mathrm{rec}}\exp[c(1-T^{-s})],\\
 X(T)&=A\left(\frac{N_{\mathrm{eff}}(T)}{N_0}\right)^{-\alpha},
 \qquad Y(T)=H\left(\frac{D}{D_0}\right)^{-\beta}.
\end{aligned}
\end{equation}
Throughout this derivation, a prime denotes a total derivative with
respect to $T$ along $B=T$, holding the architecture and $C$ fixed.
In particular,
\begin{equation}
\begin{aligned}
 N_{\mathrm{eff}}'(T)
   &\equiv\frac{\mathrm d N_{\mathrm{eff}}(T)}{\mathrm dT}
    =N_{\mathrm{rec}}cs\,g(T,T)T^{-s-1},\\
 F_N'(T,T)&\equiv\frac{\mathrm d}{\mathrm dT}F_N(T,T)=3v_N.
\end{aligned}
\end{equation}
Thus $N_{\mathrm{eff}}'$ measures how effective size changes with loop
count ($N$ is fixed), and
$F_N'(T,T)$ includes both loop counts as $B=T$ grows.
Every interior optimum satisfies
\begin{equation}
 \alpha X\frac{N_{\mathrm{eff}}'}{N_{\mathrm{eff}}}
 =\beta Y\frac{F_N'(T,T)}{F_N(T,T)}.
 \label{eq:law-marginal}
\end{equation}
Equivalently, the stationary points are parameterized by
\begin{equation}
 C(T)=D_0\left[
 \frac{3\beta Hv_N}{\alpha A}
 F_N(T,T)^{\beta-1}
 \left(\frac{N_{\mathrm{eff}}}{N_0}\right)^\alpha
 \frac{N_{\mathrm{eff}}}{N_{\mathrm{eff}}'}
 \right]^{1/\beta},
 \qquad D(T)=\frac{C(T)}{F_N(T,T)}.
 \label{eq:law-full-branch}
\end{equation}
\end{lemma}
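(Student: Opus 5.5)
We reduce the problem to one variable and differentiate, as in Step~3 of the proof of Lemma~\ref{lem:law-fixed-forward}. Along $B=T$ with $C$ fixed, the budget \eqref{eq:law-budget} forces $D=C/F_N(T,T)$, so the loss becomes a function of $T$ alone. Because $\kappa>0$, the map $z\mapsto E+z^\kappa$ is strictly increasing. Interior minimizers of $L$ are therefore exactly the interior minimizers of the bracket $J(T)=X(T)+Y(T)$. Moreover, $\mathrm dL/\mathrm dT=\kappa J^{\kappa-1}J'$ with $J>0$, so interior stationary points of $L$ and of $J$ coincide. This means it suffices to show that every interior optimum satisfies $J'(T)=0$, and then to rewrite that condition.

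Next we compute the two logarithmic derivatives. Since $X=A(N_{\mathrm{eff}}/N_0)^{-\alpha}$, we get $X'=-\alpha X\,N_{\mathrm{eff}}'/N_{\mathrm{eff}}$. The derivative $N_{\mathrm{eff}}'=N_{\mathrm{rec}}cs\,g(T,T)T^{-s-1}$ follows from differentiating $\exp[c(1-T^{-s})]$ with $s=p+q$. For the data term, $Y=H(D/D_0)^{-\beta}$ with $D=C/F_N(T,T)$ gives $Y=H(C/D_0)^{-\beta}F_N(T,T)^{\beta}$. Differentiating, $Y'=\beta Y\,F_N'/F_N$, where $F_N'=3v_N$ because $F_N(T,T)=u_N+3v_NT$. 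Setting $J'=X'+Y'=0$ at an interior optimum, where the objective is differentiable in $T>1$, yields exactly \eqref{eq:law-marginal}.

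Finally we convert \eqref{eq:law-marginal} into the parameterization \eqref{eq:law-full-branch}. We substitute $Y=H(C/D_0)^{-\beta}F_N^{\beta}$ and $F_N'=3v_N$, which gives
\[
\alpha A\Bigl(\frac{N_{\mathrm{eff}}}{N_0}\Bigr)^{-\alpha}\frac{N_{\mathrm{eff}}'}{N_{\mathrm{eff}}}
=3\beta Hv_N\Bigl(\frac{C}{D_0}\Bigr)^{-\beta}F_N^{\beta-1}.
\]
Solving for $(C/D_0)^{\beta}$ and taking the $1/\beta$ power (every factor is positive, since $N_{\mathrm{eff}}'>0$, $v_N>0$ and $F_N>0$) gives the stated $C(T)$. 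Then $D(T)=C(T)/F_N(T,T)$ follows directly from the budget.

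For the converse direction ("equivalently"), each $T$ determines a unique $C(T)$, and reading the algebra backwards shows that $T$ is a stationary point at that budget. So the stationary set is exactly the curve $T\mapsto(C(T),D(T))$.

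The only point needing care is this equivalence. The lemma claims only that interior optima satisfy the condition, not that every stationary point is a minimum or that the branch is monotone in $C$. I would therefore phrase it carefully as a characterization of stationary points and not attempt a convexity argument, which is the step most likely to fail for general coefficients.
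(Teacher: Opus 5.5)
Your proposal is correct and follows the paper's proof essentially step for step: reduce to the bracket $J=X+Y$ using $\kappa>0$, compute $X'=-\alpha X\,N_{\mathrm{eff}}'/N_{\mathrm{eff}}$ and $Y'=\beta Y\,F_N'/F_N$ from $D=C/F_N(T,T)$, set $J'=0$ to get \eqref{eq:law-marginal}, and solve the resulting all-positive identity for $C$. Your remark that the algebra is reversible, so that the curve $T\mapsto(C(T),D(T))$ is exactly the set of stationary points, spells out the ``equivalently'' that the paper leaves implicit.
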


\begin{proof}
At fixed model size and compute, the loop count trades effective size
against tokens: raising $T=B$ increases $N_{\mathrm{eff}}$ but, through the
budget, reduces $D$. We write the loss as a function of $T$ alone by
substituting $D=C/F_N(T,T)$, differentiate its size and data terms, and set
the total derivative to zero, which gives \eqref{eq:law-marginal}. This
first-order condition involves $C$ only through the factor $C^{-\beta}$, so
it can be solved explicitly for the compute at which each $T$ is stationary,
which gives \eqref{eq:law-full-branch}.

Since $\kappa>0$, minimizing loss is equivalent to minimizing
$J$; substituting the budget gives
\begin{equation}
 J(T;N,C)=X(T)+Y(T)
       =A\left(\frac{N_{\mathrm{eff}}}{N_0}\right)^{-\alpha}
       +H\left(\frac{D_0F_N(T,T)}{C}\right)^{\beta}.
\end{equation}
Increasing $T$ raises the effective size and lowers $X$, but reduces
the token budget and raises $Y$. At fixed $N,C$, differentiating the
token budget gives
\begin{equation}
 D'\equiv\frac{\mathrm d}{\mathrm dT}
       \left(\frac{C}{F_N(T,T)}\right)
   =-\frac{C F_N'(T,T)}{F_N(T,T)^2},
 \qquad \frac{D'}{D}=-\frac{F_N'(T,T)}{F_N(T,T)}.
\end{equation}
Consequently, the derivatives of the two loss terms are
\begin{equation}
 X'\equiv\frac{\mathrm dX}{\mathrm dT}
   =-\alpha X\frac{N_{\mathrm{eff}}'}{N_{\mathrm{eff}}},
 \qquad Y'\equiv\frac{\mathrm dY}{\mathrm dT}
   =\beta Y\frac{F_N'(T,T)}{F_N(T,T)}.
\end{equation}
Setting $J'=X'+Y'=0$ gives
\eqref{eq:law-marginal}, and
$N_{\mathrm{eff}}'=N_{\mathrm{rec}}cs\,g(T,T)T^{-s-1}>0$
with $F_N'=3v_N$ yields
\begin{equation}
 \alpha A\left(\frac{N_{\mathrm{eff}}}{N_0}\right)^{-\alpha}
 \frac{N_{\mathrm{eff}}'}{N_{\mathrm{eff}}}
 =3\beta Hv_N\left(\frac{D_0}{C}\right)^\beta
 F_N(T,T)^{\beta-1}.
\end{equation}
All factors are positive, so solving for $C$ gives
\eqref{eq:law-full-branch}.
\end{proof}

This is a necessary condition for an interior optimum; boundary values
and all allowed integer settings must also be compared.
On the finite menu, added compute goes to tokens until a deeper setting
reaches a lower predicted loss, at which point the token budget drops.

\begin{lemma}[Asymptotic loop--token allocation]
\label{lem:law-fixed-asymptotic}
Suppose the loss follows \eqref{eq:law}, with positive fitted
coefficients $E,A,H,\alpha,\beta,\kappa,c,p,q$ and positive normalization
scales $N_0,D_0$. Use the fixed architecture and compute model of
Lemma~\ref{lem:law-fixed-balance}, and allow all continuous $T\geq1$
with no upper bound. Let $T^*(C)$ be any global
minimizer along $B=T$, and let $D^*(C)=C/F_N(T^*,T^*)$.
As $C\to\infty$, there are constants $K_T,K_D>0$, independent of $C$,
such that
\begin{equation}
 T^*(C)\sim K_T C^{\frac{\beta}{\beta+p+q}},
 \qquad
 D^*(C)\sim K_D C^{\frac{p+q}{\beta+p+q}}.
 \label{eq:law-fixed-asymptotic}
\end{equation}
\end{lemma}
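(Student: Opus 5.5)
We work with $J(T;C)=X(T)+Y(T)$ from the proof of Lemma~\ref{lem:law-fixed-balance}, with $s=p+q$, $F_N(T,T)=u_N+3v_NT$ and $D=C/F_N$. The argument has three stages. First show $T^*(C)\to\infty$. Then use the stationarity condition \eqref{eq:law-marginal} at large $T$ to read off the rate. Finally pass from the stationary branch to the global minimizer.

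\emph{Step 1: $T^*(C)\to\infty$.} Fix any $T_0$. Choose $T_1>T_0$ with $X(T_1)<X(T_0)$; this is possible since $X$ is strictly decreasing in $T$ because $N_{\mathrm{eff}}$ increases. As $C\to\infty$, $Y(T)=H(D_0F_N(T,T)/C)^\beta\to0$ uniformly on $[1,T_1]$. For large $C$ this gives $J(T_1)<\inf_{T\le T_0}J(T)$, because $\inf_{T\le T_0}X(T)=X(T_0)$ and $Y\ge0$. Hence $T^*>T_0$ eventually. A global minimizer exists for each $C$: $J\to\infty$ as $T\to\infty$ through $Y$, $J$ is continuous on $[1,\infty)$, and a boundary minimum at $T=1$ is excluded by the same argument.

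\emph{Step 2: asymptotics of the stationary condition.} For large $C$ the minimizer is interior, so \eqref{eq:law-marginal} holds. As $T\to\infty$ we have
\begin{equation}
N_{\mathrm{eff}}\to N_\infty=N_{\mathrm{once}}+e^cN_{\mathrm{rec}},\qquad
N_{\mathrm{eff}}'\sim N_{\mathrm{rec}}cs\,e^cT^{-s-1},\qquad
F_N\sim3v_NT.
\end{equation}
Substituting into \eqref{eq:law-full-branch}, the bracket behaves as a positive constant times $T^{\beta-1}\cdot T^{s+1}=T^{\beta+s}$. The map $C(T)$ of Lemma~\ref{lem:law-fixed-balance} therefore satisfies $C(T)\sim\Gamma\,T^{(\beta+s)/\beta}$ with an explicit $\Gamma>0$. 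Because Step~1 gives $T^*\to\infty$ and $C(T^*(C))=C$, inverting this relation yields $T^*\sim\Gamma^{-\beta/(\beta+s)}C^{\beta/(\beta+s)}$. The inversion needs only ratio asymptotics, not monotonicity of $C(\cdot)$, since the relation $C=C(T^*)$ with $T^*\to\infty$ directly gives $T^*/(C/\Gamma)^{\beta/(\beta+s)}\to1$. Then $D^*=C/F_N\sim C/(3v_NT^*)\sim K_DC^{1-\beta/(\beta+s)}=K_DC^{s/(\beta+s)}$, which is \eqref{eq:law-fixed-asymptotic}.

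\emph{Main obstacle.} The delicate point is Step~1 together with the interiority claim. We must ensure that every global minimizer, not just some stationary point, diverges. Multiple stationary points for moderate $T$ are possible, but the uniform-convergence argument in Step~1 handles any selection $T^*(C)$, so no uniqueness is needed. The one subtle issue is the error term in the expansion $N_{\mathrm{eff}}'\sim\cdots$, since $g(T,T)=e^c(1+O(T^{-s}))$. Each factor in \eqref{eq:law-full-branch} is asymptotic to its limit, so their product is too, and this suffices for the ratio asymptotics above.
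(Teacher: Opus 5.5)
Your proposal is correct and follows the paper's proof essentially step for step. Like the paper, you first show existence and divergence of $T^*(C)$ by comparing $J$ on $[1,T_0]$, where $J\ge X(T_0)$, with $J$ at a larger loop count, where $Y\to0$ (the paper uses $2M$). You then invert the large-$T$ asymptotics $C(T)\sim\Gamma\,T^{(\beta+s)/\beta}$ of the stationarity map along $C=C(T^*(C))$, and finally recover $D^*\sim C/(3v_NT^*)$.
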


\begin{proof}
The proof has two steps. First, we show that a global minimizer exists
for every budget and grows without bound as compute increases, so
at large compute it is interior and satisfies the stationarity relation
\eqref{eq:law-full-branch} of Lemma~\ref{lem:law-fixed-balance}. Second, we
take the large-$T$ limit of that relation, where $C(T)$ is
asymptotically a power of $T$; inverting it gives $T^*(C)$, and the
budget gives $D^*(C)$.

For each finite $C$, the continuous objective $J$ tends to infinity
as $T\to\infty$, since its data term grows without bound, so a
global minimizer exists. Moreover, $T^*(C)\to\infty$ as $C\to\infty$:
for any fixed $M\geq1$, every $T\leq M$ has $J(T;N,C)\geq X(M)$,
whereas $J(2M;N,C)\to X(2M)<X(M)$. No minimizer stays in
$[1,M]$ at large compute; it is eventually
interior and satisfies \eqref{eq:law-full-branch}.

Writing $s=p+q$, the large-$T$ limits are
\begin{equation}
\begin{aligned}
 N_{\mathrm{eff}}&\longrightarrow N_{\mathrm{once}}+e^cN_{\mathrm{rec}},\\
 N_{\mathrm{eff}}'&\sim N_{\mathrm{rec}}cs e^c T^{-s-1},
 \qquad F_N(T,T)\sim3v_NT.
\end{aligned}
\end{equation}
The bracket in \eqref{eq:law-full-branch} is thus asymptotic
to a positive constant times $T^{\beta-1}T^{s+1}=T^{\beta+s}$,
so $C(T)$ is asymptotic to a positive constant times
$T^{(\beta+s)/\beta}$. Evaluating this relation at $T^*(C)$ and
inverting gives the first part of \eqref{eq:law-fixed-asymptotic},
and $D^*(C)\sim C/(3v_NT^*(C))$ gives the second,
recovering \eqref{eq:law-branch-powers}.
\end{proof}

The current coefficients give exponents approximately 0.238 and 0.762
(\eqref{eq:alloc-fixed-scaling} rounds them to 0.24 and 0.76).
These are asymptotic consequences of the fitted law along $B=T$, not a
refit to measured optima. The dominance argument above applies within
this law and compute model; its extrapolation to unbounded depth is not
empirically established. Finite-depth overheads and allowed loop counts
determine the practical switches.

\subsubsection{Free model size: compare joint allocations}
\label{app:law-free-size}
This subsection supports the comparisons of \Cref{sec:law-free-size} in two
steps. Lemma~\ref{lem:law-untie} compares a fully trained looped model with
the non-looped model that unties its loops at the same leading compute.
Corollary~\ref{cor:law-frontier} then lets the model size adapt to the
budget and compares the compute-optimal frontiers of all loop settings.

\begin{lemma}[Untied copies outweigh full-BPTT loops]
\label{lem:law-untie}
Suppose the looping coefficient is $g(T,B)=\exp[c(1-T^{-p}B^{-q})]$ with
$c,p,q>0$, and write $s=p+q$. If
\begin{equation}
 g(2,2)<2
 \qquad\text{and}\qquad
 cs\,2^{-s}<1,
 \label{eq:law-untie-conditions}
\end{equation}
then $g(T,T)<T$ for every real $T\geq2$. Consequently, for an
architecture with $N_{\mathrm{rec}}>0$ and a fixed token count, replacing a
fully trained $T$-loop model, $T\geq2$, with a non-looped model of size
$N_{\mathrm{once}}+TN_{\mathrm{rec}}$ preserves the leading compute
\eqref{eq:law-leading-compute} and increases the effective size by
$[T-g(T,T)]N_{\mathrm{rec}}>0$.
\end{lemma}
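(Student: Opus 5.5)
The plan is to study the gap function $\psi(T)=T-g(T,T)$ on $[2,\infty)$, where $g(T,T)=\exp[c(1-T^{-s})]$. The first condition in \eqref{eq:law-untie-conditions} gives $\psi(2)=2-g(2,2)>0$. It then suffices to show that $\psi$ is nondecreasing on $[2,\infty)$, since that forces $\psi(T)\ge\psi(2)>0$ for all real $T\ge2$. Equivalently, I will show that the slope of the full-BPTT gain satisfies
\begin{equation*}
\frac{\mathrm d}{\mathrm dT}\,g(T,T)\le 1
\end{equation*}
there.

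Using the derivative already recorded in \Cref{app:law-bounded}, $\frac{\mathrm d}{\mathrm dT}g(T,T)=cs\,T^{-s-1}g(T,T)$. Two elementary bounds control this slope. The first is the ceiling $g(T,T)<e^c$, which is not immediately useful. The better route is to bound the slope at $T=2$ and show it cannot increase. The quantity $T\cdot\frac{\mathrm d}{\mathrm dT}g(T,T)/g(T,T)=csT^{-s}$ is decreasing in $T$. I will therefore write
\begin{equation*}
\frac{\mathrm d}{\mathrm dT}g(T,T)=\frac{cs\,T^{-s}}{T}\,g(T,T).
\end{equation*}
At any $T\ge2$ where $g(T,T)\le T$, this gives $\frac{\mathrm d}{\mathrm dT}g(T,T)\le cs\,T^{-s}\le cs\,2^{-s}<1$ by the second condition.

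This suggests a continuity (bootstrap) argument. Let $T_1=\inf\{T\ge2: g(T,T)\ge T\}$ and suppose, for contradiction, that it is finite. Then $T_1>2$, and on $[2,T_1]$ we have $g\le T$, so the slope is $<1$ there. Integrating from $2$ gives
\begin{equation*}
g(T_1,T_1)<g(2,2)+(T_1-2)<T_1,
\end{equation*}
which contradicts the definition of $T_1$ by continuity. This is the step I expect to be the crux. The key point is to combine the self-referential bound $g/T\le1$ with the decreasing factor $csT^{-s}$, rather than to attempt global concavity, which the paper notes is not automatic.

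The consequence then follows from bookkeeping with \eqref{eq:law-leading-compute}. The untied model has $T$ copies of the recurrent block run once, so its $N_{\mathrm{comp}}$ is $N_{\mathrm{once}}+TN_{\mathrm{rec}}$. The full-BPTT looped model has $N_{\mathrm{comp}}=N_{\mathrm{once}}+\frac{T+2T}{3}N_{\mathrm{rec}}=N_{\mathrm{once}}+TN_{\mathrm{rec}}$, which is identical, so the leading compute is preserved at fixed $D$. The effective sizes are $N_{\mathrm{once}}+TN_{\mathrm{rec}}$ (since $g(1,1)=1$ for a non-looped model) and $N_{\mathrm{once}}+g(T,T)N_{\mathrm{rec}}$. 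Their difference is $[T-g(T,T)]N_{\mathrm{rec}}>0$ by the first part.
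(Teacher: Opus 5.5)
Your argument is correct, but it takes a different route from the paper.

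The paper works with the ratio $g(T,T)/T$ rather than the difference $T-g(T,T)$. Its logarithmic derivative is $(csT^{-s}-1)/T$, which is negative for every $T\ge2$ by the second condition, whatever the value of $g$. So the ratio is strictly decreasing on $[2,\infty)$, and $g(T,T)/T\le g(2,2)/2<1$ follows at once.

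Your identity $\frac{\mathrm d}{\mathrm dT}g(T,T)=csT^{-s}\cdot g(T,T)/T$ is the same computation. Because you bound the additive slope instead of dividing by $g$, you need $g\le T$ as an input, and that forces the continuity argument at $T_1$. That argument is sound: the set $\{T\ge2: g(T,T)\ge T\}$ is closed, so $g(T_1,T_1)=T_1$, which contradicts your integrated bound.

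The ratio route is shorter and needs no bootstrap. Your route, once $g\le T$ is established, gives the explicit additive bound $T-g(T,T)\ge[2-g(2,2)]+(1-cs\,2^{-s})(T-2)$. Under the fitted coefficients, the effective-size advantage of untying therefore grows with slope at least $1-cs\,2^{-s}\approx0.415$. For $T>2$ this is sharper than the bound $T[1-g(2,2)/2]\approx0.045\,T$ implied by the paper's ratio.

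Your compute and effective-size bookkeeping for the consequence matches the paper's.
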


\begin{proof}
Each untied copy of the recurrent block adds one full block of effective
parameters, whereas each additional trained loop adds less than one once
the gain begins to saturate. We make this precise by following the ratio
$g(T,T)/T$, the effective size contributed per executed copy of the block.
The first condition in \eqref{eq:law-untie-conditions} places this ratio
below one at $T=2$, and the second makes it decrease for all $T\geq2$, so it
stays below one. The statement about effective size then follows by
comparing the two parameter counts term by term.

Along full BPTT, $\log g(T,T)=c(1-T^{-s})$, so
\begin{equation}
 \frac{\mathrm d}{\mathrm dT}\log\frac{g(T,T)}{T}
 =csT^{-s-1}-\frac1T
 =\frac{csT^{-s}-1}{T}.
\end{equation}
Because $s>0$, the factor $T^{-s}$ decreases in $T$, so for every
$T\geq2$ we have $csT^{-s}\leq cs\,2^{-s}<1$ and the derivative is
negative. The ratio $g(T,T)/T$ is therefore strictly decreasing on
$[2,\infty)$, and
\begin{equation}
 \frac{g(T,T)}{T}\leq\frac{g(2,2)}{2}<1
 \qquad(T\geq2).
\end{equation}
For the second statement, the leading compute-equivalent size of the
looped model is $N_{\mathrm{once}}+(T+2T)N_{\mathrm{rec}}/3
=N_{\mathrm{once}}+TN_{\mathrm{rec}}$ by \eqref{eq:law-leading-compute},
which equals the parameter count, and hence the leading compute, of the
non-looped replacement. Their effective sizes are
$N_{\mathrm{once}}+g(T,T)N_{\mathrm{rec}}$ and
$N_{\mathrm{once}}+TN_{\mathrm{rec}}$, which differ by
$[T-g(T,T)]N_{\mathrm{rec}}$, positive by the first statement and
$N_{\mathrm{rec}}>0$.
\end{proof}

The adopted coefficients satisfy both conditions:
$g(2,2)\simeq1.910<2$ and $cs\,2^{-s}\simeq0.585<1$. The restriction to
$T\geq2$ is needed, because the continuous extension has $g(T,T)>T$ just
above $T=1$, where the derivative above is positive ($cs\simeq1.400>1$).

\begin{corollary}[Free-size frontiers]
\label{cor:law-frontier}
Adopt the assumptions of Lemma~\ref{lem:law-fixed-forward}: the loss law
\eqref{eq:law}, a fixed recurrent fraction $\rho$, and the leading-FLOP
budget $C=6DN_{\mathrm{comp}}$. For each loop setting $(T,B)$ and budget
$C>0$, let $L^*_{T,B}(C)$ be the minimum loss over continuous $N,D>0$, with
minimizers $N^*_{T,B}(C)$ and $D^*_{T,B}(C)$, and let
$\eta=\kappa\alpha\beta/(\alpha+\beta)$. Then:
\begin{enumerate}[label=(\roman*)]
\item \emph{Frontier identity.} Every loop setting behaves like a
non-looped model with budget $C/\chi(T,B)$:
\begin{equation}
\begin{aligned}
 L^*_{T,B}(C)&=L^*_{1,1}\!\left(\frac C{\chi(T,B)}\right),\\
 N^*_{T,B}(C)&=\left(\frac{N}{N_{\mathrm{eff}}}\right)_{T,B}
 N^*_{1,1}\!\left(\frac C{\chi(T,B)}\right),\\
 D^*_{T,B}(C)&=D^*_{1,1}\!\left(\frac C{\chi(T,B)}\right),
\end{aligned}
\label{eq:law-frontier-identity}
\end{equation}
where the ratio $(N/N_{\mathrm{eff}})_{T,B}=1/[1-\rho+\rho g(T,B)]$ is
evaluated at the indicated loop setting and does not depend on $N$.
\item \emph{Compute premium.} If the loop setting receives $\lambda C$, with
$\lambda>0$, while the non-looped model receives $C$, then
\begin{equation}
 \frac{L^*_{T,B}(\lambda C)-E}{L^*_{1,1}(C)-E}
 =\left(\frac{\lambda}{\chi(T,B)}\right)^{-\eta},
 \qquad
 \frac{N^*_{T,B}(\lambda C)}{N^*_{1,1}(C)}
 =\left(\frac{N}{N_{\mathrm{eff}}}\right)_{T,B}
 \left(\frac{\lambda}{\chi(T,B)}\right)^{\beta/(\alpha+\beta)}.
 \label{eq:law-premium}
\end{equation}
In particular, the loop setting reaches a lower loss exactly when
$\lambda>\chi(T,B)$.
\item \emph{Equal-compute gap.} At $\lambda=1$,
\begin{equation}
 L^*_{T,B}(C)-L^*_{1,1}(C)
 =\bigl[L^*_{1,1}(C)-E\bigr]
   \bigl[\chi(T,B)^{\eta}-1\bigr].
 \label{eq:law-gap}
\end{equation}
\end{enumerate}
\end{corollary}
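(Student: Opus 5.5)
The plan is to read everything off the closed-form allocation already derived in the proof of Lemma~\ref{lem:law-fixed-forward}, applied once to a general setting $(T,B)$ and once to $(1,1)$. The proof of that lemma never uses that $B$ is optimal. Steps~1--5 hold for any fixed $(T,B)$. They give \eqref{eq:law-free-allocation} and \eqref{eq:law-optimized-loss}, with $x^*$, $y^*$ and $L^*-E$ depending on the budget and the loop setting only through the reduced budget $\bar C/\chi(T,B)$. The non-looped setting $(1,1)$ has $g(1,1)=1$, so $N_{\mathrm{eff}}=N_{\mathrm{comp}}=N$ and $\chi(1,1)=1$. Its optimal quantities are therefore the same closed-form functions evaluated at $\bar C$ itself.

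\emph{Part (i).} Substituting $\bar C/\chi$ for $\bar C$ in the $(1,1)$ formulas reproduces the $(T,B)$ formulas for $x^*$, $y^*$ and $L^*$. This gives $L^*_{T,B}(C)=L^*_{1,1}(C/\chi)$ and $D^*_{T,B}(C)=D_0y^*=D^*_{1,1}(C/\chi)$. For size, I will use the relation $N^*=N_0x^*/[1-\rho+\rho g(T,B)]$ from Step~4. For $(1,1)$ the denominator is $1$, so $N^*_{1,1}(C/\chi)=N_0x^*$. Multiplying by $(N/N_{\mathrm{eff}})_{T,B}=1/[1-\rho+\rho g(T,B)]$ then gives the stated identity. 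This ratio is independent of $N$ by Step~1.

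\emph{Part (ii).} I will use \eqref{eq:law-optimized-loss}, $L^*_{T,B}(C)-E=K\bar C^{-\eta}\chi^\eta$. Evaluating it at $\lambda C$ and dividing by the same formula for $(1,1)$ at $C$ leaves $\lambda^{-\eta}\chi^{\eta}=(\lambda/\chi)^{-\eta}$. For sizes, part (i) together with $x^*\propto(\bar C/\chi)^{\beta/(\alpha+\beta)}$ gives the ratio $(N/N_{\mathrm{eff}})_{T,B}(\lambda/\chi)^{\beta/(\alpha+\beta)}$. Since $\eta>0$, the loss ratio is below one exactly when $\lambda/\chi>1$. Both excess losses are positive, so this is equivalent to the looped setting reaching a lower loss.

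\emph{Part (iii).} Setting $\lambda=1$ in (ii) gives $L^*_{T,B}(C)-E=[L^*_{1,1}(C)-E]\chi^\eta$. Subtracting $L^*_{1,1}(C)-E$ from both sides yields \eqref{eq:law-gap}.

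There is no real obstacle. The only step needing care is confirming that Lemma~\ref{lem:law-fixed-forward}'s derivation applies verbatim to an arbitrary fixed $(T,B)$, including the uniqueness of the minimizer via strict convexity in $\log x$, so that ``the minimizers'' are well defined. A second point is the edge case $\rho=0$, where $\chi\equiv1$ and every identity holds trivially.
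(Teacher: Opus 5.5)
Your proposal is correct and follows essentially the same route as the paper. Both reuse Steps~1--5 of Lemma~\ref{lem:law-fixed-forward} for an arbitrary fixed $(T,B)$, read off part~(ii) from \eqref{eq:law-optimized-loss} and \eqref{eq:law-free-allocation}, and obtain part~(iii) by setting $\lambda=1$. The only difference is cosmetic: for part~(i), the paper notes that in the variables $(N_{\mathrm{eff}},D)$ the looped problem is literally the non-looped problem with budget $C/\chi$, instead of matching the closed-form expressions term by term as you do.
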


\begin{proof}
The proof of Lemma~\ref{lem:law-fixed-forward} already contains the key
fact: for a fixed loop setting, the budget constraint reads
$C/\chi=6N_{\mathrm{eff}}D$, so optimizing a looped model over its size
and tokens is the non-looped problem with a reduced budget. Part~(i)
restates it for the non-looped optimum, and parts~(ii)
and~(iii) follow by comparing the optimal losses and sizes,
\eqref{eq:law-optimized-loss} and \eqref{eq:law-free-allocation}, at two
budgets.

\emph{Part (i).} By Step~1 of the proof of
Lemma~\ref{lem:law-fixed-forward}, $N_{\mathrm{eff}}$ is the fixed
multiple $1-\rho+\rho g(T,B)$ of $N$, so optimizing over $N$ is equivalent
to optimizing over $N_{\mathrm{eff}}$, and $N_{\mathrm{comp}}=\chi
N_{\mathrm{eff}}$ turns the budget into $C/\chi=6N_{\mathrm{eff}}D$.
In the variables $(N_{\mathrm{eff}},D)$, the loss \eqref{eq:law} is the
non-looped law, since $(1,1)$ has $g(1,1)=1$ and $N_{\mathrm{eff}}=N$, and
the constraint is the non-looped budget $C/\chi$, since $\chi(1,1)=1$. The
two problems therefore have the same minimum and the same unique minimizers
(Step~4 of that proof), namely $L^*_{1,1}(C/\chi)$, with effective size
$N^*_{1,1}(C/\chi)$ and token count $D^*_{1,1}(C/\chi)$. Dividing the
effective size by $1-\rho+\rho g(T,B)$ recovers $N^*_{T,B}(C)$, which
gives \eqref{eq:law-frontier-identity}.

\emph{Part (ii).} By \eqref{eq:law-optimized-loss}, the excess loss of
$(T,B)$ at budget $\lambda C$ is $K\bar C^{-\eta}\lambda^{-\eta}\chi^\eta$,
and that of $(1,1)$ at budget $C$ is $K\bar C^{-\eta}$, because
$\chi(1,1)=1$. Their ratio is $(\lambda/\chi)^{-\eta}$. Since $\eta>0$, it
is below one exactly when $\lambda>\chi$. For the sizes, part~(i) gives
$N^*_{T,B}(\lambda C)=(N/N_{\mathrm{eff}})_{T,B}N^*_{1,1}(\lambda C/\chi)$,
and \eqref{eq:law-free-allocation} with $\chi=1$ shows that
$N^*_{1,1}$ scales as the power $\beta/(\alpha+\beta)$ of the budget, so
$N^*_{1,1}(\lambda C/\chi)/N^*_{1,1}(C)=(\lambda/\chi)^{\beta/(\alpha+\beta)}$.

\emph{Part (iii).} Setting $\lambda=1$ in the loss ratio of part~(ii) gives
$L^*_{T,B}(C)-E=[L^*_{1,1}(C)-E]\chi^\eta$. Subtracting
$L^*_{1,1}(C)-E$ from both sides yields \eqref{eq:law-gap}.
\end{proof}

The identities are exact within the stated approximation and have two
consequences. First, by \eqref{eq:law-free-allocation} every loop setting
has $N^*\propto C^{\beta/(\alpha+\beta)}$ and
$D^*\propto C^{\alpha/(\alpha+\beta)}$ (exponents 0.562 and 0.438 with the
fitted coefficients), so the optimal pair minimizes $\chi(T,B)$ at every
compute. Second, among the fifteen settings $(1,1)$ minimizes $\chi$ and
$(2,2)$ is the best looped setting
(Table~\ref{tab:law-fixed-forward-opt}; Figure~\ref{fig:law-fixed-forward}b).
For $(2,2)$, $N_{\mathrm{eff}}\simeq1.462N$, $N_{\mathrm{comp}}\simeq1.507N$
and $\chi\simeq1.031$, so \eqref{eq:law-frontier-identity} predicts equal
loss at 3.1\% more compute and 31.6\% fewer parameters than $(1,1)$.
By \eqref{eq:law-premium}, 10\% more compute predicts lower loss with
about 29.0\% fewer non-embedding parameters. At equal compute, with
$\eta\simeq0.0559$, the bracketed multiplier in \eqref{eq:law-gap} is
about $1.72\times10^{-3}$, or 0.172\% of the non-looped model's excess loss
above $E$, rather than a constant gap in nats, making the compute
convention consequential.

\paragraph{Limits of the scale-independent ratios.}
The identities describe an empirical loss law under a specified compute
approximation; they do not establish a universal property of looped models.
Size or data bounds can put an optimum on a boundary. A size-dependent
recurrent share or the attention, output, and optimizer terms breaks the
simple $6\chi N_{\mathrm{eff}}D$ factorization. Full-compute optimization therefore
requires an explicit architectural scaling family and a fresh numerical
envelope. Its results should be reported separately from the analytic
ratios, using the same adopted fit.

\section{Compute-optimal pairs and the untied model}
\label{app:untie}

The main text (\Cref{sec:untie}) compares $(2,2)$ with a compute-optimal model without looping while allowing model size and
token count to vary. Below we give the full table for that comparison, followed by a controlled comparison at fixed
tokens against an untied model with a separate block per loop.

\subsection{Compute-optimal pairs}
\label{app:untie-pairs}
We trained $(1,1)$ and $(2,2)$ each at the size and token count that
minimize its loss for the same training compute, with three seeds each
(Table~\ref{tab:compute-optimal-pairs}). The first two pairs also belong to the scaling suite and its fitting set.
At all three budgets, the looped model has 27--32\% fewer non-embedding
parameters and comes within $(1\text{--}8)\times10^{-3}$ nats of the model without looping. At $2.1\times10^{19}$ FLOPs
($2.0\times10^{19}$ for the forward and backward passes alone), training the 344M $(2,2)$ model on 8\% more tokens, and so
with 8\% more compute, lowers its loss to 2.3590, $(7.3\pm1.8)\times10^{-3}$ nats below the 471M model without looping
(Figure~\ref{fig:teaser}d).

\begin{table}[h]
\centering
\caption{\textbf{Compute-optimal pairs.} Each model at the size that minimizes
its loss for the compute; loss: mean of three seeds; gap: $(2,2)$ minus
$(1,1)$, $\pm$ standard error; parameters: non-embedding counts (the models without looping are the 120M, 240M and 480M models of Table~\ref{tab:models}). Compute includes attention, output, and optimizer updates \eqref{eq:compute}; the first three rows are matched in the compute of the
forward and backward passes ($1.5\times10^{18}$, $5.9\times10^{18}$, and $2.0\times10^{19}$ FLOPs, or $9.6\times10^{17}$, $4.1\times10^{18}$, and $1.5\times10^{19}$ under the leading-FLOP count $6N_{\mathrm{comp}}D$ of Figure~\ref{fig:alloc-scaling}d; counting the optimizer update, $(1,1)$ spends
0.7\% more, which corresponds to about $10^{-3}$ nats); the last row trains the 344M $(2,2)$ model on 8\% more tokens (+8\% compute) against the $2.1\times10^{19}$ FLOP $(1,1)$ reference.}
\label{tab:compute-optimal-pairs}
\small
\setlength{\tabcolsep}{8.5pt}
\begin{tabular}{@{}rrrrrr@{}}
\toprule
& \multicolumn{2}{c}{$(1,1)$} & \multicolumn{2}{c}{$(2,2)$} & \\
\cmidrule(lr){2-3}\cmidrule(lr){4-5}
Compute (FLOPs) & Parameters & Loss & Parameters & Loss & Gap \\
\midrule
$1.6\times10^{18}$ & 115M & 2.7098 & 79M & 2.7177 & $(7.9\pm1.9)\times10^{-3}$ \\
$6.0\times10^{18}$ & 239M & 2.5215 & 162M & 2.5253 & $(3.8\pm1.8)\times10^{-3}$ \\
$2.1\times10^{19}$ & 471M & 2.3663 & 344M & 2.3677 & $(1.4\pm1.7)\times10^{-3}$ \\
$2.2\times10^{19}$ (+8\%) & 471M & 2.3663 & 344M & 2.3590 & $(-7.3\pm1.8)\times10^{-3}$ \\
\bottomrule
\end{tabular}
\end{table}

\subsection{The untied model}
\label{app:untie-model}
The untied model of a looped model is the same network in which each loop
has its own copy of the recurrent block, input injection and normalization.
For two loops it holds $N_{\mathrm{once}}+2N_{\mathrm{rec}}$ parameters, about
1.5 times as many, at the same compute per token. Tying layers to save
parameters goes back to ALBERT \citep{lan2020albert}, and \citet{bae2025relaxed}
relax the tie of looped models converted from pretrained ones. Each copy is
initialized like the looped model; with all copies set equal, the untied
model reproduces the looped model's loss and logits exactly.

\paragraph{Design and results.}
Each size trains at about 18 tokens per parameter of the looped model, close
to the compute-optimal ratio of \citet{hoffmann2022training}: 60M at $D_0$,
120M at $2D_0$, 240M at $4D_0$ and 480M at $8D_0$, with the recipe of its
$(2,2)$ run and three seeds per model. Alongside the untied model, we train
$(2,2)$ at the same width and $(2,2)$ widened to about 80\% of the untied
model's non-embedding parameters, on the same tokens (Table~\ref{tab:untie}). At the
untied model's width the looped model trails by $9\times10^{-3}$--$1.7\times10^{-2}$ nats ($1.3\times10^{-2}$
averaged over the four sizes), against $(3\text{--}4)\times10^{-3}$ by the law, whose error
at these $(2,2)$ runs is at most $4\times10^{-3}$ nats. The wider looped model uses 15--22\% more forward-and-backward
compute per token than the untied model and reaches a lower loss by $5\times10^{-3}$--$1.7\times10^{-2}$ nats, in all twelve seed pairs.

\begin{table}[t]
  \centering
  \caption{\textbf{Comparison with the untied model.} At each size: the untied model, $(2,2)$ at its width, and $(2,2)$ widened to about 80\% of its non-embedding parameters. Width with heads $\times$ head dimension; parameters: non-embedding; compute: compute per token of the forward and backward passes relative to the untied model, the same ratio as the training compute since all models of a size train on the same tokens; loss: mean of three seeds; $-$\,untied: paired by seed, mean $\pm$ standard error.}
  \label{tab:untie}
  \small
  \setlength{\tabcolsep}{8pt}
  \begin{tabular}{@{}llrrrrr@{}}
    \toprule
    Size & Model & Width & Parameters & Compute & Loss & $-$\,untied \\
    \midrule
    60M & untied & 768 (6$\times$128) & 87M & 1.00 & 2.7843 &  \\
     & $(2,2)$ & 768 (6$\times$128) & 58M & 1.00 & 2.7931 & $(+8.8\pm3.0)\times10^{-3}$ \\
     & $(2,2)$, wider & 840 (7$\times$120) & 69M & 1.15 & 2.7668 & $(-1.75\pm0.19)\times10^{-2}$ \\
    \addlinespace
    120M & untied & 1024 (8$\times$128) & 180M & 1.00 & 2.5809 &  \\
     & $(2,2)$ & 1024 (8$\times$128) & 115M & 1.00 & 2.5982 & $(+1.73\pm0.14)\times10^{-2}$ \\
     & $(2,2)$, wider & 1152 (9$\times$128) & 146M & 1.22 & 2.5681 & $(-1.28\pm0.23)\times10^{-2}$ \\
    \addlinespace
    240M & untied & 1280 (10$\times$128) & 360M & 1.00 & 2.4147 &  \\
     & $(2,2)$ & 1280 (10$\times$128) & 239M & 1.00 & 2.4272 & $(+1.25\pm0.08)\times10^{-2}$ \\
     & $(2,2)$, wider & 1408 (11$\times$128) & 289M & 1.18 & 2.4044 & $(-1.03\pm0.17)\times10^{-2}$ \\
    \addlinespace
    480M & untied & 1664 (13$\times$128) & 709M & 1.00 & 2.2565 &  \\
     & $(2,2)$ & 1664 (13$\times$128) & 471M & 1.00 & 2.2700 & $(+1.35\pm0.19)\times10^{-2}$ \\
     & $(2,2)$, wider & 1824 (12$\times$152) & 566M & 1.18 & 2.2511 & $(-5.4\pm1.5)\times10^{-3}$ \\
    \bottomrule
  \end{tabular}
\end{table}

\section{Learning rate and batch size}
\label{app:lrbs}

Our runs follow one learning-rate rule and one batch-size rule, both set by the token budget $D$ (Table~\ref{tab:recipe}).
We test each rule against sweeps that vary only the learning rate or only the batch; a learning-rate sweep scales the MuonH and
AdamW learning rates by one common factor. We call a value acceptable when its loss is within 0.25\% of the optimum of its
sweep, about a tenth of what $(4,4)$ gains over $(1,1)$ at the same model size and budget.

\begin{table}[h]
  \centering
  \caption{Batch size, number of steps and peak MuonH learning rate of the hidden weight matrices at each budget. The learning rate is
  \eqref{eq:lr-transfer} with $\gamma=0.40$, with its ratio to the $\gamma=0.32$ value rounded to a power of $2^{1/8}$;
  the AdamW learning rates are multiplied by the same ratio. Warmup takes 2\% of the steps.}
  \label{tab:recipe}
  \small
  \setlength{\tabcolsep}{8.5pt}
  \begin{tabular}{@{}lccccccc@{}}
    \toprule
    Budget $D/D_0$ & $\tfrac12$ & 1 & 2 & 4 & 8 & 16 & 32 \\
    \midrule
    Batch (sequences of 2{,}048 tokens) & 48 & 64 & 88 & 128 & 176 & 256 & 352 \\
    Steps & 5{,}464 & 8{,}192 & 11{,}920 & 16{,}384 & 23{,}832 & 32{,}768 & 47{,}664 \\
    Peak learning rate ($\times10^{-3}$) & 10.7 & 9.25 & 8.11 & 8.01 & 7.03 & 6.36 & 6.09 \\
    \bottomrule
  \end{tabular}
\end{table}

\begin{figure}[tp]
  \centering
  \includegraphics[width=\textwidth]{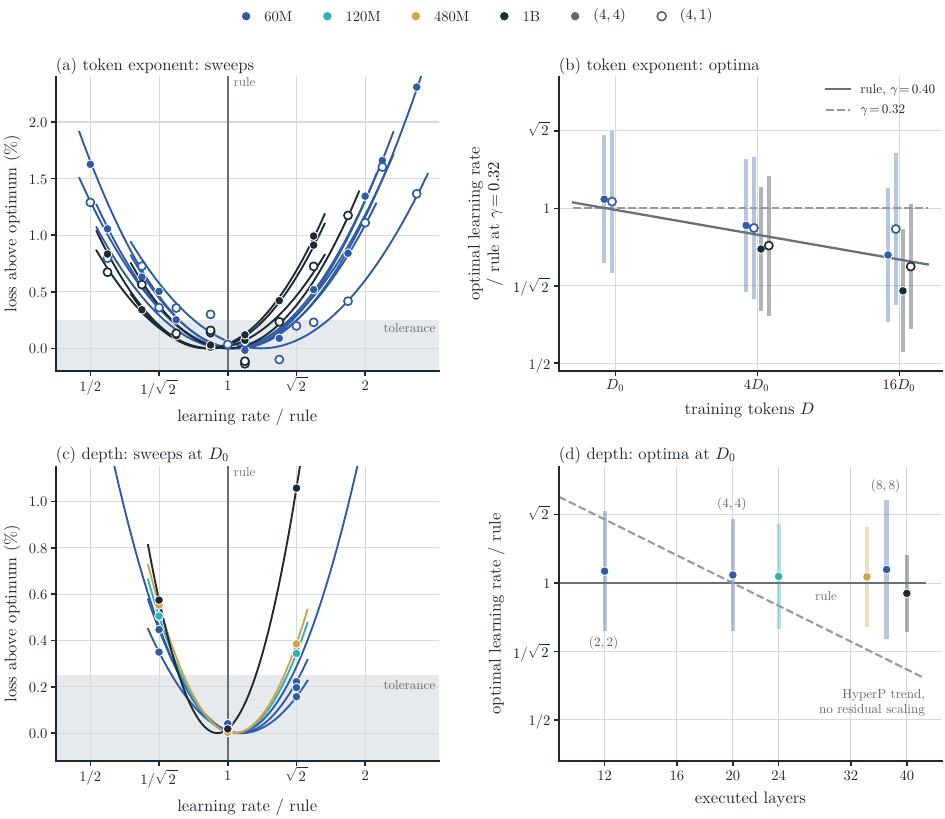}
  \caption{\textbf{Learning-rate rule (\eqref{eq:lr-transfer}, $\gamma=0.40$) against the measured optimum.} Colors give the model
  size; filled markers: $(4,4)$, open markers: $(4,1)$. Solid gray lines: the rule; dashed lines: the alternative it is tested
  against; shading: the tolerance (0.25\%). (a) Loss above the fitted optimum, in percent of the optimum loss,
  against the learning rate relative to the rule, for the ten token-exponent sweeps, with one quadratic in $\log\eta$ per sweep.
  (b) Fitted optima relative to \eqref{eq:lr-transfer} at $\gamma=0.32$, with the learning rates within the tolerance
  (bars); dashed: $\gamma=0.32$. (c) As (a), at $D_0$, for four sizes with $(4,4)$ and for 60M with $(2,2)$ and $(8,8)$. (d)
  Optima of these sweeps against the number of executed layers; dashed: the trend HyperP measures without residual scaling (the
  optimum halves from 8 to 24 layers), drawn through the rule at 20 layers.}
  \label{fig:app-lr}
\end{figure}

\subsection{Learning-rate rule}
\label{app:lrbs-lr}
We take the transfer rule of the reference recipe \citep{schwethelm2026how} for the peak MuonH learning rate of the hidden
weight matrices,
\begin{equation}
  \eta \;=\; 0.014\,\Bigl(\frac{b}{128}\Bigr)^{0.558}\Bigl(\frac{D}{D_{\mathrm{ref}}}\Bigr)^{-\gamma},
  \qquad D_{\mathrm{ref}}=9.83\times10^{8},
  \label{eq:lr-transfer}
\end{equation}
where $b$ is the global batch in sequences of 2{,}048 tokens; the AdamW learning rates scale with width, and those of the
embedding and output also with batch. The reference recipe uses the token exponent $\gamma=0.32$ of HyperP
\citep{ren2026rethinking}, a transfer rule for MuonH. On our models this exponent is too small: relative to \eqref{eq:lr-transfer} with $\gamma=0.32$,
the optimal learning rate keeps falling as $D$ grows (Figure~\ref{fig:app-lr}b). We thus re-estimate $\gamma$ and, as
our models vary in depth, test for a depth term.

\paragraph{Token exponent: $\gamma=0.40$.}
We swept the learning rate at the smallest and the largest model size, 60M at $D_0$, $4D_0$ and $16D_0$ and 1B at $4D_0$ and
$16D_0$, each with $(4,4)$ and $(4,1)$, and fit a quadratic in $\log\eta$ to each sweep to locate its optimum
(Figure~\ref{fig:app-lr}a). The exponent that minimizes the rule's largest loss above the optimum is $\gamma=0.406$ (95\%
interval 0.400--0.412), and we use $\gamma=0.40$. The rule is then within 0.06\% of the optimum in every sweep, the largest
gap being 60M with $(4,1)$ at $16D_0$, whereas $\gamma=0.32$ ends up 0.45\% above the optimum at 1B and $16D_0$.

\paragraph{No depth correction.}
Depth-wise transfer rules lower the learning rate as networks get deeper
\citep{yang2023tensor,bordelon2023depthwise,dey2025dont,ren2026rethinking}. For updates normalized as in MuonH, keeping the
change of the network per step fixed requires $\eta\propto\ell^{-1/2}$ with residual branches scaled by $\ell^{-1/2}$, or
$\eta\propto\ell^{-1}$ without that scaling, where $\ell$ is the number of executed layers \citep[Theorem~3]{ren2026rethinking}.
Our recipe scales neither the residual branches nor the learning rate with depth, although our models get deeper both with
size (20 to 40 executed layers from 60M to 1B at $T=4$) and with the number of forward loops (12 to 36 at 60M). Yet the
optimum does not follow the depth (Figure~\ref{fig:app-lr}c--d): at $D_0$, the rule's learning rate is the best of $1/\sqrt2$,
1 and $\sqrt2$ times itself in all six sweeps, and the fitted optima stay within about 7\% of the rule, where the trend HyperP
measures without residual scaling would lower them by more than half. The optimal learning rate follows the training tokens,
not the depth, so the rule needs no depth term.

\subsection{Batch-size rule}
\label{app:lrbs-bs}
We set the global batch to
\begin{equation}
  b(D) \;=\; 64\sqrt{D/D_0}\ \text{sequences},
  \label{eq:bs-rule}
\end{equation}
rounded to a ladder whose neighboring steps differ by about $\sqrt2$ ($\dots$, 32, 48, 64, 88, 128, 176, 256, 352). The
loss-optimal batch grows about as $\sqrt D$, as in prior work \citep{li2025predictable,bergsma2025power,filatov2025optimal},
and in eleven batch sweeps with $(4,4)$ (60M at five budgets, 240M and 1B at three) the rule's batch is acceptable in every
one (Figure~\ref{fig:app-bs}a).

\paragraph{Prefactor.}
The loss is flat over about a factor of two in batch, so the prefactor is a choice between neighboring ladder steps. One step
lower, $45\sqrt{D/D_0}$, suits 1B with full \ac{BPTT} as well as 64 suits 60M, but truncated \ac{BPTT} prefers
larger batches: at 1B and $16D_0$, the best batch is 176 for $(4,4)$ and 352 for $(4,1)$. Over all 33 sweeps with at least three
batch sizes, which cover all ten core settings, only the rule is acceptable everywhere: it stays within 0.17\% of the best
batch, while one step lower reaches 0.30\% and one step higher 0.42\% (Figure~\ref{fig:app-bs}b). The rule therefore changes
comparisons between loop settings by less than 0.004 nats.

\begin{figure}[t]
  \centering
  \includegraphics[width=\textwidth]{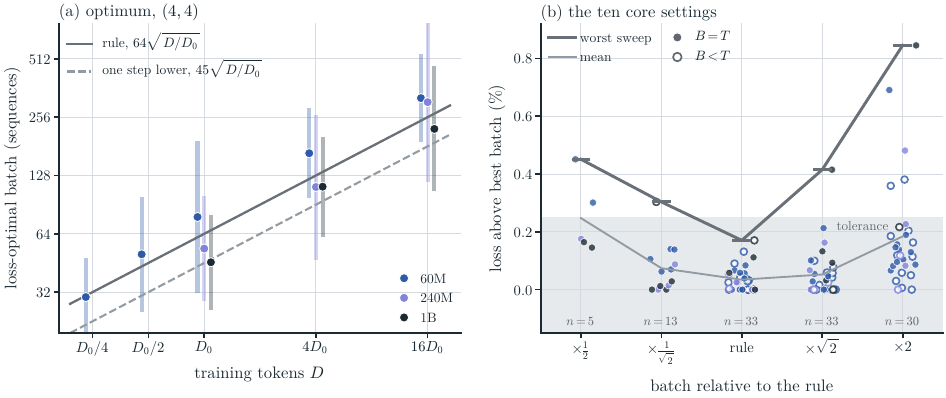}
  \caption{\textbf{Batch rule \eqref{eq:bs-rule} against the measured optimum.} Colors give the model size. (a)
  Loss-optimal batch for $(4,4)$, the vertex of a quadratic in log batch fitted to each sweep, with the batches within the
  tolerance (bars). Lines: the rule (solid) and one ladder step lower (dashed). (b) Loss above the best batch, in percent of
  the best loss, at up to two ladder steps from the rule, over the 33 sweeps with at least three batch sizes (filled: $B=T$;
  open: $B<T$; $n$: sweeps run at that batch). Thick line: worst sweep; thin line: mean; shading: the tolerance.}
  \label{fig:app-bs}
\end{figure}

\section{The law across the five-size grid}
\label{app:grid}
\begin{figure}[!h]
  \centering
  \includegraphics[width=\textwidth]{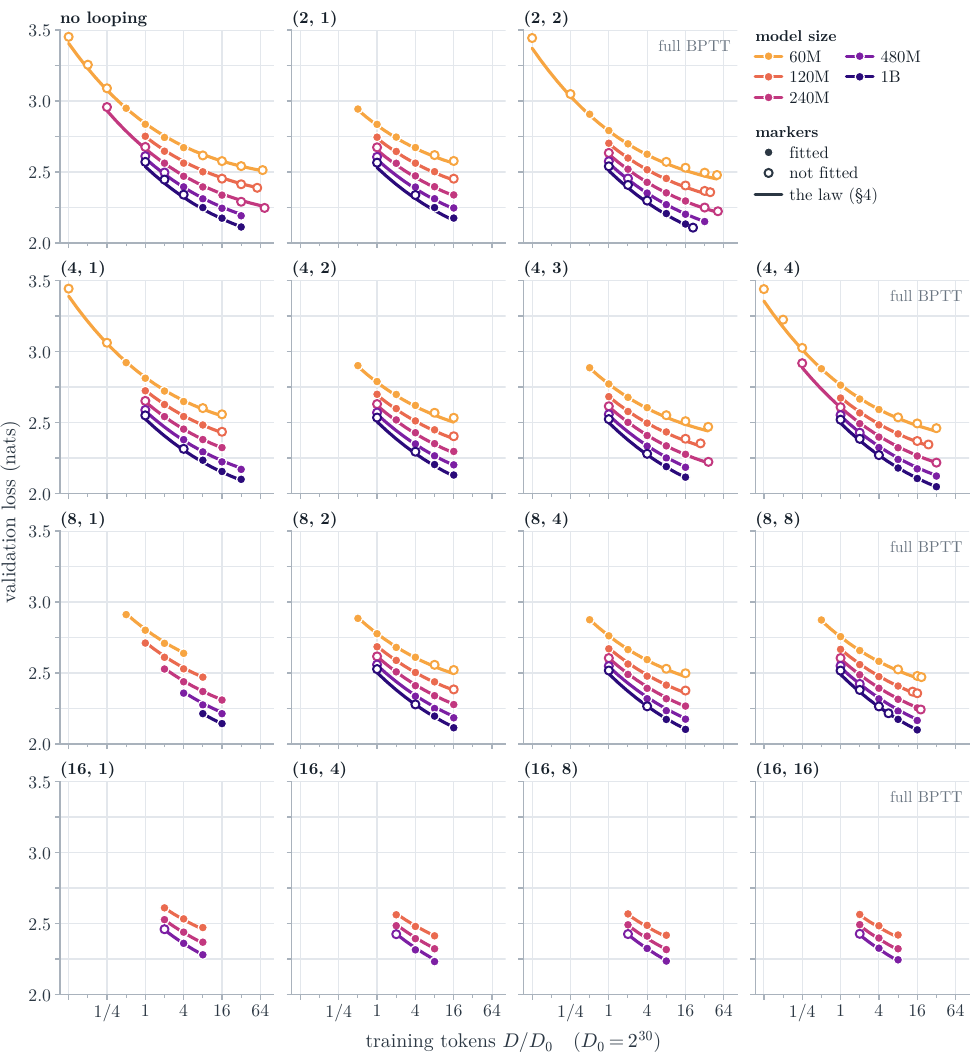}
  \caption{\textbf{The law across the scaling suite.} Validation loss of every configuration in the five-size grid against
  training tokens ($D_0=2^{30}$), one panel per loop setting $(T,B)$ with a row per forward-loop count, colored by model size.
  Filled dots: configurations the law is fitted on; open dots: the other configurations; lines: the law (\Cref{sec:law}) over
  each size's measured budgets.}
  \label{fig:app_allruns}
\end{figure}

Figure~\ref{fig:app_allruns} shows the validation loss of the 343 configurations in the suite's five-size grid, one panel per
loop setting. Filled markers denote the 226 configurations used to fit the law of \Cref{sec:law}; open markers denote the rest.
The suite's four fitted configurations from the compute-matched pairs (120M, 79M, 240M and 162M) are listed
in Table~\ref{tab:compute-optimal-pairs} and are not drawn here.
The figure contains 290 configurations in the ten core settings, 17 with $(8,1)$ and 36 with $T=16$.
Each configuration shows the mean loss of its accepted runs (385 in all; \Cref{app:setup}); the law is drawn over each size's
measured budgets. The configurations span 1.1 to 1{,}291 tokens per parameter, while the fitted ones span 9 to 74.
Per loop setting, the law's RMSE on the fitted configurations is 0.0031 to 0.0093 nats; on the rest it is 0.0200, as most
lie below 5 or above 125 tokens per parameter (\Cref{sec:conclusion}). Figure~\ref{fig:scaling} shows the four full-\ac{BPTT} settings at
the 26 size--budget pairs where all four were trained: one set of coefficients tracks all four, with RMSE 0.0053 nats over
the 68 fitted configurations (0.0148 over the 36 others).

\section{Training curves}
\label{app:curves}

Figure~\ref{fig:app_curves} shows the training curves of the 385 runs behind the 343 configurations of \Cref{app:grid}, one
panel per loop setting; Figure~\ref{fig:curves} in \Cref{sec:setup} repeats its first row. Each curve is the training loss of every tenth step, the cross-entropy of that step's batch, against the
training compute spent so far, counted as in \Cref{app:compute-flops} with the optimizer update; it is not the validation loss
on held-out documents that we report (\Cref{app:setup}). Each token budget is a separate run whose cosine schedule ends at that
budget, so a run reaches its final loss only at its end. The highlighted line therefore joins run ends only (dots), those that end below
every run with less compute, and needs no fit. As in the training-curve envelopes of
\citet[Figure~A5]{hoffmann2022training}, larger models take over this frontier as compute grows; at the highest compute of
each panel it follows the largest model of that setting, 1B, or 480M for $T=16$.

\begin{figure}[!h]
  \centering
  \includegraphics[width=\textwidth]{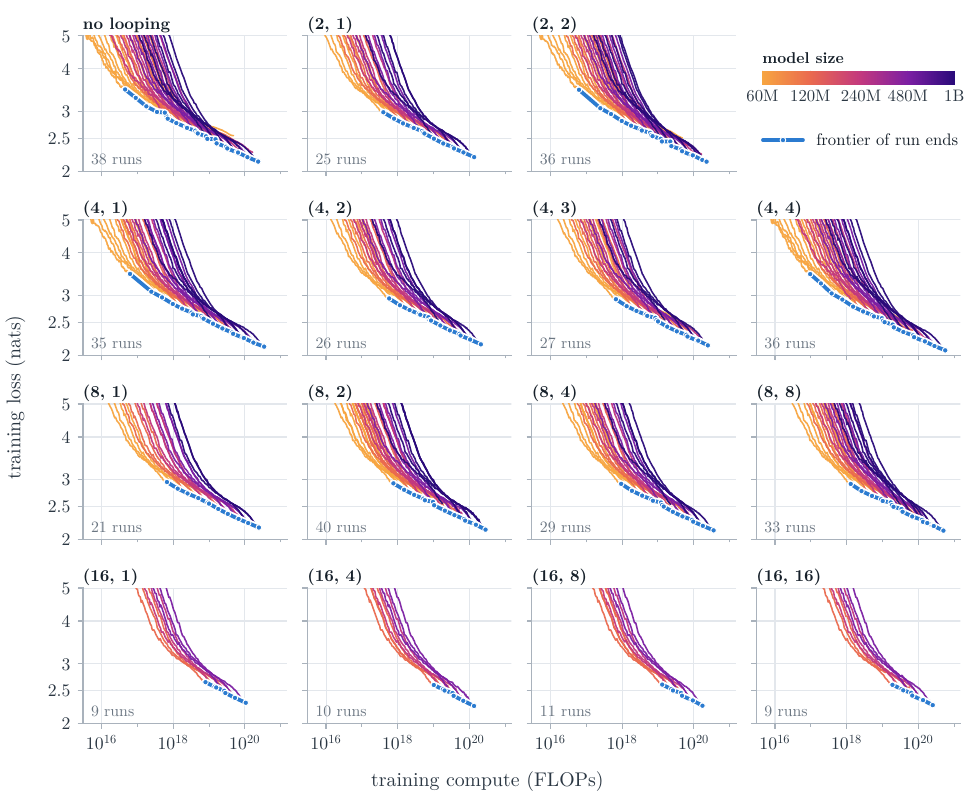}
  \caption{\textbf{Training curves of every run in the five-size grid.} Training loss against compute
  (\Cref{app:compute-flops}), one panel per loop setting, with a row per forward-loop count and backward loops increasing to
  the right, colored by model size; every tenth step, averaged over at most $\pm250$ steps. Highlighted line and dots:
  the frontier of run ends, the runs that end below every run with less compute.}
  \label{fig:app_curves}
\end{figure}

\end{document}